\def\eqref#1{equation~\ref{#1}}
\def\1{\bm{1}}
\DeclareMathAlphabet{\mathsfit}{\encodingdefault}{\sfdefault}{m}{sl}
\SetMathAlphabet{\mathsfit}{bold}{\encodingdefault}{\sfdefault}{bx}{n}
\newcommand{\R}{\mathbb{R}}
\DeclareMathOperator*{\argmax}{arg\,max}
\newtheorem{theorem}{Theorem}
\newtheorem{lemma}[theorem]{Lemma}
\newtheorem{definition}{Definition}
\newtheorem{assumption}{Assumption}
\newcommand*{\mybot}{\mathrel{\scalebox{0.5}{$\bot$}}}
\title{Behavior Prior Representation learning for Offline Reinforcement Learning}
\author{Hongyu Zang\textsuperscript{\rm 1}, Xin Li\textsuperscript{\rm 1}\thanks{Correspondence to Xin Li. This work was partially supported by NSFC under Grant 62276024 and 92270125.}, Jie Yu\textsuperscript{\rm 1}, Chen Liu\textsuperscript{\rm 1}, Riahsat Islam\textsuperscript{\rm 2},\\ \textbf{R\'emi Tachet des Combes\thanks{Work done while at Microsoft Research Montreal.}\hspace{0.1em} , Romain Laroche$^\dagger$} \\
    \textsuperscript{\rm 1} Beijing Institute of Technology, China \qquad \textsuperscript{\rm 2} MILA, Canada
    \\
    \texttt{\{zanghyu,xinli,yujie,chenliu\}@bit.edu.cn}\\ \texttt{riashat.islam@mail.mcgill.ca} \quad \texttt{\{remi.tachet,romain.laroche\}@gmail.com} 
}
\begin{document}

\maketitle
\vspace{-2em}
\begin{abstract}
Offline reinforcement learning (RL) struggles in environments with rich and noisy inputs, where the agent only has access to a fixed dataset without environment interactions. Past works have proposed common workarounds based on the pre-training of state representations, followed by policy training. In this work, we introduce a simple, yet effective approach for learning state representations. Our method, Behavior Prior Representation (BPR), learns state representations with an easy-to-integrate objective based on behavior cloning of the dataset: we first learn a state representation by mimicking actions from the dataset, and then train a policy on top of the fixed representation, using any off-the-shelf Offline RL algorithm. Theoretically, we prove that BPR carries out performance guarantees when integrated into algorithms that have either policy improvement guarantees (conservative algorithms) or produce lower bounds of the policy values (pessimistic algorithms).
Empirically, we show that BPR combined with existing state-of-the-art Offline RL algorithms leads to significant improvements across several offline control benchmarks. The code is available at \url{https://github.com/bit1029public/offline_bpr}.
\end{abstract}

\addtocontents{toc}{\protect\setcounter{tocdepth}{0}}
\vspace{-3mm}
\section{Introduction}
\vspace{-2mm}

Offline Reinforcement Learning (Offline RL) is one of the most promising data-driven ways of optimizing sequential decision-making. Offline RL differs from the typical settings of Deep Reinforcement Learning (DRL) in that the agent is trained on a fixed dataset that was previously collected by some arbitrary process, and does not interact with the environment during learning~\citep{DBLP:books/sp/12/LangeGR12,DBLP:journals/corr/abs-2005-01643}. Consequently, it benefits the scenarios where online exploration is challenging and/or unsafe, especially for application domains such as healthcare~\citep{DBLP:conf/kdd/WangZHZ18,gottesman2019guidelines,Satija2021} and autonomous driving~\citep{DBLP:journals/corr/BojarskiTDFFGJM16, DBLP:journals/access/YurtseverLCT20}. A common baseline of Offline RL is \textit{Behavior Cloning} (BC)~\citep{DBLP:journals/neco/Pomerleau91}. BC performs maximum-likelihood training on a collected set of demonstrations, essentially mimicking the behavior policy to produce predictions (actions) conditioned on observations. While BC can only achieve proficient policies when dealing with expert demonstrations, Offline RL goes beyond the goal of simply imitating and aims to train a policy that improves over the behavior one. 
Despite promising results, Offline RL algorithms still suffer from two main issues: i) difficulty dealing with limited high-dimensional data, especially visual observations with continuous action space~\citep{DBLP:journals/corr/abs-2206-04779}; ii) implicit under-parameterization of value networks exacerbated by highly re-used data, that is, an expressive value network implicitly behaves as an under-parameterized one when trained using bootstrapping~\citep{DBLP:conf/iclr/KumarAGL21, DBLP:journals/corr/abs-2112-04716}.

In this paper, we focus on state representation learning for Offline RL to mitigate the above issues: projecting the high-dimensional observations to a low-dimensional space can lead to a better performance given limited data in the Offline RL scenario. Moreover, disentangling representation learning from policy training (or value function learning), referred to as pre-training the state representations, can potentially mitigate the ``implicit under-parameterization'' phenomenon associated with the emergence of low-rank features in the value network~\citep{DBLP:journals/corr/abs-2203-15955}.
In contrast to previous work that pre-train state representations by specifying the required properties, \textit{e.g.}, maximizing the diversity of states encountered by the agent~\citep{DBLP:conf/nips/LiuA21, DBLP:conf/iclr/EysenbachGIL19}, exploring the attentive knowledge on sub-trajectory~\citep{DBLP:conf/icml/YangN21}, or capturing temporal information about the environment~\citep{DBLP:conf/nips/SchwarzerRNACHB21}, we consider using the behavior policy to learn generic state representations instead of specifying specific properties.

Many existing Offline RL methods regularize the policy to be close to the behavior policy~\citep{DBLP:conf/icml/FujimotoMP19, Laroche2019,DBLP:conf/nips/KumarFSTL19} or constrain the learned value function of OOD actions not to be overestimated~\citep{DBLP:conf/nips/KumarZTL20, DBLP:conf/icml/KostrikovFTN21}. Beyond these use, the behavior policy is often ignored, as it does not directly provide information on the environment. However, the choice of behavior has a huge impact on the Offline RL task. As shown by recent theoretical work~\citep{Xiao2022,Foster2022}, under an agnostic baseline, the Offline RL task is intractable (near optimality is exponential in the state space size), but it becomes tractable with a well-designed behavior (\textit{e.g.} the optimal policy or a policy trained online). This impact indicates that the information collected from the behavior policy might deserve more attention.

\begin{wrapfigure}{rt}{0.5\textwidth}
\centering
    \includegraphics[width=0.5\textwidth]{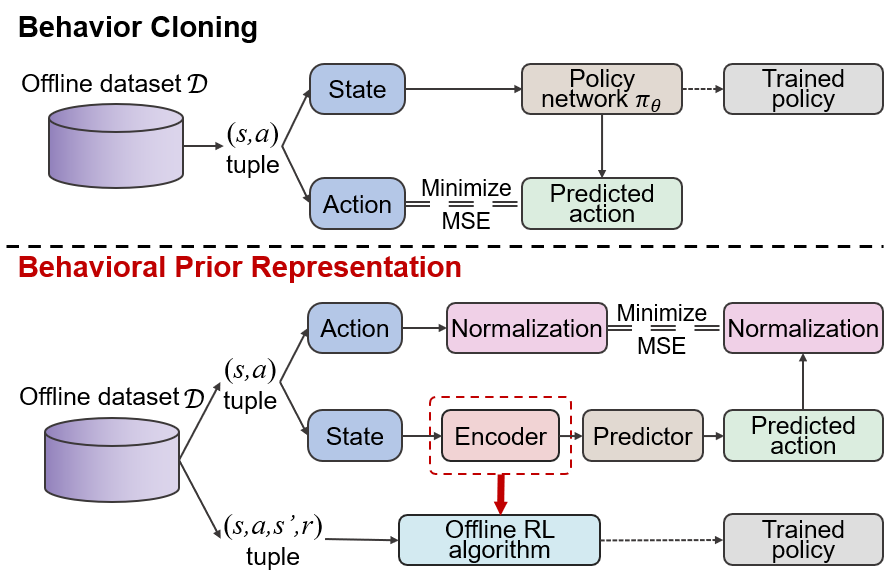}
    \caption{Illustration of Behavior Prior Representations and comparison with Behavior Cloning. 
    }
    \label{fig:architecture}
\end{wrapfigure}
To this end, we propose Behavior Prior Representation (BPR), a state representation learning method tailored to Offline RL settings (Figure~\ref{fig:architecture}). BPR learns state representations implicitly by enforcing them to be predictive of the action performed by the behavior policy, normalized to be on the unit sphere. Then, the learned encoder is frozen and utilized to train a downstream policy with any Offline RL algorithms. Intuitively, to be predictive of the normalized actions, BPR encourages the encoder to ignore the task-irrelevant information while maintaining the task-specific knowledge relative to the behavior, which we posit is efficient for learning a state representation. Theoretically, we prove that BPR carries out performance guarantees when combined with conservative or pessimistic Offline RL algorithms. While an uninformative behavioral policy may lead to bad representations and therefore degraded performance, we note that such a scenario may be predicted from the empirical returns of the dataset. Furthermore, since the learning procedure of BPR does not involve value functions or bootstrapping methods like Temporal-Difference, it can naturally mitigate the "implicit under-parameterization" phenomenon. We prove this empirically by utilizing effective dimensions measurement to evaluate feature compactness in the value network's penultimate layer. The key contributions of our work are summarized below: 

\begin{itemize}[leftmargin=0.6cm]
    \item We propose a simple, yet effective method for state representation learning in Offline RL, relying on the behavior cloning of actions; and find that this approach is effective across several offline benchmarks, including raw state and pixel-based ones. Our approach can be combined to any existing Offline RL pipeline with minimal changes.  
    \item Behavior prior representation (BPR) is theoretically grounded: we show, under usual assumptions, that policy improvement guarantees from offline RL algorithms are retained through the BPR, at the only expense of an additive behavior cloning error term.
    \item We provide extensive empirical studies, comparing BPR to several state representation objectives for Offline RL, and show that it outperforms the baselines across a wide range of tasks.
\end{itemize}

\section{Related Work}

\paragraph{Offline RL with behavior regularization.} Although to the best of our knowledge, we are the first to leverage behavior cloning (BC) to learn a state representation in Offline RL, we remark that combining Offline RL with behavior regularization has been considered previously by many works.  A common way of combining BC with RL is to utilize it as a reference for policy optimization with baseline methods, such as natural policy gradient~\citep{DBLP:conf/rss/RajeswaranKGVST18}, DDPG~\citep{DBLP:conf/icra/NairMAZA18, DBLP:conf/atal/GoecksGLVW20}, BCQ~\citep{DBLP:conf/icml/FujimotoMP19}, SPIBB~\citep{DBLP:conf/icml/LarocheTC19,Nadjahi2019,Simao2020,Satija2021,brandfonbrener2022deep}, CQL~\citep{DBLP:conf/nips/KumarZTL20}, and TD3~\citep{DBLP:conf/nips/FujimotoG21}. Other previous works include learning adaptive behavior policies that are biased towards higher-rewards trajectories~\citep{DBLP:journals/corr/abs-2106-09119}, and pretraining behavior policies to guide downstream policies training~\citep{DBLP:conf/acml/ZhangKP21}. 
However, all these approaches neglect the potential of using the behavior policy to guide representation learning. In contrast, we investigate state representation learning via a BC-style approach and show that the downstream policy can be greatly boosted in that regime.

\paragraph{Representation learning in Offline RL.} 
Pretraining representation has been recently studied in Offline RL settings, where several studies presented its effectiveness~\citep{ DBLP:conf/icml/AroraDKLS20,DBLP:conf/nips/SchwarzerRNACHB21,DBLP:conf/nips/NachumY21}. Some typical auxiliary tasks for pretraining state representations include capturing the dynamical~\citep{ DBLP:journals/corr/abs-2105-12272} and temporal~\citep{DBLP:conf/nips/SchwarzerRNACHB21} information of the environment, exploring the attentive knowledge on sub-trajectory~\citep{DBLP:conf/icml/YangN21}, improving policy performance by applying data augmentations techniques to the pixel-based inputs~\citep{chen2021an, DBLP:journals/corr/abs-2206-04779}...
While driven by various motivations, most of these methods can be thought of as including different inductive biases via the specific design of the target properties of the representation. In this paper, the inductive bias we enforce is that the representation should allow matching the behavior action normalized to the unit hypersphere, we demonstrate its effectiveness on both raw-state and visual-observation inputs tasks. Additional related works are discussed in appendix \ref{app:related_work}.

\section{Preliminaries}
\vspace{-2mm}

\paragraph{Offline RL} We consider the standard Markov decision process (MDP) framework, in which the environment is given by a tuple $\mathcal{M}=(\mathcal{S},\mathcal{A}, T, \rho, r, \gamma)$, with state space $\mathcal{S}$, action space $\mathcal{A}$, transition function $T$ that decides the next state $s'\sim T(\cdot|s,a)$, initial state distribution $\rho$, reward function $r(s,a)$ bounded by $R_\text{max}$, and a discount factor $\gamma\in [0,1)$. The agent in state $s\in \mathcal{S}$ selects an action $a\in \mathcal{A}$ according to its policy, mapping states to a probability distribution over actions: $a\sim\pi(\cdot|s)$. We make use of the state value function $V^{\pi}(s)=\mathbb{E}_{\mathcal{M}, \pi}\left[\sum_{t=0}^{\infty} \gamma^{t} r\left(s_{t}, a_{t}\right) \mid s_{0}=s\right]$  to describe the long term discounted reward of policy $\pi$ starting at state $s$. In Offline RL, we are given a fixed dataset of environment interactions that include $N$ transition samples, i.e., $\mathcal{D}=\{s_i,a_i,s'_i,r_i\}_{i=1}^N$. We assume that the dataset $\mathcal{D}$ is generated i.i.d. from a distribution $\mu(s, a)$ that specifies the effective behavior policy $\pi_\beta(a|s) = \mu(s, a)/\sum_a \mu(s, a)$, and denote by $d^{\pi_\beta}(s)$ its discounted state occupancy density. Following ~\citet{DBLP:conf/nips/RashidinejadZMJ21}, the goal of Offline RL is to minimize the suboptimality of $\pi$ with respect to the optimal policy $\pi^{\star}$ given a dataset $\mathcal{D}$:
\begin{equation}
    \label{eq:goal}
    \operatorname{SubOpt}(\pi)=\mathbb{E}_{\mathcal{D} \sim \mu}\left[\mathcal{J}\left(\pi^{\star}\right)-\mathcal{J}(\pi)\right]=\mathbb{E}_{\mathcal{D}\sim\mu}\left[\mathbb{E}_{\mathbf{s}_0 \sim \rho}\left[V^{\star}\left(\mathbf{s}_0\right)-V^{\pi}\left(\mathbf{s}_0\right)\right]\right],
\end{equation}
where $\mathcal{J}(\pi)=\mathbb{E}_{\mathbf{s}_0 \sim \rho}\left[V^{\pi}\left(\mathbf{s}_0\right)\right]$ is the performance of the policy $\pi$.

\paragraph{State Representation Learning} In this paper, our objective is to find a state representation function $\phi:\mathcal{S}\rightarrow \mathcal{Z}$ that maps each state $s\in\mathcal{S}$ to its representation $z=\phi(s)\in\mathcal{Z}$. The desired representations should provide necessary and useful information to summarize the task-relevant knowledge and facilitate policy learning. To investigate the capacity of state representation learning for mitigating the ``implicit under-parameterization'' phenomenon, following ~\citet{DBLP:conf/iclr/LyleRD22} and ~\citet{DBLP:conf/iclr/KumarAGL21}, we measure the compactness of the feature in the penultimate layer of the value network by utilizing the \textit{Effective Dimension}, where we refer to the
output of the penultimate layer of the state-action value network as the feature matrix $\Psi\in\mathbb{R}^{|\mathcal{S}||\mathcal{A}|\times d}$ \footnote{Notably, when we characterize the state by its corresponding state representation, the dimension of the feature matrix changes accordingly as $\mathbb{R}^{|\mathcal{Z}||\mathcal{A}|\times d}$. We defer a detailed discussion to Appendix~\ref{sec:effective_dim}.}:

\begin{definition}
\label{def:effective_dimension} \textbf{Effective Dimension} Let $\operatorname{ED}(M)$ denote the eigenvalues of a square matrix $M$, $\mathcal{D}$ be the offline dataset, and $\epsilon$ be a fixed hyperparameter. Then, the effective dimension of $\Psi$ is defined as
\begin{equation}
\label{eq:feat_effct}
\begin{aligned}
     \zeta(\Psi,\mathcal{D}, \epsilon)=\mathbb{E}_{\mathcal{D}}\Big[\Big|\Big\{\sigma\in ED\left( |\mathcal{S}|^{-1}|\mathcal{A}|^{-1} \Psi^{\top}\Psi\right)|\;\sigma>\epsilon\Big\}\Big|\Big],
\end{aligned}
\end{equation}
\end{definition}
It is the expected number of eigenvalues of $\Psi^{\top}\Psi$ that are larger than $|\mathcal{S}||\mathcal{A}| \epsilon$. By studying this quantity, we can explicitly observe the usefulness of the state representation objective in mitigating the ``implicit under-parameterization'' problem in value networks (see Section~\ref{sec:exp_effctive_dimension} for the results).

\section{Behavior Prior Representation}
\label{sec:bpr}

Given an offline dataset $\mathcal{D}$ consisting of $(s,a)$ pairs, our goal is to learn an encoder $\phi(s)$ that produces state representations $z = \phi(s)$ allowing efficient and successful downstream policy learning. During pre-training, the BPR network is comprised of two connected components: an \textit{encoder} $\phi_{\theta}$ and a \textit{predictor} $f_{\omega}$. The encoder maps the state to the representation space, while the predictor projects the representations onto the unit sphere in dimension $|\mathcal{A}|$. In state $s$, the BPR network outputs a \textit{representation} $z_{\theta} = \phi_{\theta}(s)$, and a \textit{prediction} $y = f_{\omega}(z_{\theta})$. We then $\ell_2$-normalize the prediction $y$ and the action $a$ to $\overline{y}= y/\|y\|_2$ and $\overline{a}= a/\|a\|_2$ and minimize the mean squared error to train the state representation (or equivalently maximize the cosine similarity between $y$ and $a$):  
\begin{equation}
    \mathcal{L}_{\theta,\omega}=\mathbb{E}_{(s,a)\sim\mathcal{D}}\left[\left\|\overline{y}-\overline{a}\right\|_{2}^{2}\right]=\mathbb{E}_{(s,a)\sim\mathcal{D}}\left[2-2\cdot \frac{\left\langle f_{\omega}(\phi_{\theta}(s)), a\right\rangle}{\left\|f_{\omega}(\phi_{\theta}(s))\right\|_2 \cdot\left\|a\right\|_2}\right], 
\end{equation}
where we set the action from the pair $(s,a)$ as the target. 
During the training process, stochastic optimization is performed to minimize $\mathcal{L}_{\theta,\omega}$ with respect to $\theta$ and $\omega$. Note that although the encoder and predictor are updated together through this optimization procedure, \textbf{only the encoder $\phi_{\theta}$ is used in the downstream task}. At the end of the training, we keep the encoder $\phi_{\theta}$ fixed and build the downstream Offline RL agents on top of the state representation that BPR learnt.

\paragraph{Implementation details} BPR method does not rely on any specific architecture as its encoder network. Depending on the nature of given inputs, it can either be a convolutional neural network (CNN) for visual observation inputs, or a multi-layer perceptron (MLP) for physically meaningful state inputs. The representation $z_{\theta}$ that corresponds to the output of the encoder is then projected to the action space. In this paper, the projection is done by an MLP consisting of two linear layers followed by rectified linear units (ReLU)~\citep{DBLP:conf/icml/NairH10}, and a final linear layer followed by a tanh activation layer.

\vspace{-2mm}
\section{theoretical analysis}
\label{sec:theory}
\vspace{-2mm}
For simplicity, we consider the optimization problem of BPR without the normalization term as:
\begin{equation}
\label{eq:BPR_opt}
    \min \frac{1}{n}\sum_{i=1}^{n} \| f(\phi(s_i))- a_i\|^2_2: \phi\in\Phi, f\in\mathcal{F},
\end{equation}
where $(s_i,a_i)$ is an i.i.d. sample from the offline dataset (of size $n$). BPR consists in learning a representation $\phi$ impacting the policy search. In other words, with BPR, the function class for the policy consists in $\Pi_\textsc{bpr}\doteq \{\pi,\text{s.t. }\exists\,\omega \text{ with } \pi(\cdot|s)=f_{\omega}(\phi_\theta(s)) \;\forall s\}$, where $f_{\omega}$ is the neural model parameterized with $\omega$ characterizing the policy on top of embedding $\phi_\theta(\cdot)$.
Like any representation learning technique, the potential benefits are (i) enhancing the signal-to-noise ratio and (ii) reducing the size of the policy search. In this section, we develop an analysis showing that the potential harm of using BPR is upper bounded by an error $\epsilon_\beta$ that we control in Section \ref{sec:epsilon-beta}. To our knowledge, this is the first representation learning technique for Offline RL with such guarantees.

\textbf{Idealized assumptions : } Letting $\hat{x}$ denote an estimate of a quantity $x$ computed using $\mathcal{D}$, we start by considering the following idealized assumptions:
\begin{assumption}
\label{assum:a}
(1.1)~Access to the true behavior: $\pi_{\hat{\beta}} = \pi_{\beta}$. (1.2)~Access to the true performance of policies: $\hat{\mathcal{J}}(\pi) = \mathcal{J}(\pi)$ for all policies $\pi$. (1.3) The embedding $\phi$ allows to represent the behavior policy estimate: $\pi_{\hat{\beta}}(a|\phi(s)) = \pi_{\hat{\beta}}(a|s) \in \Pi_\textsc{bpr}$. (1.4) The Offline RL algorithm performs perfect optimization on top of $\phi$: $\mathcal{J}_\textsc{bpr} = \max_{\pi \in \Pi_\textsc{bpr}} \hat{\mathcal{J}}(\pi)$.
\end{assumption}

\begin{theorem}
    Under idealized Assumption~\ref{assum:a}, BPR returns a policy that improves over the behavior policy: $\mathcal{J}_\textsc{bpr}\geq\mathcal{J}(\pi_{\beta})$.
\end{theorem}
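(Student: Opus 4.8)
The plan is to exploit the single crucial fact that, under these idealizations, the behavior policy itself belongs to the restricted policy class $\Pi_\textsc{bpr}$ induced by the learned representation. Once the membership $\pi_\beta \in \Pi_\textsc{bpr}$ is in hand, the conclusion follows immediately: a maximum of $\hat{\mathcal{J}}$ over a feasible set containing $\pi_\beta$ can only be at least $\hat{\mathcal{J}}(\pi_\beta)$.

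Concretely, I would chain the four assumptions as follows. First, Assumption~(1.3) asserts that the estimated behavior policy is representable on top of the embedding, $\pi_{\hat{\beta}} \in \Pi_\textsc{bpr}$; combined with Assumption~(1.1), which gives $\pi_{\hat{\beta}} = \pi_{\beta}$, this yields $\pi_{\beta} \in \Pi_\textsc{bpr}$. Next, Assumption~(1.4) defines $\mathcal{J}_\textsc{bpr} = \max_{\pi \in \Pi_\textsc{bpr}} \hat{\mathcal{J}}(\pi)$, and since $\pi_{\beta}$ lies in the feasible set, the maximum is lower bounded by the value attained at $\pi_{\beta}$, so $\mathcal{J}_\textsc{bpr} \geq \hat{\mathcal{J}}(\pi_{\beta})$. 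Finally, Assumption~(1.2) equates estimated and true performance, $\hat{\mathcal{J}}(\pi_{\beta}) = \mathcal{J}(\pi_{\beta})$, which delivers the desired bound $\mathcal{J}_\textsc{bpr} \geq \mathcal{J}(\pi_{\beta})$.

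There is no genuine computational obstacle here — the argument is a short chain of (in)equalities. The conceptual weight rests entirely on Assumption~(1.3): the claim that the representation produced by behavior cloning is expressive enough to recover the behavior policy, which is exactly what prevents the restriction of the policy search to $\Pi_\textsc{bpr}$ from being harmful. The remaining idealizations (exact behavior estimation, exact performance evaluation, and perfect downstream optimization) serve only to remove the statistical and optimization error terms; I would expect the subsequent non-idealized analysis in Section~\ref{sec:epsilon-beta} to reintroduce precisely these as the controlled error $\epsilon_\beta$, turning the clean inequality into one that degrades gracefully with the behavior cloning error.
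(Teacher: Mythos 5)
Your proof is correct and follows essentially the same route as the paper's: both hinge on the membership $\pi_{\hat{\beta}}=\pi_\beta\in\Pi_\textsc{bpr}$ (Assumptions 1.1 and 1.3) and then chain the perfect-optimization and perfect-evaluation idealizations (1.4 and 1.2) into the same one-line inequality, differing only in the order in which the assumptions are invoked. No gap to report.
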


Its proof\footnote{For readability, we defer rigorous proofs of theorems to the Appendix~\ref{sec:proof}.} is immediate: under perfect estimation and optimization, the fact that $\beta\in\Pi_\textsc{bpr}$ guarantees policy improvement.  
The above assumptions are stringent, and we propose to relax them in two different ways: for conservative algorithms that derive safe policy improvement guarantees \citep{Petrik2016,DBLP:conf/icml/FujimotoMP19,Laroche2019,Simao2020}, and for pessimistic algorithms that derive a value function lower bound~\citep{DBLP:conf/nips/KumarZTL20,Buckman2021}. We note that the policy improvement is constrained by the set $\Pi_\textsc{bpr}$ over which the policy search is conducted. Appendix \ref{app:rand-behavioral} empirically shows that the method will fail to improve over the behavioral policy in the extreme case where the behavioral policy is uniform and therefore uninformative. The BPR efficiency therefore strongly relies on the assumption that the behavioral policy provides a beneficial inductive bias.

\subsection{Safe policy improvement}
\label{sec:policy-immprovement}
Conservative algorithms constrain the policy search to the set of policies $\Pi_\textsc{pi}$ for which the true policy improvement $\Delta_{\pi,\pi_{\hat{\beta}}}\doteq\mathcal{J}(\pi)-\mathcal{J}(\pi_{\hat{\beta}})$ can be safely estimated from $\hat{\Delta}_{\pi,\pi_{\hat{\beta}}}\doteq\hat{\mathcal{J}}(\pi)-\hat{\mathcal{J}}(\pi_{\hat{\beta}})$:
\begin{align}
    \forall \pi \in \Pi_\textsc{pi}, \quad \hat{\Delta}_{\pi,\pi_{\hat{\beta}}} - \Delta_{\pi,\pi_{\hat{\beta}}} \leq \epsilon_{\Delta}.
\end{align}
It is worth noting that the estimated behavior policy $\pi_{\hat{\beta}}$ necessarily belongs to $\Pi_\textsc{pi}$, since trivially $\hat{\Delta}_{\pi_{\hat{\beta}},\pi_{\hat{\beta}}} = \Delta_{\pi_{\hat{\beta}},\pi_{\hat{\beta}}}=0$. Combined with BPR, conservative algorithms optimize on the policy set $\Pi_\textsc{pi}\cap \Pi_\textsc{bpr}$, and we consider the following relaxed assumptions, corresponding respectively to Assumptions~\ref{assum:a}.1 and ~\ref{assum:a}.2:
\begin{assumption}
\label{assum:b}
(2.1)~Access to an accurate estimate $\pi_{\hat{\beta}}$ of the true behavior $\pi_{\beta}$: $\mathcal{J}(\pi_{\beta})-\mathcal{J}(\pi_{\hat{\beta}})\leq \epsilon_\beta$. (2.2)~Access to an accurate estimate $\hat{\Delta}_{\pi,\pi_{\hat{\beta}}}$ of the true policy improvement $\Delta_{\pi,\pi_{\hat{\beta}}}$ for all policies $\pi\in\Pi_\textsc{pi}\cap \Pi_\textsc{bpr}$: $\hat{\Delta}_{\pi,\pi_{\hat{\beta}}} - \Delta_{\pi,\pi_{\hat{\beta}}} \leq \epsilon_{\Delta}$.
\end{assumption}

\begin{theorem}
    Under Assumption~\ref{assum:b}, BPR returns a policy $\pi_\textsc{bpr}$ with the following performance bounds: $\mathcal{J}_\textsc{bpr}\geq \mathcal{J}(\pi_{\beta}) + \hat{\Delta}_{\pi_\textsc{bpr},\pi_{\hat{\beta}}} - \epsilon_\Delta - \epsilon_\beta$.
\end{theorem}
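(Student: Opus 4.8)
The plan is to chain the two relaxed conditions of Assumption~\ref{assum:b} through the definition of the true policy improvement $\Delta_{\pi,\pi_{\hat{\beta}}}$. The key preliminary observation is that the policy $\pi_\textsc{bpr}$ returned by the conservative-plus-BPR procedure lies in the feasible set $\Pi_\textsc{pi}\cap\Pi_\textsc{bpr}$ over which the optimization is carried out; this is precisely what will license applying condition~(2.2) to $\pi_\textsc{bpr}$ itself.

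First I would unfold the true performance of $\pi_\textsc{bpr}$ using the definition $\Delta_{\pi,\pi_{\hat\beta}} = \mathcal{J}(\pi)-\mathcal{J}(\pi_{\hat\beta})$, writing
\begin{equation}
    \mathcal{J}_\textsc{bpr} = \mathcal{J}(\pi_{\hat{\beta}}) + \Delta_{\pi_\textsc{bpr},\pi_{\hat{\beta}}}.
\end{equation}
Next I would lower-bound the unknown true improvement by its estimate via condition~(2.2) instantiated at $\pi=\pi_\textsc{bpr}$, that is $\Delta_{\pi_\textsc{bpr},\pi_{\hat{\beta}}} \geq \hat{\Delta}_{\pi_\textsc{bpr},\pi_{\hat{\beta}}} - \epsilon_\Delta$, yielding $\mathcal{J}_\textsc{bpr} \geq \mathcal{J}(\pi_{\hat{\beta}}) + \hat{\Delta}_{\pi_\textsc{bpr},\pi_{\hat{\beta}}} - \epsilon_\Delta$. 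Finally I would eliminate the behavior-estimate performance $\mathcal{J}(\pi_{\hat{\beta}})$ in favor of the true behavior performance using condition~(2.1) rewritten as $\mathcal{J}(\pi_{\hat{\beta}}) \geq \mathcal{J}(\pi_{\beta}) - \epsilon_\beta$. Combining the two inequalities gives the claimed bound $\mathcal{J}_\textsc{bpr} \geq \mathcal{J}(\pi_{\beta}) + \hat{\Delta}_{\pi_\textsc{bpr},\pi_{\hat{\beta}}} - \epsilon_\Delta - \epsilon_\beta$.

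Since the argument is just a short chain of inequalities, there is no computational obstacle; the only point requiring care — and the step I would state explicitly — is the membership $\pi_\textsc{bpr}\in\Pi_\textsc{pi}\cap\Pi_\textsc{bpr}$ that makes condition~(2.2) applicable to the \emph{returned} policy rather than merely to an arbitrary element of the class. This is guaranteed because the procedure optimizes exactly over this intersection, which is nonempty: $\pi_{\hat\beta}\in\Pi_\textsc{pi}$ since trivially $\hat{\Delta}_{\pi_{\hat\beta},\pi_{\hat\beta}}=\Delta_{\pi_{\hat\beta},\pi_{\hat\beta}}=0$, and $\pi_{\hat\beta}\in\Pi_\textsc{bpr}$ whenever the embedding can represent the behavior estimate.
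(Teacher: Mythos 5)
Your proof is correct and follows essentially the same route as the paper's: identify $\mathcal{J}_\textsc{bpr}$ with $\mathcal{J}(\pi_\textsc{bpr})$, apply condition~(2.2) at $\pi=\pi_\textsc{bpr}$ to lower-bound the true improvement by its estimate minus $\epsilon_\Delta$, then apply condition~(2.1) to replace $\mathcal{J}(\pi_{\hat{\beta}})$ by $\mathcal{J}(\pi_{\beta})-\epsilon_\beta$. Your explicit verification that $\pi_\textsc{bpr}\in\Pi_\textsc{pi}\cap\Pi_\textsc{bpr}$ (and that this set is nonempty since it contains $\pi_{\hat{\beta}}$) is a point the paper leaves implicit, and it is a worthwhile addition since condition~(2.2) is only quantified over that intersection.
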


Assumption~\ref{assum:b}.1 amounts to behavior cloning (BC) guarantees. It is generally assumed as being an easier task than policy optimization, as BC is supervised learning. Theorem \ref{theo:subopt} provides bounds on $\epsilon_\beta$. Assumption~\ref{assum:b}.2 reflects the policy improvement objective. For instance, the SPIBB principle guarantees it with high probability in finite MDPs~\citep{Laroche2019,Nadjahi2019,Simao2020}. It is important to notice that Assumption~\ref{assum:b}.2 forbids, in theory, the use of the BPR representation to estimate the value directly. But in practice, we show that the representation learned by BPR is still useful to estimate the value function. We provide some experimental results in Appendix~\ref{app:sec:counterexample_empirical} and a counterexample in Appendix \ref{sec:counterexample}. Additionally, it is worth noting that ensuring that $\hat{\Delta}_{\pi_\textsc{bpr},\pi_{\hat{\beta}}}$ is positive is necessary to obtain approximate policy improvement guarantees. To do so, it suffices to fall back to $\pi_{\hat{\beta}}$ when the optimization of $\pi_\textsc{bpr}$ does not lead to a positive expected improvement $\hat{\Delta}_{\pi_\textsc{bpr},\pi_{\hat{\beta}}}$.

\subsection{Value function lower bound}
In this section, we propose an alternative to Assumption~\ref{assum:b}.2 following the performance lower bound principle derived in pessimistic algorithms:
\begin{assumption}
\label{assum:c}
(3.1)~Access to a lower bound of the performance $\mathcal{J}_{\mybot}(\pi)\leq \mathcal{J}(\pi)$ for all policies, which is accurate for $\pi_{\hat{\beta}}$: $\mathcal{J}_{\mybot}(\pi_{\hat{\beta}})\geq \mathcal{J}(\pi_{\hat{\beta}})-\epsilon_{\mybot}$.
\end{assumption}
In order to provide performance improvement bounds, we make use of the lower bound value gap $\Delta^{\mybot}_{\pi,\pi_{\hat{\beta}}}\doteq \mathcal{J}_{\mybot}(\pi)- \mathcal{J}_{\mybot}(\pi_{\hat{\beta}})$.

\begin{theorem}
    Under Assumption~\ref{assum:b}.1 and~\ref{assum:c}.1, BPR returns a policy $\pi_\textsc{bpr}$ with the following performance bounds: $\mathcal{J}_\textsc{bpr}\geq \mathcal{J}(\pi_{\beta}) + \Delta^{\mybot}_{\pi_\textsc{bpr},\pi_{\hat{\beta}}} - \epsilon_{\mybot} - \epsilon_\beta$.
\end{theorem}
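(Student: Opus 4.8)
The plan is to prove the bound by a short chain of inequalities, directly paralleling the conservative case but substituting the lower-bound value gap $\Delta^{\mybot}_{\pi_\textsc{bpr},\pi_{\hat{\beta}}}$ for the estimated improvement $\hat{\Delta}$ and the pessimism slack $\epsilon_{\mybot}$ for $\epsilon_\Delta$. The starting observation is that $\mathcal{J}_\textsc{bpr}$ denotes the \emph{true} performance $\mathcal{J}(\pi_\textsc{bpr})$ of the policy the pessimistic algorithm returns, so optimality of $\pi_\textsc{bpr}$ is not actually needed for the stated inequality: it holds for any policy, with $\Delta^{\mybot}_{\pi_\textsc{bpr},\pi_{\hat{\beta}}}$ simply carried along as a free term.

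Concretely, I would chain four elementary moves. First, invoke the global-lower-bound half of Assumption~\ref{assum:c}.1, $\mathcal{J}_{\mybot}(\pi)\leq\mathcal{J}(\pi)$ applied at $\pi=\pi_\textsc{bpr}$, to pass from the true value down to its lower bound, $\mathcal{J}_\textsc{bpr}=\mathcal{J}(\pi_\textsc{bpr})\geq\mathcal{J}_{\mybot}(\pi_\textsc{bpr})$. Second, insert the definition $\Delta^{\mybot}_{\pi_\textsc{bpr},\pi_{\hat{\beta}}}=\mathcal{J}_{\mybot}(\pi_\textsc{bpr})-\mathcal{J}_{\mybot}(\pi_{\hat{\beta}})$ to rewrite the right-hand side as $\mathcal{J}_{\mybot}(\pi_{\hat{\beta}})+\Delta^{\mybot}_{\pi_\textsc{bpr},\pi_{\hat{\beta}}}$. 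Third, apply the accuracy half of Assumption~\ref{assum:c}.1 at $\pi_{\hat{\beta}}$, i.e.\ $\mathcal{J}_{\mybot}(\pi_{\hat{\beta}})\geq\mathcal{J}(\pi_{\hat{\beta}})-\epsilon_{\mybot}$. Fourth, apply the behavior-cloning guarantee of Assumption~\ref{assum:b}.1 in the form $\mathcal{J}(\pi_{\hat{\beta}})\geq\mathcal{J}(\pi_{\beta})-\epsilon_\beta$. Composing the four steps gives $\mathcal{J}_\textsc{bpr}\geq\mathcal{J}(\pi_{\beta})+\Delta^{\mybot}_{\pi_\textsc{bpr},\pi_{\hat{\beta}}}-\epsilon_{\mybot}-\epsilon_\beta$, which is exactly the claim.

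Since each step is a single application of one assumption or a definitional substitution, there is no computational obstacle. The only thing demanding care is the bookkeeping of the two distinct uses of Assumption~\ref{assum:c}.1 — once as the pointwise upper bound $\mathcal{J}_{\mybot}\leq\mathcal{J}$ (used at $\pi_\textsc{bpr}$, where no tightness is available) and once as the one-sided accuracy guarantee that is sharp only at $\pi_{\hat{\beta}}$ — together with checking that both error terms $\epsilon_{\mybot}$ and $\epsilon_\beta$ enter with a minus sign and do not cancel. To close, I would append a remark that, to turn this into a genuine improvement-over-$\pi_\beta$ statement, it suffices that the pessimistic optimizer returns $\pi_\textsc{bpr}\in\argmax_{\pi\in\Pi_\textsc{bpr}}\mathcal{J}_{\mybot}(\pi)$ with $\pi_{\hat{\beta}}\in\Pi_\textsc{bpr}$: this forces $\Delta^{\mybot}_{\pi_\textsc{bpr},\pi_{\hat{\beta}}}\geq 0$ and hence $\mathcal{J}_\textsc{bpr}\geq\mathcal{J}(\pi_{\beta})-\epsilon_{\mybot}-\epsilon_\beta$.
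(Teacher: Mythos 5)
Your proof is correct and follows exactly the same chain of inequalities as the paper's own proof: pass from $\mathcal{J}(\pi_\textsc{bpr})$ to $\mathcal{J}_{\mybot}(\pi_\textsc{bpr})$, rewrite via the definition of $\Delta^{\mybot}_{\pi_\textsc{bpr},\pi_{\hat{\beta}}}$, then apply the accuracy half of Assumption~\ref{assum:c}.1 at $\pi_{\hat{\beta}}$ and the behavior-cloning bound of Assumption~\ref{assum:b}.1. Your closing remarks (that optimality of $\pi_\textsc{bpr}$ is not needed for the inequality itself, and that $\Delta^{\mybot}\geq 0$ under pessimistic optimization over $\Pi_\textsc{bpr}$) are sensible additions but go beyond what the paper states.
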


Assumption~\ref{assum:c}.1 is satisfied by the Offline RL algorithm that is used in combination with BPR. For instance, CQL~\citep{DBLP:conf/nips/KumarZTL20,Yu2021} relies on the computation of a lower bound of the value function of the considered policies. More generally, pessimistic algorithms~\citep{Petrik2016,Yu2020,Kidambi2020,Jin2021,Buckman2021} have for principle to add an uncertainty-based penalty to the reward function, in order to control its risk to be overestimated. It is important to notice that, like for Assumption~\ref{assum:b}.2, this assumption would forbid (in theory) the use of the BPR representation to estimate the value.

Both approaches (safe policy improvement and value function lower bounds) suffer the same cost $\epsilon_\beta$, in comparison to their guarantees without BPR. The next subsection controls this quantity.

\subsection{Performance bound with BPR objective} 
\label{sec:epsilon-beta}
In this section, we are concerned with the statistical property of the error between the estimated behavior based on the representation and the true behavior. To this end, we have the upper bound of the $\epsilon_{\beta}$ as:

\begin{theorem}
\label{theo:subopt}
With probability at least $1-\delta$, for any  $\delta\in(0,1)$:
\begin{equation}
\label{eq:subopt_bound}
\begin{aligned}
    \epsilon_\beta
    \leq  CK\cdot  \frac{1}{n} \sum_{i=1}^{n}\left\lVert\pi_{\beta}(\cdot|s_i)-\pi_{\hat{\beta}}\left(\cdot \mid \phi(s_{i})\right)\right\rVert_{2} + 2 \sqrt{2} K \cdot \operatorname{Rad}(\Phi)+K\cdot\sqrt{\frac{2 \ln \frac{1}{\delta}}{n}}
\end{aligned}
\end{equation}
where $n$ is the size of the dataset, $R a d(\Phi)$ is the Rademacher complexity of $\phi$'s function class $\Phi$, $\pi_\beta$ is the behavior policy over the dataset, $C$ is a constant, and $K=\frac{R_{\text{max}}}{1-\gamma}$.
\end{theorem}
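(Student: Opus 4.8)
The plan is to control $\epsilon_\beta = \mathcal{J}(\pi_{\beta}) - \mathcal{J}(\pi_{\hat{\beta}})$ (the quantity bounded in Assumption~\ref{assum:b}.1) in two stages: first reduce the \emph{return} gap to an \emph{expected} discrepancy between the true and estimated behavior policies, and then convert that population expectation into the empirical behavior-cloning error on the dataset plus a uniform-convergence remainder. The first stage supplies the factor $K=R_{\max}/(1-\gamma)$ and the constant $C$; the second supplies the Rademacher and concentration terms.

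\textbf{Step 1 (return gap to policy distance).} I would invoke the performance-difference lemma, $\mathcal{J}(\pi_{\beta})-\mathcal{J}(\pi_{\hat{\beta}}) = \frac{1}{1-\gamma}\,\mathbb{E}_{s\sim d^{\pi_\beta}}\big[\langle \pi_\beta(\cdot|s)-\pi_{\hat\beta}(\cdot|\phi(s)),\,Q^{\pi_{\hat\beta}}(s,\cdot)\rangle\big]$, and use that the difference of two probability vectors is orthogonal to the constant vector, so that $Q^{\pi_{\hat\beta}}$ may be replaced by the advantage (which is zero-mean under $\pi_{\hat\beta}$). Applying Cauchy--Schwarz and the uniform bound $\lVert Q^{\pi_{\hat\beta}}\rVert_\infty \le R_{\max}/(1-\gamma)=K$ yields $\epsilon_\beta \le CK\,\mathbb{E}_{s\sim d^{\pi_\beta}}\big[\lVert\pi_\beta(\cdot|s)-\pi_{\hat\beta}(\cdot|\phi(s))\rVert_2\big]$, where the constant $C$ absorbs the remaining numerical factors (the extra $\frac{1}{1-\gamma}$ from the occupancy, the action-set size from the $\ell_1$--$\ell_2$ conversion, and the range of the advantage).

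\textbf{Step 2 (population to empirical via Rademacher complexity).} The expectation above is over states drawn from the behavior occupancy, whereas the target bound features the empirical average over the i.i.d.\ sample $\{s_i\}$. I would therefore apply the standard symmetrization/Rademacher uniform-convergence inequality to the function class $\mathcal{G}=\{\,s\mapsto \lVert\pi_\beta(\cdot|s)-\pi_{\hat\beta}(\cdot|\phi(s))\rVert_2 : \phi\in\Phi\,\}$, so that the bound holds uniformly over $\phi$ and hence for the learned representation. Each member of $\mathcal{G}$ is the $\ell_2$-distance between two probability vectors and is thus bounded by $\sqrt{2}$, which furnishes the McDiarmid tail $K\sqrt{2\ln(1/\delta)/n}$, while Talagrand's contraction lemma lets me pass from $\mathrm{Rad}(\mathcal{G})$ to $\mathrm{Rad}(\Phi)$, producing the $2\sqrt{2}\,K\,\mathrm{Rad}(\Phi)$ term. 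Substituting this bound on the population expectation into Step~1 gives the claimed inequality.

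\textbf{Main obstacle.} The delicate point is Step~2: bounding the Rademacher complexity of the \emph{composite} loss class $\mathcal{G}$---which wraps $\phi$ inside the predictor/normalization head and then an $\ell_2$ norm---by $\mathrm{Rad}(\Phi)$ alone. This requires a (vector-valued) contraction argument together with Lipschitz control of the head, and is exactly where the constants $2\sqrt{2}$ and the factor absorbed into $C$ are pinned down. A secondary subtlety is the distributional bookkeeping: the performance-difference lemma naturally produces an expectation under the discounted occupancy $d^{\pi_\beta}$, which must be reconciled with the i.i.d.\ sampling distribution $\mu$ of the dataset (via a coverage/concentrability argument, or by taking both quantities under the same measure) so that the empirical average is a legitimate proxy for the population divergence.
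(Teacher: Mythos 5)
Your proposal is correct and follows essentially the same two-stage argument as the paper: Lemma~\ref{subopt_upper_tv} in the paper likewise reduces $\epsilon_\beta$ to $K$ times an expected policy distance (via direct recursive expansion of the value functions rather than your performance-difference/advantage phrasing), and the paper's main proof then performs exactly your uniform-convergence step --- a supremum over $\Phi$ whose expectation is bounded by $2\sqrt{2}\operatorname{Rad}(\Phi)$, plus a McDiarmid tail, with the $\ell_1$-to-$\ell_2$ conversion absorbed into the constant $C$. The only cosmetic difference is ordering: the paper concentrates the $\ell_1$ distances and converts to $\ell_2$ only on the empirical term at the end, whereas you work in $\ell_2$ throughout via Cauchy--Schwarz; both routes are valid and yield the stated bound up to constants.
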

Note that the first term in Equation~\ref{eq:subopt_bound} is the exact optimization problem of BPR in Equation~\ref{eq:BPR_opt} multiplied by constant $C\cdot K$, where the action $a_i$ is sampled from the offline dataset pairs $(s_i,a_i)$ whose behavior policy is $\pi_{\beta}$, and the predictor $f$ plays the similar role as $\pi_{\hat{\beta}}$. The second and the third term are both irrelevant to the specific representation, or the estimated behavior. This indicates that the potential harm of utilizing BPR can be reduced as the representation training procedure goes on.

\section{Experiments}
\label{sec:exp}
\vspace{-2mm}
BPR can be easily combined with \textit{any existing} Offline RL pipeline. 
Our experimental results can be broken down into the following sections\footnote{Due to the page limit, we provide some additional experiment results in Appendix~\ref{sec:add_results}.}:
\begin{itemize}
    \item Does BPR outperform baseline Offline RL algorithms on standard raw-state inputs benchmarks?other representation objectives?
    \item Is BPR effective in learning policies on high-dimensional pixel-based inputs? Can BPR improve the robustness of the representation when the input contains complex distractors?
    \item Can BPR successfully improve the effective dimension of the feature in the value network? 
\end{itemize}

\subsection{Effectiveness of Behavior Representations in D4RL Offline Benchmark}

\begin{figure*}[htbp]
\centering
\includegraphics[width=0.8\textwidth]{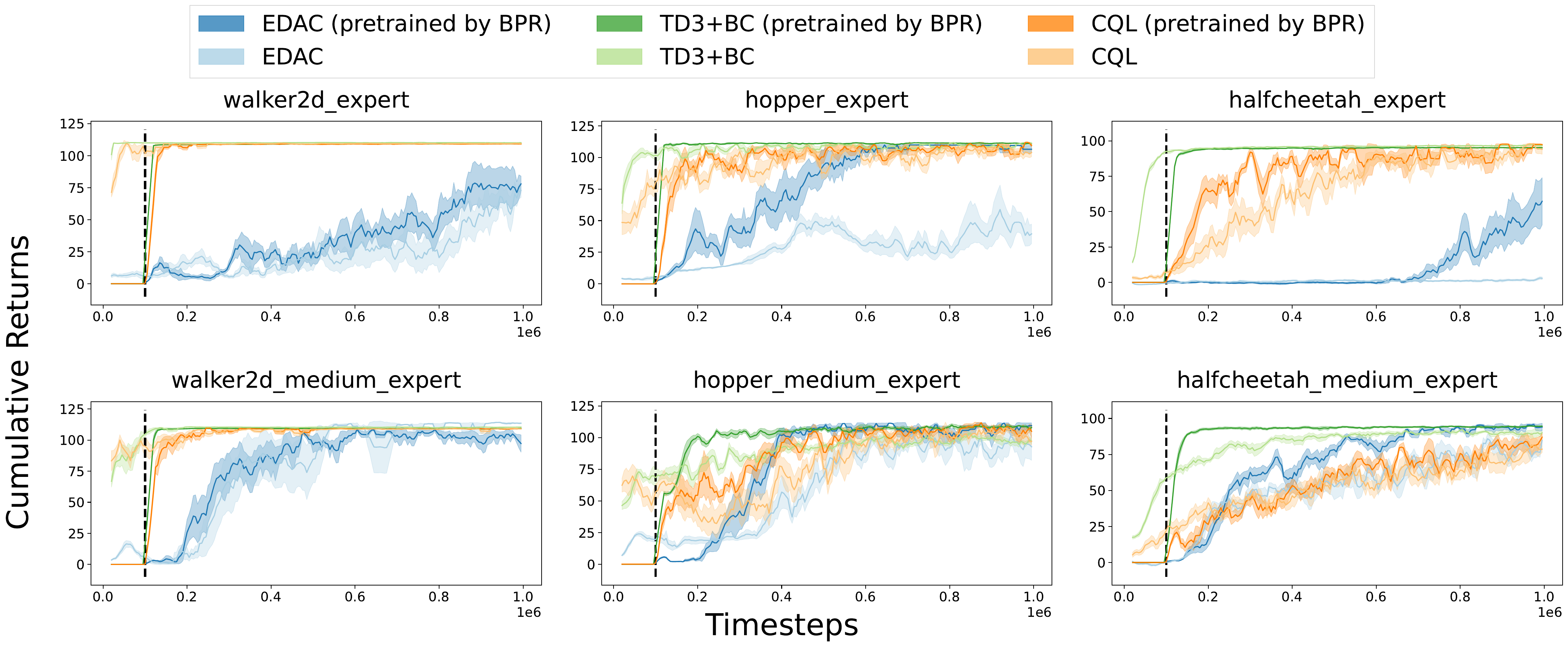}
\caption{Performance comparison on D4RL dataset. The x-axis indicates the number of environment steps. The vertical axis reports the normalized cumulative returns. We train our 6 algorithms (\textit{EDAC}, \textit{TD3+BC}, \textit{CQL}, and their BPR variants) on 6 seeds, and evaluate each one every 5,000 environment steps by computing the average return over 10 evaluation episodes. Lighter lines correspond to baselines, darker ones to versions pretrained with BPR. The black dotted line shows the time when representation pre-training ends and policy training process begins.}
\label{all_curve_d4rl}
\end{figure*}


\paragraph{Performance Comparison in D4RL Benchmark} 
\textbf{\textit{Experimental Setup:}}
We analyze our proposed method BPR on the D4RL benchmark~\citep{DBLP:journals/corr/abs-2004-07219} of OpenAI gym MuJoCo tasks~\citep{DBLP:conf/iros/TodorovET12} which includes a variety of datasets that have been commonly used in the Offline RL community. We evaluate our method by integrating it into three Offline RL methods: \textbf{TD3+BC}~\citep{DBLP:conf/nips/FujimotoG21}, \textbf{CQL}~\citep{DBLP:conf/nips/KumarZTL20}, and \textbf{EDAC}~\citep{DBLP:conf/nips/AnMKS21}. We consider three environments: halfcheetah, hopper and walker2d, and two datasets per task: expert and medium-expert. Instead of passing the raw state as input to the value and policy networks as in the baseline methods, we first pretrain the encoder during $100k$ timesteps, then freeze it, pass the raw state through the frozen encoder to obtain its representation, and use that as the input of the Offline RL algorithm. Further details on the experiment setup are included in appendix \ref{sec:add_results}.

\textbf{\textit{Analysis:}} Figure~\ref{all_curve_d4rl} shows the performance of all models on the D4RL tasks. We observe that pre-training the encoder with BPR leads to faster convergence for all Offline RL algorithms, and can improve policy performance. Notably, since different Offline RL algorithms share the same encoder network architecture, the pretrained BPR encoder can be reused across them, which substantially amortizes the pretraining cost.

\paragraph{Performance Comparison among Different Representation Objectives} 



\begin{wrapfigure}{rt}{0.5\textwidth}
\vspace{-1cm}
  \begin{center}
    \includegraphics[width=0.8\linewidth]{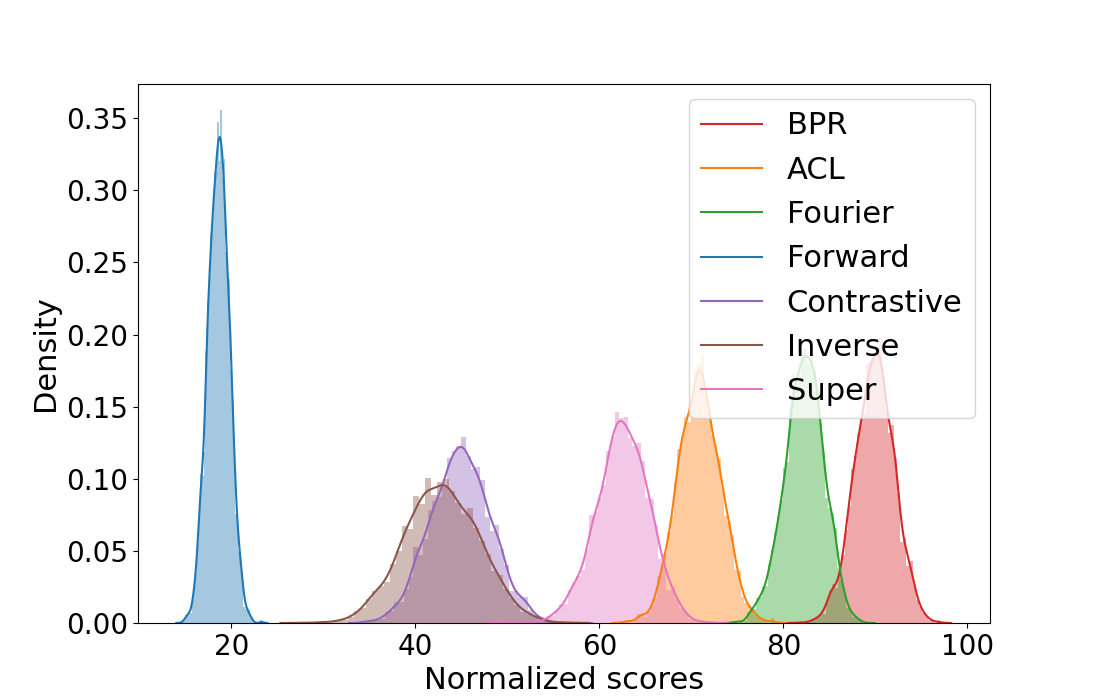}
  \end{center}
  \caption{Bootstrapping distributions for uncertainty in IQM measurements with 5000 resamples}
  \label{fig:IQM}
\end{wrapfigure}
\textbf{\textit{Experimental Setup: }} We follow the pre-training and fine-tuning paradigm of~\citet{DBLP:conf/icml/YangN21}, where we first pre-train the encoder for $100k$ timesteps, and then fine-tune a BRAC~\citep{DBLP:journals/corr/abs-1911-11361} agent on downstream tasks. We conduct performance comparison by training the encoder with different objectives, including BPR and several other state-of-the-art representation objectives\footnote{We use the codebase from~\citet{DBLP:conf/icml/YangN21} in these experiments:  \url{https://github.com/google-research/google-research/tree/master/rl_repr}}. 

\textbf{\textit{Analysis:}} We follow the performance criterion from~\citet{DBLP:journals/corr/abs-2106-04799,DBLP:conf/nips/AgarwalSCCB21}, and use the inter-quartile mean (IQM) normalized score, which is calculated overruns rather than tasks, and the percentile bootstrap confidence intervals, to compare the performance of the different objectives. The detailed results are provided in appendix~\ref{sec:add_results}. Figure~\ref{fig:IQM} presents the IQM normalized return and 95\% bootstrap confidence intervals for all methods, the numerical values can be found in Table~\ref{tab:IQMs}. The performance gain of BPR over the two most competitive baselines, Fourier and ACL, is statistically significant($p<0.05$).

\subsection{Effectiveness of Behavior Representations in Visual Offline RL Benchmark}

\begin{figure*}[!htbp]
    \centering
    {{\includegraphics[width=10cm]{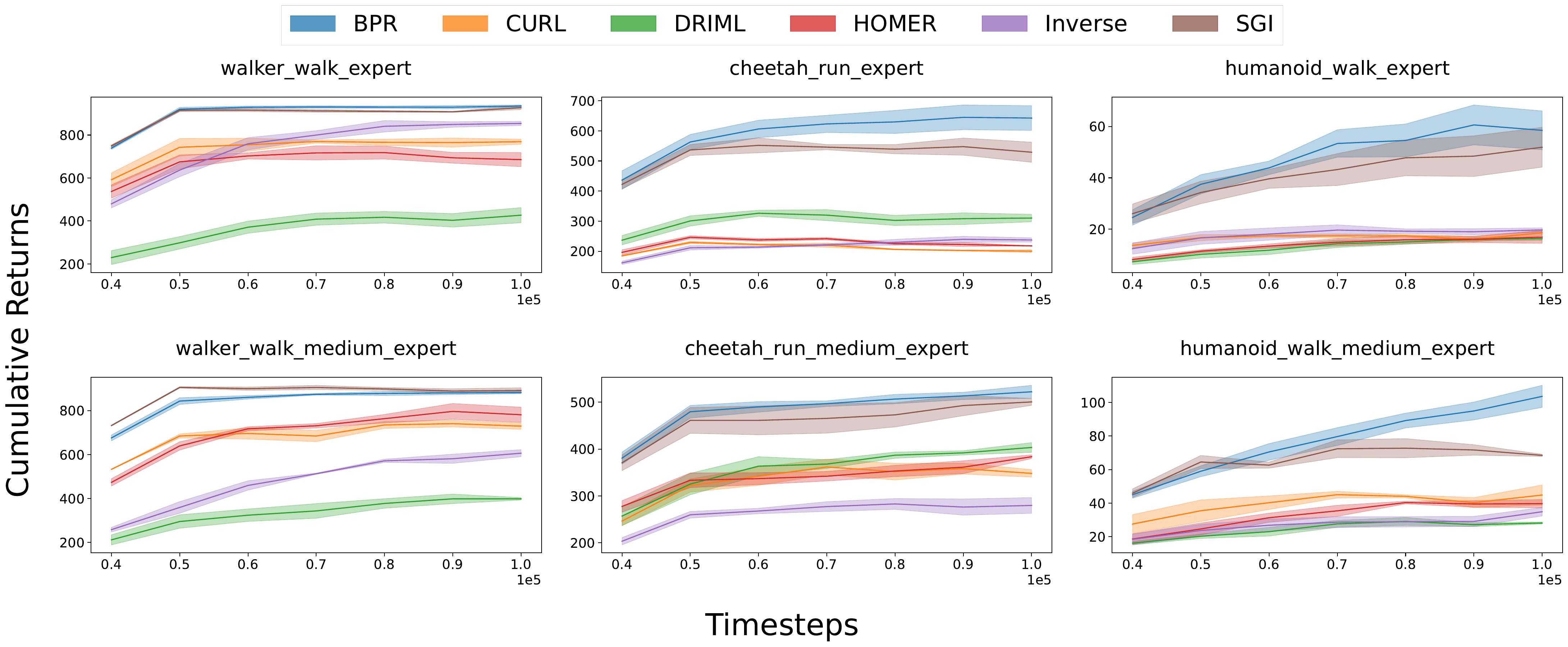} }}%
    \hspace{-0.2cm}
    \subfigure{
\begin{minipage}{3cm}
\vspace{-4cm}
\centering    
\includegraphics[width=3cm]{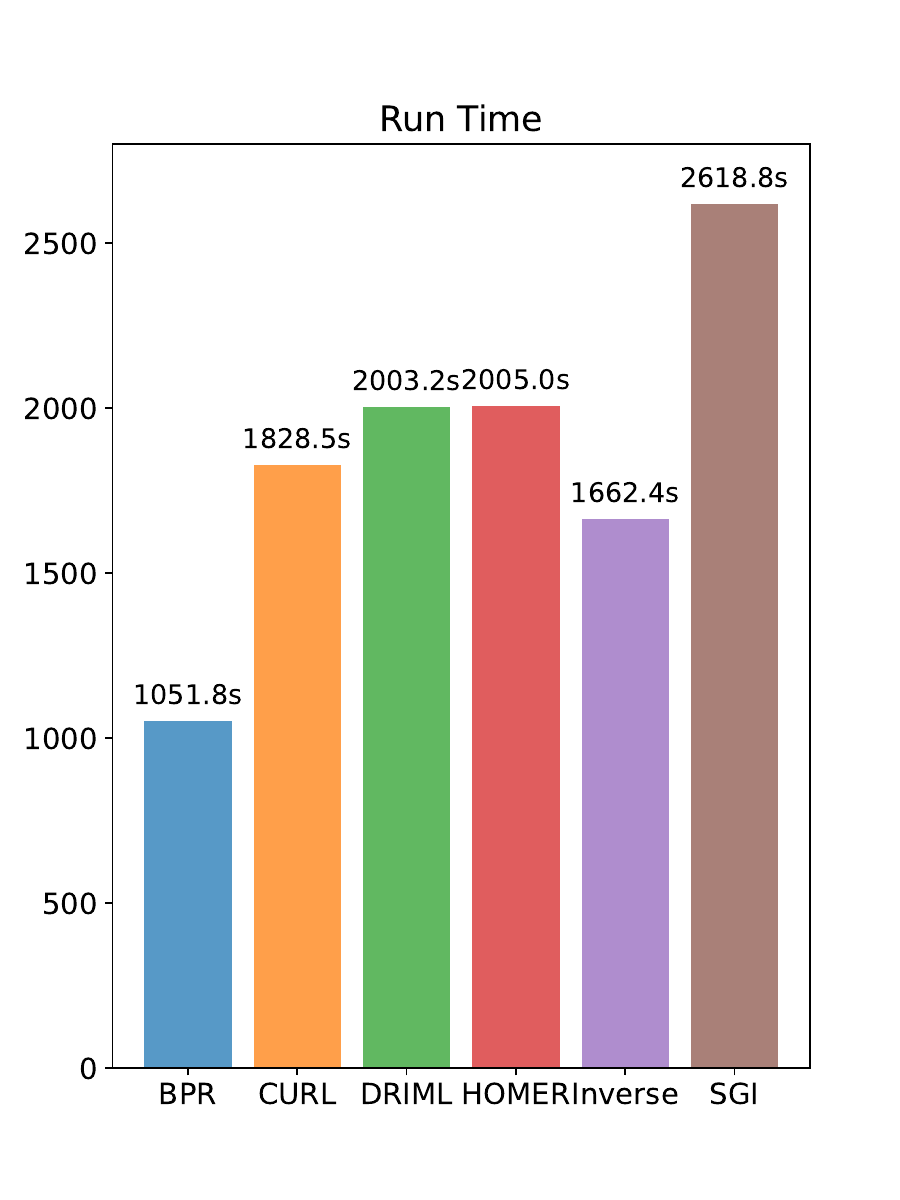}
\end{minipage}
}
    \caption{Performance comparison of BPR with several other baselines on DMC environments (left). Run time comparison for all representation objectives running on \textit{cheetah\_run\_expert} task with 100K timesteps, the vertical axis indicates the wall-clock time in seconds (right).} %
    \label{fig:v-d4rl-curve}%
\end{figure*}

\begin{table*}[htbp]
    \centering
\caption{Interquartile mean, median, and mean normalized scores, evaluated over all six runs for each of 16 DMC tasks.  Confidence intervals computed by percentile bootstrap with 5000 resamples.}
\label{tab:IQMs}
\scalebox{0.75}{\setlength{\tabcolsep}{5pt}
    \begin{tabular}{lrrrrrr}
    \toprule
    Method & IQM & 95\% CI & Median & 95\% CI & Mean & 95\% CI \\
    \midrule
BPR  & \textbf{89.99} & (85.55, 93.97) & \textbf{87.04} & (77.04, 91.53) & \textbf{81.70} & (77.38, 86.21)\\
ACL  & 70.79 & (66.09, 75.53) &67.50 & (57.73, 77.28) & 69.91 & (65.98, 74.05)\\
Fourier & 82.24 & (77.86, 86.17) & 85.23 & (73.25, 88.99) & 72.61 & (68.01, 76.51)\\
Forward  & 18.59 & (16.35, 20.90) &9.26 & (6.68, 14.48) & 29.68 & (27.25, 32.20)\\
Contrastive  & 44.87 & (38.36, 51.52) &41.48 & (35.82, 54.39) & 43.56 & (39.18, 47.98)\\
Inverse  & 42.79 & (34.97, 50.83) &40.33 & (33.38, 47.47) & 44.58 & (39.20, 49.99)\\
Super  & 62.62 & (56.81, 67.91) & 58.91 & (48.66, 68.79) & 60.72 & (57.08,64.53)\\
\bottomrule
\end{tabular}}
\end{table*}
\paragraph{Performance comparison in V-D4RL benchmark}

\paragraph{Experimental Setup} We evaluate our method with five representation objectives that are considered as state-of-the-art methods on a benchmarking suite for Offline RL from visual observations of DMControl suite (DMC) tasks~\citep{DBLP:journals/corr/abs-2206-04779}. We note that not all methods have been shown to be effective for visual-based continuous control problems. The baselines include: (i) Temporal contrastive learning methods such as DRIML~\citep{DBLP:conf/nips/MazoureCDBH20} and HOMER~\citep{DBLP:conf/icml/MisraHK020}; (ii) spatial contrastive approach, CURL~\citep{DBLP:conf/icml/LaskinSA20}; (iii) one-step inverse action prediction model, Inverse model~\citep{DBLP:conf/icml/PathakAED17}; and (iv) Representation module which combines self-predictive, goal-conditioned RL and inverse model, namely SGI~\citep{DBLP:conf/nips/SchwarzerRNACHB21}. We evaluate all representation objectives by integrating the pre-trained encoder from each into an Offline RL method DrQ+BC~\citep{DBLP:journals/corr/abs-2206-04779}, which combines data augmentation techniques with the TD3+BC method (TD3 with a regularizing behavioral-cloning term to the policy loss).
\paragraph{Analysis} As shown in Figure~\ref{fig:v-d4rl-curve}, 
BPR consistently improves over the other state-of-the-art algorithms, except for SGI in the \textit{walker\_walk} task, while SGI is the most time-consuming representation objective of all methods. We evaluate the wall-clock training time of each representation objective for the 100K time steps. Our approach compares favorably against all previous methods. In particular, BPR is 2.5x faster than the most competitive SGI method. This suggests that the BPR objective can improve sample efficiency more effectively than the other representation objectives. 

\paragraph{Robustness to Visual Distractions} 
\textbf{\textit{Experimental Setup: }}
To test the policy robustness brought by BPR, we use three levels of distractors: easy, medium, and hard, to evaluate the performance of the model, following~\citet{DBLP:journals/corr/abs-2206-04779}. Each distraction represents a shift in the data distribution, where task-irrelevant visual factors are added (i.e., backgrounds, agent colors, and camera positions). Those factors of variation disturb the model training (see the demonstrations provided in Appendix~\ref{app:distractors}). We train the baseline agent, DrQ+BC, and its variant using a fixed encoder pretrained by BPR, on the three levels of the distraction of the \textit{cheetah-run-medium-expert} dataset with 1 million data points. In the experiment, the agent is evaluated on three different test environments: i) \textit{Original Env}, the evaluation environment is the original environment \textit{i.e.}, without any distractors; ii) \textit{Distractor Train}, the evaluation environment has the same distraction configuration as the training dataset, and the configuration is fixed during the evaluation procedure; iii) \textit{Distractor Test}, the specific distractions of the evaluation environment changes over evaluation, while the level of the distractions remains the same.

\begin{table}[ht]
\centering
\caption{Cumulative return on \textit{cheetah-run-medium-expert} dataset with visual distractions ranging from easy to hard. We compare DrQ+BC (before $\rightarrow$) to DrQ+BC with an encoder pretrained with BPR (after $\rightarrow$). The final performance is averaged over 6 seeds (highest result for each task underlined).}
\label{tab:drq-medexp-distractions}
\scalebox{0.75}{\setlength{\tabcolsep}{5pt}
\begin{tabular}{cccc}
\toprule
  \textbf{Level of} & &  \textbf{Eval. Return} & \\ 
  \textbf{Distractions} &  \textbf{Original} &  \textbf{Dis. Train} &  \textbf{Dis. Test} \\
  \midrule

easy   & 45.50($\pm$ 37.02) $\rightarrow$ \underline{113.64}($\pm$ 24.83) & 425.15($\pm$ 65.67)  $\rightarrow$ \underline{710.11}($\pm$ 56.70) & 45.16($\pm$ 24.96)  $\rightarrow$ \underline{113.27}($\pm$ 26.48) \\
medium  & 37.35($\pm$ 25.66)  $\rightarrow$ \underline{104.30}($\pm$ 22.95)&586.17($\pm$ 99.66)  $\rightarrow$ \underline{702.00}($\pm$ 90.44)&14.70($\pm$ 5.93)  $\rightarrow$ \underline{27.53}($\pm$ 11.62)  \\
hard  & 13.49($\pm$ 10.66)  $\rightarrow$ \underline{64.79}($\pm$ 33.47)&497.41($\pm$ 103.85) $\rightarrow$ \underline{608.98}($\pm$ 64.64)&\underline{18.88}($\pm$ 11.32) $\rightarrow$ 16.72($\pm$ 6.88) \\
\midrule

\end{tabular}}
\end{table}
\textbf{\textit{Analysis: }} The final performances are shown in Table~\ref{tab:drq-medexp-distractions}. As can be seen, BPR improves the policy performance in most cases, except for the task with a combination of the \textit{hard level} and the \textit{distractor test} setting where the averaged performance of the agent with BPR is close to the one without BPR but has a lower variance. Besides, the different evaluation environment corresponds to different degrees of the distribution shift from the training dataset. It is therefore not surprising to see that the agent performs better in \textit{Distractor train} environment than the others. This experiment shows that much better (more robust) policies can be obtained against distractions with the help of BPR, indicating that BPR objective can somehow disentangle sources of distractions through training, facilitating the learning of general features.

\subsection{Effective Dimension of the value network}
\label{sec:exp_effctive_dimension}

\begin{wrapfigure}{rt}{0.4\textwidth}
\vspace{-0.7cm}
  \begin{center}
    \includegraphics[width=\linewidth]{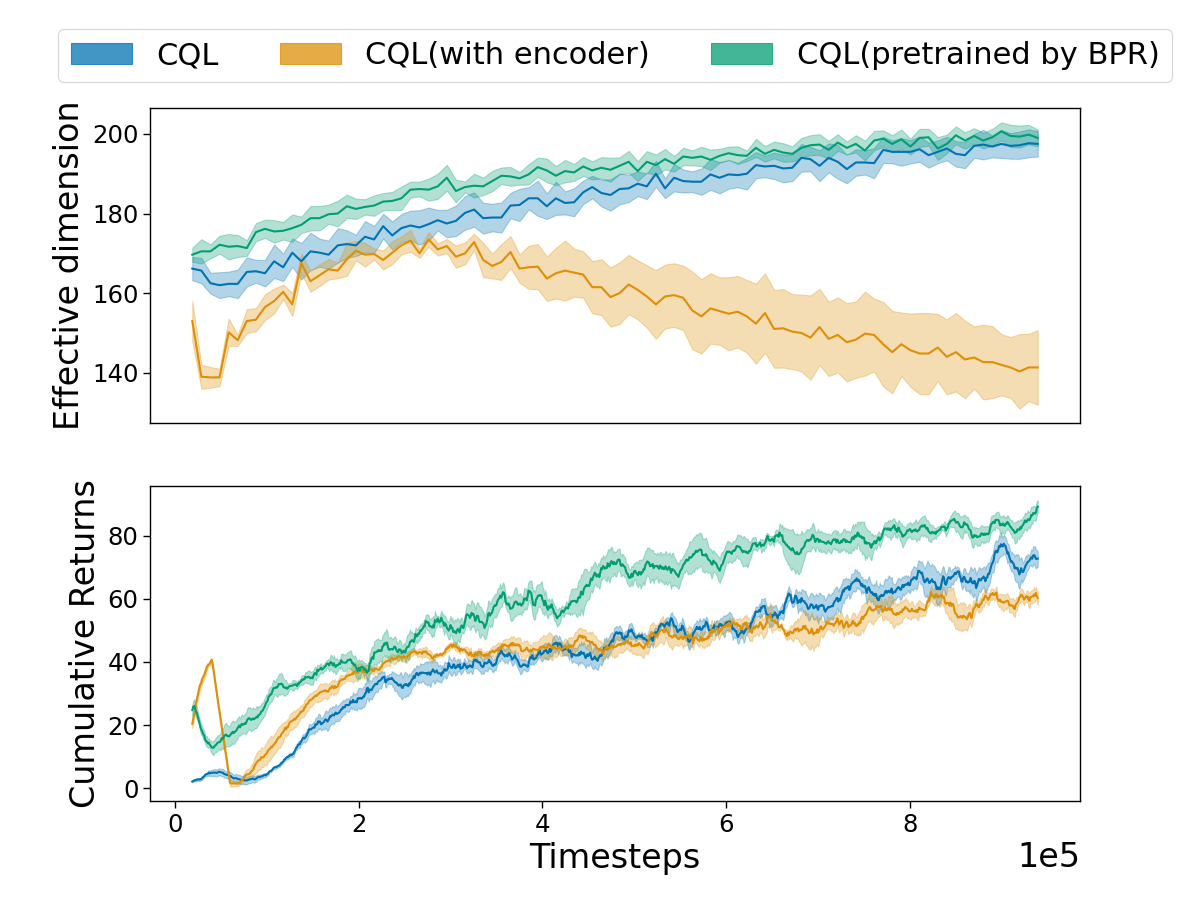}
  \end{center}
  \caption{ \small Effective dimension of the penultimate layer of the value network (top) and performance (bottom) over the training on the Halfcheetah-medium-expert-v2 task.}
  \label{fig:effective_dimension}
\end{wrapfigure}
\textbf{\textit{Experimental Setup: }} We investigate  whether utilizing the fixed encoder learned from BPR can alleviate the ``under-parameterization'' phenomenon of the value network. To this end, we propose to use the effective dimension (defined in Definition \ref{def:effective_dimension}) as a way to evaluate the compactness of features represented in the value network. We first sample a batch of state-action pairs from the offline dataset and then compute the effective dimension of the output of the penultimate layer of the value network. In this experiment, we evaluate the effective dimension of the value network of the CQL agent on \textit{Halfcheetah-medium-expert-v2} task in D4RL datasets, which operates on the physical meaning raw state inputs. For comparison, we also develop a variant of CQL whose input is still the raw state, but a shared encoder head is applied to both the value network and policy network and is trained with the critic loss along the policy learning procedure.

\textbf{\textit{Analysis: }} Figure~\ref{fig:effective_dimension} illustrates that over the course of training, BPR objective successfully improves the effective dimensions of the state representation, meanwhile achieving better performance compared to the agent that is without an encoder and the agent whose encoder is trained with the critic loss. Notably, when training the encoder via the critic loss along the policy learning procedure, the effective dimension increases significantly early in training, suggesting that even without any specific representation loss, the encoder can still disentangle some similar states. When the training goes on, the agent will observe more diverse states over time as the policy improves, which induces a mismatch between the learnt representations and the newly observed states, leading to a decline in the effective dimension. In contrast, with a fixed encoder learned using the BPR objective, CQL produces more effective dimensions than its vanilla version, leading to better performance, which indicates that BPR is capable of providing compact information for downstream Offline RL training.

\section{Discussion}
\paragraph{Limitations and Future Work} Although this work has shown effectiveness in learning representations that are robust to visual distractions, it still struggles when the distribution of evaluation environments shifts too much from the training environment, this suggests that further improvement on the generalization abilities of the representation is required. Another limitation related to the nature of the approach is that if the behavioral policy is uninformative, then our approach will likely result in a policy that has the same performance as the behavior but without any improvement. However, it is straightforward to leverage information about the behavior (either a priori or from statistics on the returns in the dataset) to decide when to use BPR.
From these perspectives, further development of a more theoretically grounded representation objective might be needed. 
\paragraph{Conclusion} Offline RL algorithms typically suffer from two main issues: one is the difficulty of learning from high-dimensional visual input data, the other is the implicit under-parameterization phenomenon that can lead to low-rank features of the value network, further resulting in low performance. In this work, we propose a simple yet effective method for state representation learning in an Offline RL setting which can be combined with any existing Offline RL pipeline with minimal changes.

\section{Reproducibility Statement}
To ensure the reproducibility of all empirical results, we provide the code base in the supplementary material, which contains: training scripts, and requirements for the virtual environment. All datasets and code platforms (PyTorch and Tensorflow) we use are public. To rebuild the architecture of BPR model and plug BPR in any Offline RL algorithms, the readers are suggested to check the descriptions in Section~\ref{sec:bpr}, especially the  \textbf{Implementation details} paragraph. All proofs are stated in Appendix~\ref{sec:proof} with explanations and underlying assumptions. We also provide the pseudocode of the pretraining process and the co-training process in Algorithm \ref{alg:code_pretrain} and \ref{alg:code_cotrain} in Appendix. 

All training details are specified in Section~\ref{sec:exp} and Appendix~\ref{sec:add_results}. In the experiment on d4rl tasks, all representation objectives use the same encoder architecture, i.e., with 4-layer MLP activated by
ReLU, followed by another linear layer activated by Tanh, where the final output feature dimension of the encoder is 256. Besides, all representation objectives follow the same optimizer settings, pre-training data, and the number of pre-training epochs. In the experiment on v-d4rl tasks, all representation objectives use the same encoder architecture, i.e., with four convolutional layers
activated by ReLU, followed by a linear layer normalized by LayerNorm and activated by Tanh,
where the final output feature dimension of the encoder is 256. We also
provide the pseudocode for each visual representation objective in Algorithm \ref{alg:code_pretrain_driml}-\ref{alg:code_pretrain_sgi} in Appendix.


\bibliography{iclr2023_conference}

\begin{thebibliography}{85}
\providecommand{\natexlab}[1]{#1}
\providecommand{\url}[1]{\texttt{#1}}
\expandafter\ifx\csname urlstyle\endcsname\relax
  \providecommand{\doi}[1]{doi: #1}\else
  \providecommand{\doi}{doi: \begingroup \urlstyle{rm}\Url}\fi

\bibitem[Agarwal et~al.(2022)Agarwal, Song, Sun, Wang, Wang, and
  Zhang]{DBLP:journals/corr/abs-2205-14571}
Alekh Agarwal, Yuda Song, Wen Sun, Kaiwen Wang, Mengdi Wang, and Xuezhou Zhang.
\newblock Provable benefits of representational transfer in reinforcement
  learning.
\newblock \emph{CoRR}, abs/2205.14571, 2022.

\bibitem[Agarwal et~al.(2021)Agarwal, Schwarzer, Castro, Courville, and
  Bellemare]{DBLP:conf/nips/AgarwalSCCB21}
Rishabh Agarwal, Max Schwarzer, Pablo~Samuel Castro, Aaron~C. Courville, and
  Marc~G. Bellemare.
\newblock Deep reinforcement learning at the edge of the statistical precipice.
\newblock In \emph{Advances in Neural Information Processing Systems 34: Annual
  Conference on Neural Information Processing Systems 2021, NeurIPS 2021,
  December 6-14, 2021, virtual}, pp.\  29304--29320, 2021.

\bibitem[An et~al.(2021)An, Moon, Kim, and Song]{DBLP:conf/nips/AnMKS21}
Gaon An, Seungyong Moon, Jang{-}Hyun Kim, and Hyun~Oh Song.
\newblock Uncertainty-based offline reinforcement learning with diversified
  q-ensemble.
\newblock In \emph{Advances in Neural Information Processing Systems 34: Annual
  Conference on Neural Information Processing Systems 2021, NeurIPS 2021,
  December 6-14, 2021, virtual}, pp.\  7436--7447, 2021.

\bibitem[Andre \& Russell(2002)Andre and Russell]{DBLP:conf/aaai/AndreR02}
David Andre and Stuart~J. Russell.
\newblock State abstraction for programmable reinforcement learning agents.
\newblock In \emph{Proceedings of the Eighteenth National Conference on
  Artificial Intelligence and Fourteenth Conference on Innovative Applications
  of Artificial Intelligence, July 28 - August 1, 2002, Edmonton, Alberta,
  Canada}, pp.\  119--125. {AAAI} Press / The {MIT} Press, 2002.

\bibitem[Arora et~al.(2020)Arora, Du, Kakade, Luo, and
  Saunshi]{DBLP:conf/icml/AroraDKLS20}
Sanjeev Arora, Simon~S. Du, Sham~M. Kakade, Yuping Luo, and Nikunj Saunshi.
\newblock Provable representation learning for imitation learning via bi-level
  optimization.
\newblock In \emph{Proceedings of the 37th International Conference on Machine
  Learning, {ICML} 2020, 13-18 July 2020, Virtual Event}, volume 119 of
  \emph{Proceedings of Machine Learning Research}, pp.\  367--376. {PMLR},
  2020.

\bibitem[Asadi et~al.(2020)Asadi, Abel, and
  Littman]{DBLP:journals/corr/abs-2002-05518}
Kavosh Asadi, David Abel, and Michael~L. Littman.
\newblock Learning state abstractions for transfer in continuous control.
\newblock \emph{CoRR}, abs/2002.05518, 2020.

\bibitem[Bojarski et~al.(2016)Bojarski, Testa, Dworakowski, Firner, Flepp,
  Goyal, Jackel, Monfort, Muller, Zhang, Zhang, Zhao, and
  Zieba]{DBLP:journals/corr/BojarskiTDFFGJM16}
Mariusz Bojarski, Davide~Del Testa, Daniel Dworakowski, Bernhard Firner, Beat
  Flepp, Prasoon Goyal, Lawrence~D. Jackel, Mathew Monfort, Urs Muller, Jiakai
  Zhang, Xin Zhang, Jake Zhao, and Karol Zieba.
\newblock End to end learning for self-driving cars.
\newblock \emph{CoRR}, abs/1604.07316, 2016.

\bibitem[Brandfonbrener et~al.(2022)Brandfonbrener, Tachet~des Combes, and
  Laroche]{brandfonbrener2022deep}
David Brandfonbrener, R\'emi Tachet~des Combes, and Romain Laroche.
\newblock Deep-{SPIBB}: Scaling up safe policy improvement for offline
  reinforcement learning.
\newblock \emph{Proceedings of the 5th Reinforcement Learning and Decision
  Making (RLDM)}, 2022.

\bibitem[Buckman et~al.(2021)Buckman, Gelada, and Bellemare]{Buckman2021}
Jacob Buckman, Carles Gelada, and Marc~G Bellemare.
\newblock The importance of pessimism in fixed-dataset policy optimization.
\newblock In \emph{Proceedings of the 10th International Conference on Learning
  Representations (ICLR)}, 2021.

\bibitem[Cand{\`{e}}s \& Recht(2009)Cand{\`{e}}s and
  Recht]{DBLP:journals/focm/CandesR09}
Emmanuel~J. Cand{\`{e}}s and Benjamin Recht.
\newblock Exact matrix completion via convex optimization.
\newblock \emph{Found. Comput. Math.}, 9\penalty0 (6):\penalty0 717--772, 2009.

\bibitem[Cang et~al.(2021)Cang, Rajeswaran, Abbeel, and
  Laskin]{DBLP:journals/corr/abs-2106-09119}
Catherine Cang, Aravind Rajeswaran, Pieter Abbeel, and Michael Laskin.
\newblock Behavioral priors and dynamics models: Improving performance and
  domain transfer in offline {RL}.
\newblock \emph{CoRR}, abs/2106.09119, 2021.

\bibitem[Castro et~al.(2021)Castro, Kastner, Panangaden, and
  Rowland]{castro2021mico}
Pablo~Samuel Castro, Tyler Kastner, Prakash Panangaden, and Mark Rowland.
\newblock {MIC}o: Improved representations via sampling-based state similarity
  for markov decision processes.
\newblock In \emph{Advances in Neural Information Processing Systems}, 2021.

\bibitem[Chen et~al.(2021)Chen, Toyer, Wild, Emmons, Fischer, Lee, Alex, Wang,
  Luo, Russell, Abbeel, and Shah]{chen2021an}
Xin Chen, Sam Toyer, Cody Wild, Scott Emmons, Ian Fischer, Kuang-Huei Lee, Neel
  Alex, Steven~H Wang, Ping Luo, Stuart Russell, Pieter Abbeel, and Rohin Shah.
\newblock An empirical investigation of representation learning for imitation.
\newblock In \emph{Thirty-fifth Conference on Neural Information Processing
  Systems Datasets and Benchmarks Track (Round 2)}, 2021.

\bibitem[Comanici et~al.(2012)Comanici, Panangaden, and
  Precup]{DBLP:conf/qest/ComaniciPP12}
Gheorghe Comanici, Prakash Panangaden, and Doina Precup.
\newblock On-the-fly algorithms for bisimulation metrics.
\newblock In \emph{Ninth International Conference on Quantitative Evaluation of
  Systems, {QEST} 2012, London, United Kingdom, September 17-20, 2012}, pp.\
  94--103. {IEEE} Computer Society, 2012.

\bibitem[Drineas et~al.(2006)Drineas, Mahoney, and
  Muthukrishnan]{DBLP:conf/soda/DrineasMM06}
Petros Drineas, Michael~W. Mahoney, and S.~Muthukrishnan.
\newblock Sampling algorithms for \emph{l}\({}_{\mbox{2}}\) regression and
  applications.
\newblock In \emph{Proceedings of the Seventeenth Annual {ACM-SIAM} Symposium
  on Discrete Algorithms, {SODA} 2006, Miami, Florida, USA, January 22-26,
  2006}, pp.\  1127--1136. {ACM} Press, 2006.

\bibitem[Eysenbach et~al.(2019)Eysenbach, Gupta, Ibarz, and
  Levine]{DBLP:conf/iclr/EysenbachGIL19}
Benjamin Eysenbach, Abhishek Gupta, Julian Ibarz, and Sergey Levine.
\newblock Diversity is all you need: Learning skills without a reward function.
\newblock In \emph{7th International Conference on Learning Representations,
  {ICLR} 2019, New Orleans, LA, USA, May 6-9, 2019}. OpenReview.net, 2019.

\bibitem[Ferns et~al.(2006)Ferns, Castro, Precup, and
  Panangaden]{DBLP:conf/uai/FernsCPP06}
Norm Ferns, Pablo~Samuel Castro, Doina Precup, and Prakash Panangaden.
\newblock Methods for computing state similarity in markov decision processes.
\newblock In \emph{{UAI} '06, Proceedings of the 22nd Conference in Uncertainty
  in Artificial Intelligence, Cambridge, MA, USA, July 13-16, 2006}. {AUAI}
  Press, 2006.

\bibitem[Foster et~al.(2022)Foster, Krishnamurthy, Simchi-Levi, and
  Xu]{Foster2022}
Dylan~J Foster, Akshay Krishnamurthy, David Simchi-Levi, and Yunzong Xu.
\newblock Offline reinforcement learning: Fundamental barriers for value
  function approximation.
\newblock 2022.

\bibitem[Fu et~al.(2020)Fu, Kumar, Nachum, Tucker, and
  Levine]{DBLP:journals/corr/abs-2004-07219}
Justin Fu, Aviral Kumar, Ofir Nachum, George Tucker, and Sergey Levine.
\newblock {D4RL:} datasets for deep data-driven reinforcement learning.
\newblock \emph{CoRR}, abs/2004.07219, 2020.

\bibitem[Fujimoto \& Gu(2021)Fujimoto and Gu]{DBLP:conf/nips/FujimotoG21}
Scott Fujimoto and Shixiang~Shane Gu.
\newblock A minimalist approach to offline reinforcement learning.
\newblock In \emph{Advances in Neural Information Processing Systems 34: Annual
  Conference on Neural Information Processing Systems 2021, NeurIPS 2021,
  December 6-14, 2021, virtual}, pp.\  20132--20145, 2021.

\bibitem[Fujimoto et~al.(2019)Fujimoto, Meger, and
  Precup]{DBLP:conf/icml/FujimotoMP19}
Scott Fujimoto, David Meger, and Doina Precup.
\newblock Off-policy deep reinforcement learning without exploration.
\newblock In \emph{Proceedings of the 36th International Conference on Machine
  Learning, {ICML} 2019, 9-15 June 2019, Long Beach, California, {USA}},
  volume~97 of \emph{Proceedings of Machine Learning Research}, pp.\
  2052--2062. {PMLR}, 2019.

\bibitem[Goecks et~al.(2020)Goecks, Gremillion, Lawhern, Valasek, and
  Waytowich]{DBLP:conf/atal/GoecksGLVW20}
Vinicius~G. Goecks, Gregory~M. Gremillion, Vernon~J. Lawhern, John Valasek, and
  Nicholas~R. Waytowich.
\newblock Integrating behavior cloning and reinforcement learning for improved
  performance in dense and sparse reward environments.
\newblock In \emph{Proceedings of the 19th International Conference on
  Autonomous Agents and Multiagent Systems, {AAMAS} '20, Auckland, New Zealand,
  May 9-13, 2020}, pp.\  465--473. International Foundation for Autonomous
  Agents and Multiagent Systems, 2020.

\bibitem[Gottesman et~al.(2019)Gottesman, Johansson, Komorowski, Faisal,
  Sontag, Doshi-Velez, and Celi]{gottesman2019guidelines}
O~Gottesman, F~Johansson, M~Komorowski, A~Faisal, D~Sontag, F~Doshi-Velez, and
  LA~Celi.
\newblock Guidelines for reinforcement learning in healthcare.
\newblock \emph{Nature medicine}, 25:\penalty0 16--18, 2019.

\bibitem[Grill et~al.(2020)Grill, Strub, Altch{\'{e}}, Tallec, Richemond,
  Buchatskaya, Doersch, Pires, Guo, Azar, Piot, Kavukcuoglu, Munos, and
  Valko]{DBLP:conf/nips/GrillSATRBDPGAP20}
Jean{-}Bastien Grill, Florian Strub, Florent Altch{\'{e}}, Corentin Tallec,
  Pierre~H. Richemond, Elena Buchatskaya, Carl Doersch, Bernardo~{\'{A}}vila
  Pires, Zhaohan Guo, Mohammad~Gheshlaghi Azar, Bilal Piot, Koray Kavukcuoglu,
  R{\'{e}}mi Munos, and Michal Valko.
\newblock Bootstrap your own latent - {A} new approach to self-supervised
  learning.
\newblock In \emph{Advances in Neural Information Processing Systems 33: Annual
  Conference on Neural Information Processing Systems 2020, NeurIPS 2020,
  December 6-12, 2020, virtual}, 2020.

\bibitem[Hafner et~al.(2019)Hafner, Lillicrap, Fischer, Villegas, Ha, Lee, and
  Davidson]{DBLP:conf/icml/HafnerLFVHLD19}
Danijar Hafner, Timothy~P. Lillicrap, Ian Fischer, Ruben Villegas, David Ha,
  Honglak Lee, and James Davidson.
\newblock Learning latent dynamics for planning from pixels.
\newblock In \emph{Proceedings of the 36th International Conference on Machine
  Learning, {ICML} 2019, 9-15 June 2019, Long Beach, California, {USA}},
  volume~97 of \emph{Proceedings of Machine Learning Research}, pp.\
  2555--2565. {PMLR}, 2019.

\bibitem[Jin et~al.(2021)Jin, Yang, and Wang]{Jin2021}
Ying Jin, Zhuoran Yang, and Zhaoran Wang.
\newblock {Is pessimism provably efficient for Offline RL?}
\newblock In \emph{Proceedings of the 38th International Conference on Machine
  Learning (ICML)}, pp.\  5084--5096. PMLR, 2021.

\bibitem[Jong \& Stone(2005)Jong and Stone]{DBLP:conf/ijcai/JongS05}
Nicholas~K. Jong and Peter Stone.
\newblock State abstraction discovery from irrelevant state variables.
\newblock In Leslie~Pack Kaelbling and Alessandro Saffiotti (eds.),
  \emph{IJCAI-05, Proceedings of the Nineteenth International Joint Conference
  on Artificial Intelligence, Edinburgh, Scotland, UK, July 30 - August 5,
  2005}, pp.\  752--757. Professional Book Center, 2005.

\bibitem[Kidambi et~al.(2020)Kidambi, Rajeswaran, Netrapalli, and
  Joachims]{Kidambi2020}
Rahul Kidambi, Aravind Rajeswaran, Praneeth Netrapalli, and Thorsten Joachims.
\newblock {MOReL}: Model-based offline reinforcement learning.
\newblock In \emph{Proceedings of the 34th Annual Conference on Neural
  Information Processing Systems (NeurIPS)}, volume~33, pp.\  21810--21823,
  2020.

\bibitem[Kostrikov et~al.(2021)Kostrikov, Fergus, Tompson, and
  Nachum]{DBLP:conf/icml/KostrikovFTN21}
Ilya Kostrikov, Rob Fergus, Jonathan Tompson, and Ofir Nachum.
\newblock Offline reinforcement learning with fisher divergence critic
  regularization.
\newblock In \emph{Proceedings of the 38th International Conference on Machine
  Learning, {ICML} 2021, 18-24 July 2021, Virtual Event}, volume 139 of
  \emph{Proceedings of Machine Learning Research}, pp.\  5774--5783. {PMLR},
  2021.

\bibitem[Krishnamurthy et~al.(2016)Krishnamurthy, Lakshminarayanan, Kumar, and
  Ravindran]{DBLP:journals/corr/KrishnamurthyLK16}
Ramnandan Krishnamurthy, Aravind~S. Lakshminarayanan, Peeyush Kumar, and
  Balaraman Ravindran.
\newblock Hierarchical reinforcement learning using spatio-temporal
  abstractions and deep neural networks.
\newblock \emph{CoRR}, abs/1605.05359, 2016.

\bibitem[Kumar et~al.(2019)Kumar, Fu, Soh, Tucker, and
  Levine]{DBLP:conf/nips/KumarFSTL19}
Aviral Kumar, Justin Fu, Matthew Soh, George Tucker, and Sergey Levine.
\newblock Stabilizing off-policy q-learning via bootstrapping error reduction.
\newblock In \emph{Advances in Neural Information Processing Systems 32: Annual
  Conference on Neural Information Processing Systems 2019, NeurIPS 2019,
  December 8-14, 2019, Vancouver, BC, Canada}, pp.\  11761--11771, 2019.

\bibitem[Kumar et~al.(2020)Kumar, Zhou, Tucker, and
  Levine]{DBLP:conf/nips/KumarZTL20}
Aviral Kumar, Aurick Zhou, George Tucker, and Sergey Levine.
\newblock Conservative q-learning for offline reinforcement learning.
\newblock In \emph{Advances in Neural Information Processing Systems 33: Annual
  Conference on Neural Information Processing Systems 2020, NeurIPS 2020,
  December 6-12, 2020, virtual}, 2020.

\bibitem[Kumar et~al.(2021{\natexlab{a}})Kumar, Agarwal, Ghosh, and
  Levine]{DBLP:conf/iclr/KumarAGL21}
Aviral Kumar, Rishabh Agarwal, Dibya Ghosh, and Sergey Levine.
\newblock Implicit under-parameterization inhibits data-efficient deep
  reinforcement learning.
\newblock In \emph{9th International Conference on Learning Representations,
  {ICLR} 2021, Virtual Event, Austria, May 3-7, 2021}. OpenReview.net,
  2021{\natexlab{a}}.

\bibitem[Kumar et~al.(2021{\natexlab{b}})Kumar, Agarwal, Ma, Courville, Tucker,
  and Levine]{DBLP:journals/corr/abs-2112-04716}
Aviral Kumar, Rishabh Agarwal, Tengyu Ma, Aaron~C. Courville, George Tucker,
  and Sergey Levine.
\newblock {DR3:} value-based deep reinforcement learning requires explicit
  regularization.
\newblock \emph{CoRR}, abs/2112.04716, 2021{\natexlab{b}}.

\bibitem[Lan et~al.(2022)Lan, Tu, Oberman, Agarwal, and
  Bellemare]{DBLP:conf/aistats/LanTOAB22}
Charline~Le Lan, Stephen Tu, Adam Oberman, Rishabh Agarwal, and Marc~G.
  Bellemare.
\newblock On the generalization of representations in reinforcement learning.
\newblock In Gustau Camps{-}Valls, Francisco J.~R. Ruiz, and Isabel Valera
  (eds.), \emph{International Conference on Artificial Intelligence and
  Statistics, {AISTATS} 2022, 28-30 March 2022, Virtual Event}, volume 151 of
  \emph{Proceedings of Machine Learning Research}, pp.\  4132--4157. {PMLR},
  2022.

\bibitem[Lange et~al.(2012)Lange, Gabel, and
  Riedmiller]{DBLP:books/sp/12/LangeGR12}
Sascha Lange, Thomas Gabel, and Martin~A. Riedmiller.
\newblock Batch reinforcement learning.
\newblock In \emph{Reinforcement Learning}, volume~12 of \emph{Adaptation,
  Learning, and Optimization}, pp.\  45--73. Springer, 2012.

\bibitem[Laroche et~al.(2019{\natexlab{a}})Laroche, Trichelair, and des
  Combes]{DBLP:conf/icml/LarocheTC19}
Romain Laroche, Paul Trichelair, and Remi~Tachet des Combes.
\newblock Safe policy improvement with baseline bootstrapping.
\newblock In \emph{Proceedings of the 36th International Conference on Machine
  Learning, {ICML} 2019, 9-15 June 2019, Long Beach, California, {USA}},
  volume~97 of \emph{Proceedings of Machine Learning Research}, pp.\
  3652--3661. {PMLR}, 2019{\natexlab{a}}.

\bibitem[Laroche et~al.(2019{\natexlab{b}})Laroche, Trichelair, and Tachet~des
  Combes]{Laroche2019}
Romain Laroche, Paul Trichelair, and R\'emi Tachet~des Combes.
\newblock Safe policy improvement with baseline bootstrapping.
\newblock In \emph{Proceedings of the 36th International Conference on Machine
  Learning (ICML)}, 2019{\natexlab{b}}.

\bibitem[Laskin et~al.(2020)Laskin, Srinivas, and
  Abbeel]{DBLP:conf/icml/LaskinSA20}
Michael Laskin, Aravind Srinivas, and Pieter Abbeel.
\newblock {CURL:} contrastive unsupervised representations for reinforcement
  learning.
\newblock In \emph{Proceedings of the 37th International Conference on Machine
  Learning, {ICML} 2020, 13-18 July 2020, Virtual Event}, volume 119 of
  \emph{Proceedings of Machine Learning Research}, pp.\  5639--5650. {PMLR},
  2020.

\bibitem[Lee et~al.(2020)Lee, Nagabandi, Abbeel, and
  Levine]{DBLP:conf/nips/LeeNAL20}
Alex~X. Lee, Anusha Nagabandi, Pieter Abbeel, and Sergey Levine.
\newblock Stochastic latent actor-critic: Deep reinforcement learning with a
  latent variable model.
\newblock In \emph{Advances in Neural Information Processing Systems 33: Annual
  Conference on Neural Information Processing Systems 2020, NeurIPS 2020,
  December 6-12, 2020, virtual}, 2020.

\bibitem[Levine et~al.(2020)Levine, Kumar, Tucker, and
  Fu]{DBLP:journals/corr/abs-2005-01643}
Sergey Levine, Aviral Kumar, George Tucker, and Justin Fu.
\newblock Offline reinforcement learning: Tutorial, review, and perspectives on
  open problems.
\newblock \emph{CoRR}, abs/2005.01643, 2020.

\bibitem[Li et~al.(2006)Li, Walsh, and Littman]{DBLP:conf/isaim/LiWL06}
Lihong Li, Thomas~J. Walsh, and Michael~L. Littman.
\newblock Towards a unified theory of state abstraction for mdps.
\newblock In \emph{International Symposium on Artificial Intelligence and
  Mathematics, AI{\&}Math 2006, Fort Lauderdale, Florida, USA, January 4-6,
  2006}, 2006.

\bibitem[Liu \& Abbeel(2021)Liu and Abbeel]{DBLP:conf/nips/LiuA21}
Hao Liu and Pieter Abbeel.
\newblock Behavior from the void: Unsupervised active pre-training.
\newblock In Marc'Aurelio Ranzato, Alina Beygelzimer, Yann~N. Dauphin, Percy
  Liang, and Jennifer~Wortman Vaughan (eds.), \emph{Advances in Neural
  Information Processing Systems 34: Annual Conference on Neural Information
  Processing Systems 2021, NeurIPS 2021, December 6-14, 2021, virtual}, pp.\
  18459--18473, 2021.

\bibitem[Lu et~al.(2022)Lu, Ball, Rudner, Parker{-}Holder, Osborne, and
  Teh]{DBLP:journals/corr/abs-2206-04779}
Cong Lu, Philip~J. Ball, Tim G.~J. Rudner, Jack Parker{-}Holder, Michael~A.
  Osborne, and Yee~Whye Teh.
\newblock Challenges and opportunities in offline reinforcement learning from
  visual observations.
\newblock \emph{CoRR}, abs/2206.04779, 2022.

\bibitem[Lyle et~al.(2022)Lyle, Rowland, and Dabney]{DBLP:conf/iclr/LyleRD22}
Clare Lyle, Mark Rowland, and Will Dabney.
\newblock Understanding and preventing capacity loss in reinforcement learning.
\newblock In \emph{The Tenth International Conference on Learning
  Representations, {ICLR} 2022, Virtual Event, April 25-29, 2022}.
  OpenReview.net, 2022.

\bibitem[Mahoney et~al.(2012)Mahoney, Drineas, Magdon{-}Ismail, and
  Woodruff]{DBLP:conf/icml/MahoneyDMW12}
Michael~W. Mahoney, Petros Drineas, Malik Magdon{-}Ismail, and David~P.
  Woodruff.
\newblock Fast approximation of matrix coherence and statistical leverage.
\newblock In \emph{Proceedings of the 29th International Conference on Machine
  Learning, {ICML} 2012, Edinburgh, Scotland, UK, June 26 - July 1, 2012}.
  icml.cc / Omnipress, 2012.

\bibitem[Mannor et~al.(2004)Mannor, Menache, Hoze, and
  Klein]{DBLP:conf/icml/MannorMHK04}
Shie Mannor, Ishai Menache, Amit Hoze, and Uri Klein.
\newblock Dynamic abstraction in reinforcement learning via clustering.
\newblock In \emph{Machine Learning, Proceedings of the Twenty-first
  International Conference {(ICML} 2004), Banff, Alberta, Canada, July 4-8,
  2004}, volume~69 of \emph{{ACM} International Conference Proceeding Series}.
  {ACM}, 2004.

\bibitem[Mazoure et~al.(2020)Mazoure, des Combes, Doan, Bachman, and
  Hjelm]{DBLP:conf/nips/MazoureCDBH20}
Bogdan Mazoure, Remi~Tachet des Combes, Thang Doan, Philip Bachman, and
  R.~Devon Hjelm.
\newblock Deep reinforcement and infomax learning.
\newblock In \emph{Advances in Neural Information Processing Systems 33: Annual
  Conference on Neural Information Processing Systems 2020, NeurIPS 2020,
  December 6-12, 2020, virtual}, 2020.

\bibitem[Misra et~al.(2020)Misra, Henaff, Krishnamurthy, and
  Langford]{DBLP:conf/icml/MisraHK020}
Dipendra Misra, Mikael Henaff, Akshay Krishnamurthy, and John Langford.
\newblock Kinematic state abstraction and provably efficient rich-observation
  reinforcement learning.
\newblock In \emph{Proceedings of the 37th International Conference on Machine
  Learning, {ICML} 2020, 13-18 July 2020, Virtual Event}, volume 119 of
  \emph{Proceedings of Machine Learning Research}, pp.\  6961--6971. {PMLR},
  2020.

\bibitem[Modi et~al.(2021)Modi, Chen, Krishnamurthy, Jiang, and
  Agarwal]{DBLP:journals/corr/abs-2102-07035}
Aditya Modi, Jinglin Chen, Akshay Krishnamurthy, Nan Jiang, and Alekh Agarwal.
\newblock Model-free representation learning and exploration in low-rank mdps.
\newblock \emph{CoRR}, abs/2102.07035, 2021.

\bibitem[Mohri \& Talwalkar(2011)Mohri and
  Talwalkar]{DBLP:journals/jmlr/MohriT11}
Mehryar Mohri and Ameet Talwalkar.
\newblock Can matrix coherence be efficiently and accurately estimated?
\newblock In Geoffrey~J. Gordon, David~B. Dunson, and Miroslav Dud{\'{\i}}k
  (eds.), \emph{Proceedings of the Fourteenth International Conference on
  Artificial Intelligence and Statistics, {AISTATS} 2011, Fort Lauderdale, USA,
  April 11-13, 2011}, volume~15 of \emph{{JMLR} Proceedings}, pp.\  534--542.
  JMLR.org, 2011.

\bibitem[Nachum \& Yang(2021{\natexlab{a}})Nachum and
  Yang]{DBLP:conf/nips/NachumY21}
Ofir Nachum and Mengjiao Yang.
\newblock Provable representation learning for imitation with contrastive
  fourier features.
\newblock In \emph{Advances in Neural Information Processing Systems 34: Annual
  Conference on Neural Information Processing Systems 2021, NeurIPS 2021,
  December 6-14, 2021, virtual}, pp.\  30100--30112, 2021{\natexlab{a}}.

\bibitem[Nachum \& Yang(2021{\natexlab{b}})Nachum and
  Yang]{DBLP:journals/corr/abs-2105-12272}
Ofir Nachum and Mengjiao Yang.
\newblock Provable representation learning for imitation with contrastive
  fourier features.
\newblock \emph{CoRR}, abs/2105.12272, 2021{\natexlab{b}}.

\bibitem[Nadjahi et~al.(2019)Nadjahi, Laroche, and Tachet~des
  Combes]{Nadjahi2019}
Kimia Nadjahi, Romain Laroche, and R\'emi Tachet~des Combes.
\newblock Safe policy improvement with soft baseline bootstrapping.
\newblock In \emph{Proceedings of the 17th European Conference on Machine
  Learning and Principles and Practice of Knowledge Discovery in Databases
  (ECML-PKDD)}, 2019.

\bibitem[Nair et~al.(2018)Nair, McGrew, Andrychowicz, Zaremba, and
  Abbeel]{DBLP:conf/icra/NairMAZA18}
Ashvin Nair, Bob McGrew, Marcin Andrychowicz, Wojciech Zaremba, and Pieter
  Abbeel.
\newblock Overcoming exploration in reinforcement learning with demonstrations.
\newblock In \emph{2018 {IEEE} International Conference on Robotics and
  Automation, {ICRA} 2018, Brisbane, Australia, May 21-25, 2018}, pp.\
  6292--6299. {IEEE}, 2018.

\bibitem[Nair \& Hinton(2010)Nair and Hinton]{DBLP:conf/icml/NairH10}
Vinod Nair and Geoffrey~E. Hinton.
\newblock Rectified linear units improve restricted boltzmann machines.
\newblock In \emph{Proceedings of the 27th International Conference on Machine
  Learning (ICML-10), June 21-24, 2010, Haifa, Israel}, pp.\  807--814.
  Omnipress, 2010.

\bibitem[Pathak et~al.(2017)Pathak, Agrawal, Efros, and
  Darrell]{DBLP:conf/icml/PathakAED17}
Deepak Pathak, Pulkit Agrawal, Alexei~A. Efros, and Trevor Darrell.
\newblock Curiosity-driven exploration by self-supervised prediction.
\newblock In \emph{Proceedings of the 34th International Conference on Machine
  Learning, {ICML} 2017, Sydney, NSW, Australia, 6-11 August 2017}, volume~70
  of \emph{Proceedings of Machine Learning Research}, pp.\  2778--2787. {PMLR},
  2017.

\bibitem[Petrik et~al.(2016)Petrik, Ghavamzadeh, and Chow]{Petrik2016}
Marek Petrik, Mohammad Ghavamzadeh, and Yinlam Chow.
\newblock Safe policy improvement by minimizing robust baseline regret.
\newblock In \emph{Proceedings of the 29th Advances in Neural Information
  Processing Systems (NIPS)}, 2016.

\bibitem[Pomerleau(1991)]{DBLP:journals/neco/Pomerleau91}
Dean Pomerleau.
\newblock Efficient training of artificial neural networks for autonomous
  navigation.
\newblock \emph{Neural Comput.}, 3:\penalty0 88--97, 1991.

\bibitem[Rajeswaran et~al.(2018)Rajeswaran, Kumar, Gupta, Vezzani, Schulman,
  Todorov, and Levine]{DBLP:conf/rss/RajeswaranKGVST18}
Aravind Rajeswaran, Vikash Kumar, Abhishek Gupta, Giulia Vezzani, John
  Schulman, Emanuel Todorov, and Sergey Levine.
\newblock Learning complex dexterous manipulation with deep reinforcement
  learning and demonstrations.
\newblock In \emph{Robotics: Science and Systems XIV, Carnegie Mellon
  University, Pittsburgh, Pennsylvania, USA, June 26-30, 2018}, 2018.

\bibitem[Rashidinejad et~al.(2021)Rashidinejad, Zhu, Ma, Jiao, and
  Russell]{DBLP:conf/nips/RashidinejadZMJ21}
Paria Rashidinejad, Banghua Zhu, Cong Ma, Jiantao Jiao, and Stuart Russell.
\newblock Bridging offline reinforcement learning and imitation learning: {A}
  tale of pessimism.
\newblock In \emph{Advances in Neural Information Processing Systems 34: Annual
  Conference on Neural Information Processing Systems 2021, NeurIPS 2021,
  December 6-14, 2021, virtual}, pp.\  11702--11716, 2021.

\bibitem[Satija et~al.(2021)Satija, S.~Thomas, Pineau, and Laroche]{Satija2021}
Harsh Satija, Philip S.~Thomas, Joelle Pineau, and Romain Laroche.
\newblock Multi-objective spibb: Seldonian offline policy improvement with
  safety constraints in finite mdps.
\newblock In \emph{Proceedings of the 35th Advances in Neural Information
  Processing Systems (NeurIPS)}, October 2021.

\bibitem[Schwarzer et~al.(2021{\natexlab{a}})Schwarzer, Rajkumar, Noukhovitch,
  Anand, Charlin, Hjelm, Bachman, and
  Courville]{DBLP:conf/nips/SchwarzerRNACHB21}
Max Schwarzer, Nitarshan Rajkumar, Michael Noukhovitch, Ankesh Anand, Laurent
  Charlin, R.~Devon Hjelm, Philip Bachman, and Aaron~C. Courville.
\newblock Pretraining representations for data-efficient reinforcement
  learning.
\newblock In \emph{Advances in Neural Information Processing Systems 34: Annual
  Conference on Neural Information Processing Systems 2021, NeurIPS 2021,
  December 6-14, 2021, virtual}, pp.\  12686--12699, 2021{\natexlab{a}}.

\bibitem[Schwarzer et~al.(2021{\natexlab{b}})Schwarzer, Rajkumar, Noukhovitch,
  Anand, Charlin, Hjelm, Bachman, and
  Courville]{DBLP:journals/corr/abs-2106-04799}
Max Schwarzer, Nitarshan Rajkumar, Michael Noukhovitch, Ankesh Anand, Laurent
  Charlin, R.~Devon Hjelm, Philip Bachman, and Aaron~C. Courville.
\newblock Pretraining representations for data-efficient reinforcement
  learning.
\newblock \emph{CoRR}, abs/2106.04799, 2021{\natexlab{b}}.

\bibitem[Schweighofer et~al.(2021)Schweighofer, Hofmarcher, Dinu, Renz,
  Bitto{-}Nemling, Patil, and Hochreiter]{DBLP:journals/corr/abs-2111-04714}
Kajetan Schweighofer, Markus Hofmarcher, Marius{-}Constantin Dinu, Philipp
  Renz, Angela Bitto{-}Nemling, Vihang~P. Patil, and Sepp Hochreiter.
\newblock Understanding the effects of dataset characteristics on offline
  reinforcement learning.
\newblock \emph{CoRR}, abs/2111.04714, 2021.

\bibitem[Shelhamer et~al.(2017)Shelhamer, Mahmoudieh, Argus, and
  Darrell]{DBLP:conf/iclr/ShelhamerMAD17}
Evan Shelhamer, Parsa Mahmoudieh, Max Argus, and Trevor Darrell.
\newblock Loss is its own reward: Self-supervision for reinforcement learning.
\newblock In \emph{5th International Conference on Learning Representations,
  {ICLR} 2017, Toulon, France, April 24-26, 2017, Workshop Track Proceedings}.
  OpenReview.net, 2017.

\bibitem[Sim\~ao et~al.(2020)Sim\~ao, Laroche, and Tachet~des
  Combes]{Simao2020}
Thiago~D. Sim\~ao, Romain Laroche, and R\'emi Tachet~des Combes.
\newblock Safe policy improvement with estimated baseline bootstrapping.
\newblock In \emph{Proceedings of the 19th International Conference on
  Autonomous Agents and Multi-Agent Systems (AAMAS)}, 2020.

\bibitem[Stooke et~al.(2021)Stooke, Lee, Abbeel, and
  Laskin]{DBLP:conf/icml/StookeLAL21}
Adam Stooke, Kimin Lee, Pieter Abbeel, and Michael Laskin.
\newblock Decoupling representation learning from reinforcement learning.
\newblock In \emph{Proceedings of the 38th International Conference on Machine
  Learning, {ICML} 2021, 18-24 July 2021, Virtual Event}, volume 139 of
  \emph{Proceedings of Machine Learning Research}, pp.\  9870--9879. {PMLR},
  2021.

\bibitem[Todorov et~al.(2012)Todorov, Erez, and
  Tassa]{DBLP:conf/iros/TodorovET12}
Emanuel Todorov, Tom Erez, and Yuval Tassa.
\newblock Mujoco: {A} physics engine for model-based control.
\newblock In \emph{2012 {IEEE/RSJ} International Conference on Intelligent
  Robots and Systems, {IROS} 2012}, pp.\  5026--5033. {IEEE}, 2012.

\bibitem[van~den Oord et~al.(2018)van~den Oord, Li, and
  Vinyals]{DBLP:journals/corr/abs-1807-03748}
A{\"{a}}ron van~den Oord, Yazhe Li, and Oriol Vinyals.
\newblock Representation learning with contrastive predictive coding.
\newblock \emph{CoRR}, abs/1807.03748, 2018.

\bibitem[Wang et~al.(2022)Wang, Miahi, White, Machado, Abbas, Kumaraswamy, Liu,
  and White]{DBLP:journals/corr/abs-2203-15955}
Han Wang, Erfan Miahi, Martha White, Marlos~C. Machado, Zaheer Abbas, Raksha
  Kumaraswamy, Vincent Liu, and Adam White.
\newblock Investigating the properties of neural network representations in
  reinforcement learning.
\newblock \emph{CoRR}, abs/2203.15955, 2022.

\bibitem[Wang et~al.(2018)Wang, Zhang, He, and Zha]{DBLP:conf/kdd/WangZHZ18}
Lu~Wang, Wei Zhang, Xiaofeng He, and Hongyuan Zha.
\newblock Supervised reinforcement learning with recurrent neural network for
  dynamic treatment recommendation.
\newblock In \emph{Proceedings of the 24th {ACM} {SIGKDD} International
  Conference on Knowledge Discovery {\&} Data Mining, {KDD} 2018, London, UK,
  August 19-23, 2018}, pp.\  2447--2456. {ACM}, 2018.

\bibitem[Wang \& Isola(2020)Wang and Isola]{DBLP:conf/icml/0001I20}
Tongzhou Wang and Phillip Isola.
\newblock Understanding contrastive representation learning through alignment
  and uniformity on the hypersphere.
\newblock In \emph{Proceedings of the 37th International Conference on Machine
  Learning, {ICML} 2020, 13-18 July 2020, Virtual Event}, volume 119 of
  \emph{Proceedings of Machine Learning Research}, pp.\  9929--9939. {PMLR},
  2020.

\bibitem[Wu et~al.(2019)Wu, Tucker, and
  Nachum]{DBLP:journals/corr/abs-1911-11361}
Yifan Wu, George Tucker, and Ofir Nachum.
\newblock Behavior regularized offline reinforcement learning.
\newblock \emph{CoRR}, abs/1911.11361, 2019.

\bibitem[Xiao et~al.(2022)Xiao, Lee, Dai, Schuurmans, and Szepesvari]{Xiao2022}
Chenjun Xiao, Ilbin Lee, Bo~Dai, Dale Schuurmans, and Csaba Szepesvari.
\newblock The curse of passive data collection in batch reinforcement learning.
\newblock In \emph{Proceedings of the 25th International Conference on
  Artificial Intelligence and Statistics (AISTATS)}, 2022.

\bibitem[Xu et~al.(2014)Xu, Huang, Graves, and
  Pedrycz]{DBLP:journals/tcyb/XuHGP14}
Xin Xu, Zhenhua Huang, Daniel Graves, and Witold Pedrycz.
\newblock A clustering-based graph laplacian framework for value function
  approximation in reinforcement learning.
\newblock \emph{{IEEE} Trans. Cybern.}, 44:\penalty0 2613--2625, 2014.

\bibitem[Yang \& Nachum(2021)Yang and Nachum]{DBLP:conf/icml/YangN21}
Mengjiao Yang and Ofir Nachum.
\newblock Representation matters: Offline pretraining for sequential decision
  making.
\newblock In \emph{Proceedings of the 38th International Conference on Machine
  Learning, {ICML} 2021, 18-24 July 2021, Virtual Event}, volume 139 of
  \emph{Proceedings of Machine Learning Research}, pp.\  11784--11794. {PMLR},
  2021.

\bibitem[Yarats et~al.(2021{\natexlab{a}})Yarats, Fergus, Lazaric, and
  Pinto]{DBLP:conf/icml/YaratsFLP21}
Denis Yarats, Rob Fergus, Alessandro Lazaric, and Lerrel Pinto.
\newblock Reinforcement learning with prototypical representations.
\newblock In \emph{Proceedings of the 38th International Conference on Machine
  Learning, {ICML} 2021, 18-24 July 2021, Virtual Event}, volume 139 of
  \emph{Proceedings of Machine Learning Research}, pp.\  11920--11931. {PMLR},
  2021{\natexlab{a}}.

\bibitem[Yarats et~al.(2021{\natexlab{b}})Yarats, Zhang, Kostrikov, Amos,
  Pineau, and Fergus]{DBLP:conf/aaai/Yarats0KAPF21}
Denis Yarats, Amy Zhang, Ilya Kostrikov, Brandon Amos, Joelle Pineau, and Rob
  Fergus.
\newblock Improving sample efficiency in model-free reinforcement learning from
  images.
\newblock In \emph{Thirty-Fifth {AAAI} Conference on Artificial Intelligence,
  {AAAI} 2021, Thirty-Third Conference on Innovative Applications of Artificial
  Intelligence, {IAAI} 2021, The Eleventh Symposium on Educational Advances in
  Artificial Intelligence, {EAAI} 2021, Virtual Event, February 2-9, 2021},
  pp.\  10674--10681. {AAAI} Press, 2021{\natexlab{b}}.

\bibitem[Yu et~al.(2020)Yu, Thomas, Yu, Ermon, Zou, Levine, Finn, and
  Ma]{Yu2020}
Tianhe Yu, Garrett Thomas, Lantao Yu, Stefano Ermon, James~Y Zou, Sergey
  Levine, Chelsea Finn, and Tengyu Ma.
\newblock {MOPO}: Model-based offline policy optimization.
\newblock In \emph{Proceedings of the 34th Annual Conference on Neural
  Information Processing Systems (NeurIPS)}, volume~33, pp.\  14129--14142,
  2020.

\bibitem[Yu et~al.(2021)Yu, Kumar, Rafailov, Rajeswaran, Levine, and
  Finn]{Yu2021}
Tianhe Yu, Aviral Kumar, Rafael Rafailov, Aravind Rajeswaran, Sergey Levine,
  and Chelsea Finn.
\newblock {COMBO}: Conservative offline model-based policy optimization.
\newblock In \emph{Proceedings of the 35th Annual Conference on Neural
  Information Processing Systems (NeurIPS)}, volume~34, 2021.

\bibitem[Yurtsever et~al.(2020)Yurtsever, Lambert, Carballo, and
  Takeda]{DBLP:journals/access/YurtseverLCT20}
Ekim Yurtsever, Jacob Lambert, Alexander Carballo, and Kazuya Takeda.
\newblock A survey of autonomous driving: Common practices and emerging
  technologies.
\newblock \emph{{IEEE} Access}, 8:\penalty0 58443--58469, 2020.

\bibitem[Zang et~al.(2022)Zang, Li, and Wang]{DBLP:conf/aaai/Zang0W22}
Hongyu Zang, Xin Li, and Mingzhong Wang.
\newblock Simsr: Simple distance-based state representations for deep
  reinforcement learning.
\newblock In \emph{Thirty-Sixth {AAAI} Conference on Artificial Intelligence,
  {AAAI} 2022, Thirty-Fourth Conference on Innovative Applications of
  Artificial Intelligence, {IAAI} 2022, The Twelveth Symposium on Educational
  Advances in Artificial Intelligence, {EAAI} 2022 Virtual Event, February 22 -
  March 1, 2022}, pp.\  8997--9005. {AAAI} Press, 2022.

\bibitem[Zhang et~al.(2021{\natexlab{a}})Zhang, McAllister, Calandra, Gal, and
  Levine]{DBLP:conf/iclr/0001MCGL21}
Amy Zhang, Rowan~Thomas McAllister, Roberto Calandra, Yarin Gal, and Sergey
  Levine.
\newblock Learning invariant representations for reinforcement learning without
  reconstruction.
\newblock In \emph{9th International Conference on Learning Representations,
  {ICLR} 2021, Virtual Event, Austria, May 3-7, 2021}. OpenReview.net,
  2021{\natexlab{a}}.

\bibitem[Zhang et~al.(2021{\natexlab{b}})Zhang, Kuppannagari, and
  Prasanna]{DBLP:conf/acml/ZhangKP21}
Chi Zhang, Sanmukh~R. Kuppannagari, and Viktor~K. Prasanna.
\newblock {BRAC+:} improved behavior regularized actor critic for offline
  reinforcement learning.
\newblock In \emph{Asian Conference on Machine Learning, {ACML} 2021, 17-19
  November 2021, Virtual Event}, volume 157 of \emph{Proceedings of Machine
  Learning Research}, pp.\  204--219. {PMLR}, 2021{\natexlab{b}}.

\end{thebibliography}
\bibliographystyle{iclr2023_conference}

\clearpage

\appendix

\begin{center}
\textbf{\Large Appendix}
\end{center}
\addtocontents{toc}{\protect\setcounter{tocdepth}{2}}
\tableofcontents
\clearpage
\section{Notation}
Table \ref{tab:symb} summarizes our notation.

\begin{table}[hbp]
\centering
\caption{Table of Notation. }
\begin{tabular}{@{}cccc@{}}
\toprule
Notation                    & Meaning                                  & Notation                    & Meaning                            \\ \midrule
$\phi$ & state encoder network                    & $\mathcal{A}$               & action space \\
$f$                         & predictor network & $\mathcal{Z}$               & state representation space         \\                        
$\rho$                      & initial state distribution               & $n_s$                       & dimension of state space           \\
$\zeta$                     & effective dimension                      & $n_a$                       & dimension of action space          \\
$\gamma$                    & discounted factor                        & $T$                         & transition probability function    \\
$s$                         & state                                    & $\mathcal{D}$               & offline dataset                    \\
$a$                         & action                                   & $\Psi$                      & feature matrix                     \\
$r$                         & reward                                   & $\pi_\beta(a|s)$            & behavior policy of offline dataset \\
$d$                         & the feature dimension                    & $\pi_{\hat{\beta}}(a|s)$    & the estimated behavior policy      \\
$y$                         & state prediction                         & $d^{\pi_\beta}(s)$          & discounted state occupancy density \\
$z$                         & state representation                     & $V^{\pi}(s)$                & state value function               \\
$n$                         & the numbers of datapoints & $\mathcal{J}(\pi)$          & the performance of policy $\pi$    \\
$\mathcal{S}$               & state space                              & $\mathcal{J}_{\mybot}(\pi)$ & the lower bound of the performance \\ \bottomrule
\end{tabular}
\label{tab:symb}
\end{table}

\section{Additional Related Work}
\label{app:related_work}
\paragraph{Representation learning in Online RL}
State representation learning lies at the heart of the empirical successes of deep RL. Traditionally, state representation learning has been framed as learning state abstractions / aggregations~\citep{DBLP:conf/aaai/AndreR02, DBLP:conf/uai/FernsCPP06, DBLP:conf/icml/MannorMHK04, DBLP:conf/qest/ComaniciPP12}, and most of these methods aim to reduce the original state space size and to minimize the system complexity. Recent studies present promising results in learning robust representations that can then be used to accelerate policy learning in pixel-based observation spaces. \citet{ DBLP:conf/nips/LeeNAL20, DBLP:conf/aaai/Yarats0KAPF21} propose to train deep auto-encoders with a reconstruction loss to learn compact representations, \citet{DBLP:conf/iclr/ShelhamerMAD17, DBLP:conf/icml/HafnerLFVHLD19} learn state representations from predictive losses to maintain trajectory consistency, \citet{DBLP:journals/corr/abs-1807-03748, DBLP:conf/icml/LaskinSA20, DBLP:conf/icml/StookeLAL21} use contrastive losses as auxiliary tasks to improve performance, \citet{DBLP:journals/tcyb/XuHGP14, DBLP:journals/corr/KrishnamurthyLK16, DBLP:conf/icml/YaratsFLP21} develop state clustering methods to improve sample-efficiency, and finally \citet{DBLP:conf/iclr/0001MCGL21, castro2021mico, DBLP:conf/aaai/Zang0W22} measure state distance by bisimulation similarity to shape state representations.

\section{Comparison}
\subsection{Behavior Cloning}
\begin{figure*}[htbp]
\centering
\centering
\includegraphics[width=0.7\textwidth]{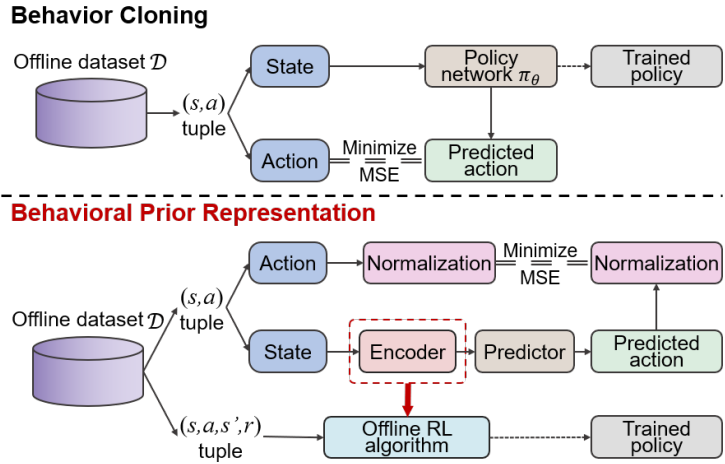}
\caption{A summary of the BPR algorithm shows its difference from Behavior Cloning. BPR focus on learning state representation to benefit the downstream Offline RL algorithms instead of concentrating on learning the behavior policy. }
\label{app:fig:BPR_arc}
\end{figure*}

One would consider BPR similar to the behavior cloning method that imitates the behavior policy of the data since they both learn a projection from state to action. Despite resembling behavior cloning in form, we emphasize that the goal of BPR is not to approximate the exact behavior policy. Instead, it is a two-stage process that reutilizes the learned encoder on downstream tasks to accelerate policy learning, while discarding the initial projection layer. Additionally, this allows us to apply $\ell_2$-normalization to the prediction and action rather than using their true values. This normalization has been shown to be important for representation learning~\citep{DBLP:conf/icml/0001I20,DBLP:conf/nips/GrillSATRBDPGAP20, DBLP:conf/aaai/Zang0W22}. The difference is illustrated in Figure \ref{app:fig:BPR_arc}.

\subsection{$\pi^*$-irrelevance abstraction}

A $\pi^*$-irrelevance abstraction $\phi_{\pi^*}$~\citep{DBLP:conf/ijcai/JongS05, DBLP:conf/isaim/LiWL06} is such that every abstract class has an action $a^*$ that is optimal for all the states in that class that is  $\phi_{\pi^*}(s_1) = \phi_{\pi^*}(s_2)$ implies that $Q^*(s_1,a^*)=\max_a Q^*(s_1,a)$ and $Q^*(s_2,a^*)=\max_a Q^*(s_2,a)$, which attempt to preserve the optimal action\footnote{To keep the notation aligned well with ~\citet{ DBLP:conf/isaim/LiWL06}, we will replace $\pi^\star$ with $\pi^*$, and replace $\mathcal{M}$ with $M$ in this section.}. In ~\citet{DBLP:conf/isaim/LiWL06}, $\pi^*$-irrelevance abstraction was applied on Q-learning and was proven that the induced value function could not converge to the optimal in the ground MDP. While comparing with BPR, the biggest difference is that: in theory, as described in Assumption~\ref{assum:b}.2, we use the BPR representation only for the policy search, while ~\citet{DBLP:conf/isaim/LiWL06} assumes that the value function should be estimated for learning reasonable Q value in representation space. As a consequence, ~\citet{ DBLP:conf/isaim/LiWL06} requires much stronger assumptions for the policy abstractions, such as bisimulation or invariance for the optimal value (policy). 
As an example, consider an MDP as shown in Figure~\ref{app:fig:BPR_toy} (an example is taken from the MDP investigated by ~\citet{DBLP:conf/isaim/LiWL06}): for $\pi^*$-irrelevance abstraction, it induces an abstract MDP $\bar{M}=<\bar{S},A,\bar{P},\bar{R},\gamma>$, where we apply the q learning on the induced $\bar{M}$. Under this abstract MDP, since $S_1$ and $S_2$ are accidentally aggregated to one state abstraction, the abstract reward $\bar{R}$ and the abstract transition $\bar{P}$ will be estimated mistakenly, therefore the q value cannot be updated to the optimal. 
Notably, the value estimation we perform on the original input space, which belongs to the ground MDP state space, differs from the value estimation on the representation space that raises issues that can only be alleviated with the drastic assumptions in ~\citet{DBLP:conf/isaim/LiWL06}.

\begin{figure*}[htbp]
\centering
\centering
\includegraphics[width=0.4\textwidth]{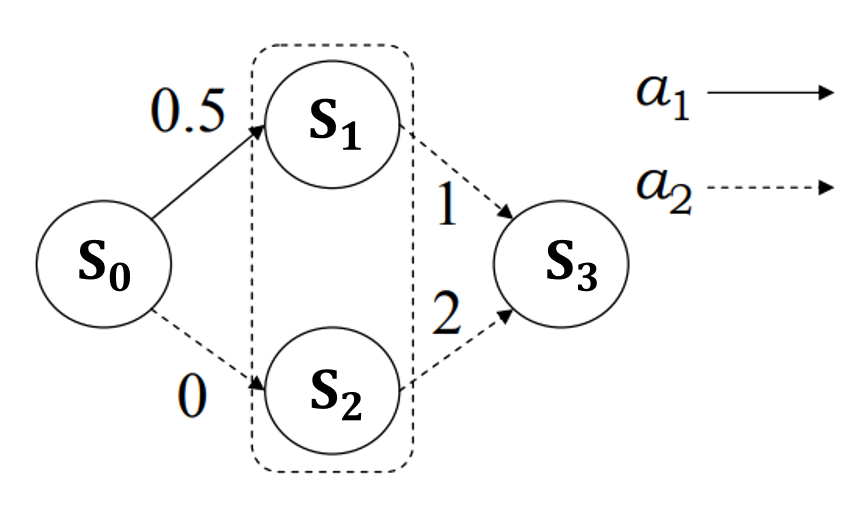}
\caption{The solid
and dashed lines represent actions 1 and 2, respectively. The corresponding graph shows the value function in the aggregated (middle) state for each of its actions. $\phi_{\pi^*}$ yields an optimal policy for $\bar{M}$ that is suboptimal in $M$, while BPR can still find the optimal policy in $M$ by policy search.}
\label{app:fig:BPR_toy}
\end{figure*}

\subsection{Fourier}
\citet{DBLP:conf/nips/NachumY21} developed a representation objective that combines contrastive learning with the linear dynamic model (i.e., a learnable function whose input is the pair of next state and action $(s',a)$, the output is a representation of the current state), where the contrastive objective is approximated by leveraging the Random Fourier Features. Though effective, this method assumes the underlying dynamic model is linear, while the BPR objective does not show reliance on such an assumption. Since the BPR objective can be oblivious to the reward function and transition model, it can be applied to a wild range of domains where the reward or the transition could be hard to approximate. On the other hand, without modeling the reward function or transition dynamics, the BPR objective is slightly theoretically weaker (see the counterexample in the Appendix). A combination of BPR and the methods like Fourier should be further investigated in future work, which will provide a more theoretically grounded yet still simple approach.

\subsection{ACL}
~\citet{DBLP:conf/icml/YangN21} introduced Attentive Contrastive Learning (ACL) to learn state representation, which uses the transformer-based architecture as a skeleton, where a subset of sub-trajectories are randomly masked to make predictions. Similar to the Fourier approach described above, ACL also utilizes a contrastive learning objective to update its feature mapping, while further reconstructing the action and the reward to stabilize the training. BPR, on the other hand, can be seen as a representation objective that, with only a reconstruction module for predicting actions, is far more simple than the transformer-based module that requires sequential prediction. Another potential benefit of the BPR objective is that it can possibly integrate into other kinds of representation objectives due to its independence from the encoder architecture. 

\section{Effective dimension and feature rank}
\label{sec:effective_dim}

The desirable state representation should not only be able to guide a good policy learning procedure, but also be compact enough to provide concise yet effective information. We focus on measuring the ``compactness'' of the state representation to show that BPR suffices for the agent to benefit from learning representation with auxiliary representation loss, which can alleviate the ``implicit under-parameterization'' phenomenon.

\begin{assumption}
\label{assum_feat_reach}
(Feature reachability) Denote $\lambda_{\min}(A)$ as the smallest eigenvalue of a positive-definite matrix A. With a mapping function $\phi^{\star}$, we assume that there exists $\epsilon \in \mathbb{R}^{+}$ such that, 
\begin{equation}
    \sup _{\pi} \lambda_{\min }\left(\mathbb{E}_{s\sim d^{\pi_\beta}(s)}\left[\phi^{\star}(s) \phi^{\star}(s)^{\top}\right]\right) \geq \epsilon.
\end{equation}
\end{assumption}
Assumption~\ref{assum_feat_reach} posits that in MDP $\mathcal{M}$, for each latent state in representation space $\mathbb{R}^{d}$, there exists a policy that reaches it with a non-zero probability. This ensures the optimal state representation should have no redundant dimensions, which is a reasonable assumption for a compact latent space. While a similar assumption is commonly made in previous work~\citep{DBLP:journals/corr/abs-2205-14571,DBLP:journals/corr/abs-2102-07035}, seldom of them consider leveraging it to measure the feature effectiveness. Based on this assumption, we define the following measurement, inspired by the Feature rank defined in~\cite{DBLP:conf/iclr/LyleRD22}:
\begin{definition}
\label{def:app_effective_dimension} \textbf{Effective Dimension} Let $\operatorname{ED}(M)$ denote the multiset of eigenvalues of a square matrix $M$. Then the effective dimension of $\Psi$ is defined to be
\begin{equation}
\label{eq:app_feat_effct}
\begin{aligned}
     \zeta(\Psi,\mathcal{D}, \epsilon)=\mathbb{E}_{\mathcal{D}}\Big[\Big|\Big\{\sigma\in ED\left( \left(\frac{1}{\sqrt{|\mathcal{S}||\mathcal{A}|}}\Psi^{\top}\right)\left(\frac{1}{\sqrt{|\mathcal{S}||\mathcal{A}|}}\Psi\right)\right)|\sigma>\epsilon\Big\}\Big|\Big],
\end{aligned}
\end{equation}
where $\Psi\in\mathbb{R}^{|\mathcal{S}||\mathcal{A}|\times d}$ is the feature matrix with respect to the output of the penultimate layer of the state-action value network.
\end{definition}

\begin{lemma}
Let $\mathbf{X}_n \subset \mathbb{R}^{|\mathcal{S}||\mathcal{A}|}$ be a set of $n$ state-action pairs in $\mathbb{R}^{|\mathcal{S}||\mathcal{A}|}$ sampled from a fixed distribution $d^{\pi_{\beta}}(s,a)$, and the corresponding feature matrix being $\Psi_n$, a consistent estimator can be conducted as:
\begin{equation}
\hat{\zeta}(\Psi,\mathcal{D}, \epsilon)=\Big|\Big\{\sigma\in ED\left( \left(\frac{1}{\sqrt{n}}\Psi_{n}\right)^{\top}\left(\frac{1}{\sqrt{n}}\Psi_n\right)\right)|\sigma>\epsilon\Big\}\Big|.
\end{equation}
\end{lemma}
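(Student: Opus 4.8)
The plan is to prove consistency --- that $\hat\zeta(\Psi,\mathcal{D},\epsilon)$ converges in probability to $\zeta(\Psi,\mathcal{D},\epsilon)$ as $n\to\infty$ --- by reducing the statement to the convergence of an empirical second-moment matrix together with the continuity of the eigenvalue-counting functional. Fixing the trained value network (so that the feature map is deterministic), write $\psi(x)\in\mathbb{R}^d$ for the penultimate-layer feature of a state-action pair $x=(s,a)$, so that the $i$-th row of $\Psi_n$ is $\psi(x_i)^\top$ and
\[
\hat\Sigma_n \doteq \Big(\tfrac{1}{\sqrt n}\Psi_n\Big)^{\!\top}\Big(\tfrac{1}{\sqrt n}\Psi_n\Big)=\frac1n\sum_{i=1}^n \psi(x_i)\psi(x_i)^\top .
\]
The corresponding population object is the $d\times d$ matrix $\Sigma\doteq\mathbb{E}_{x\sim d^{\pi_\beta}}[\psi(x)\psi(x)^\top]$, which we identify with the normalized Gram matrix $\tfrac{1}{|\mathcal{S}||\mathcal{A}|}\Psi^\top\Psi$ of Definition~\ref{def:app_effective_dimension} once the averaging measure there is taken to be the sampling density $d^{\pi_\beta}$. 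Both $\hat\zeta$ and $\zeta$ are then the same functional $N_\epsilon(M)\doteq\big|\{\sigma\in\operatorname{ED}(M):\sigma>\epsilon\}\big|$ evaluated at $\hat\Sigma_n$ and at $\Sigma$ respectively, so it suffices to show $N_\epsilon(\hat\Sigma_n)\to N_\epsilon(\Sigma)$ in probability.

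First I would establish the matrix law of large numbers $\hat\Sigma_n\to\Sigma$. Each entry $(\hat\Sigma_n)_{jk}=\tfrac1n\sum_{i}\psi_j(x_i)\psi_k(x_i)$ is an empirical mean of i.i.d.\ terms with expectation $\Sigma_{jk}$; over a finite state-action space the feature map takes finitely many values (and in general bounded second moments suffice), so the summands are bounded and the weak law gives $(\hat\Sigma_n)_{jk}\to\Sigma_{jk}$ in probability. Since $d$ is fixed and finite, entrywise convergence upgrades to operator-norm convergence $\|\hat\Sigma_n-\Sigma\|_{\mathrm{op}}\to0$ in probability, by a union bound over the $d^2$ entries (a Hoeffding or matrix-Bernstein bound would additionally yield an explicit rate).

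Next I would transfer this to the spectrum. Both matrices are symmetric positive semi-definite, so Weyl's perturbation inequality gives $|\sigma_j(\hat\Sigma_n)-\sigma_j(\Sigma)|\le\|\hat\Sigma_n-\Sigma\|_{\mathrm{op}}$ for each ordered eigenvalue index $j$, whence $\sigma_j(\hat\Sigma_n)\to\sigma_j(\Sigma)$ in probability. It then remains to pass from this eigenvalue convergence to convergence of the count $N_\epsilon(M)=\sum_{j=1}^d\mathbf{1}[\sigma_j(M)>\epsilon]$.

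This last step is the main obstacle, because the indicator $\mathbf{1}[\,\cdot>\epsilon\,]$ is discontinuous exactly at the threshold: the count is stable only when no limiting eigenvalue sits at $\epsilon$. I would therefore impose the generic condition $\epsilon\notin\operatorname{ED}(\Sigma)$, which rules out at most $d$ values and hence holds for Lebesgue-almost-every choice of the hyperparameter $\epsilon$. Setting $\eta\doteq\min_j|\sigma_j(\Sigma)-\epsilon|>0$, on the event $\{\|\hat\Sigma_n-\Sigma\|_{\mathrm{op}}<\eta\}$ Weyl's bound forces every $\sigma_j(\hat\Sigma_n)$ onto the same side of $\epsilon$ as $\sigma_j(\Sigma)$, so $N_\epsilon(\hat\Sigma_n)=N_\epsilon(\Sigma)$ identically; since this event has probability tending to $1$, we conclude $\hat\zeta\to\zeta$ in probability, establishing consistency. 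If consistency is wanted for every $\epsilon$ without the genericity caveat, the natural remedy is to smooth the count over a vanishing window around $\epsilon$ or to restrict the claim to continuity points of the limiting spectral distribution.
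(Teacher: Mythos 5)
Your proposal follows the same core route as the paper's proof: rewrite the normalized Gram matrix as the empirical second-moment matrix $\frac{1}{n}\sum_{i}\psi(x_i)\psi(x_i)^{\top}$, apply the law of large numbers entrywise to obtain convergence to $\Sigma=\mathbb{E}[\psi(x)\psi(x)^{\top}]$ (the paper invokes the strong law and almost-sure convergence; you use the weak law and convergence in probability --- either suffices for consistency), and then pass to the spectrum. The difference is in how far each argument goes. The paper stops at the assertion that, since eigenvalues are continuous functions of the matrix, the eigenvalues of the empirical matrix converge to those of $\Sigma$; it never addresses the step you correctly single out as the main obstacle, namely that the counting functional $\sigma\mapsto\mathbf{1}[\sigma>\epsilon]$ is discontinuous at the threshold, so eigenvalue convergence alone does not yield convergence of the count $\hat{\zeta}\to\zeta$ if some limiting eigenvalue sits exactly at $\epsilon$. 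Your patch --- assuming the generic condition $\epsilon\notin\operatorname{ED}(\Sigma)$, setting $\eta=\min_j\lvert\sigma_j(\Sigma)-\epsilon\rvert>0$, and using Weyl's perturbation inequality to force every empirical eigenvalue onto the same side of the threshold once $\lVert\hat{\Sigma}_n-\Sigma\rVert_{\mathrm{op}}<\eta$ --- is exactly what is needed to close that gap, and is a genuine tightening of the printed proof. Your observation that the population matrix must be identified with the definition's normalized Gram matrix $\frac{1}{|\mathcal{S}||\mathcal{A}|}\Psi^{\top}\Psi$ by reading the averaging measure as the sampling density $d^{\pi_\beta}$ is also apt: the paper makes that identification silently, even though the estimator's samples are drawn from $d^{\pi_\beta}$ rather than uniformly over state-action pairs.
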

\begin{proof}
The following proof mimics the derivation in ~\citet{DBLP:conf/iclr/LyleRD22}. Recall that
\begin{equation}
    \left(\frac{1}{\sqrt{n}}\Psi_{n}\right)^{\top} \left(\frac{1}{\sqrt{n}}\Psi_{n}\right)=\frac{1}{n}\sum_{i=1}^n\psi(x_i)\psi(x_i)^{\top}.
\end{equation}
The following property of the expected value holds
\begin{equation}
    \mathbb{E}_{\mathcal{D}}\left[\psi(x)\psi(x)^{\top}\right]=\mathbb{E}\left[\frac{1}{n}\sum_{i=1}^n\psi(x_i)\psi(x_i)^{\top}\right].
\end{equation}
Then, consider each element of the matrix $M$:
\begin{equation}
\label{eq12}
    \mathbb{E}\left[\left(\psi(x) \psi(x)^{\top}\right)_{i j}\right]=M_{i j}=\mathbb{E}\left[\psi_i(x) \psi_j(x)\right] \Longrightarrow \sum_{k=1}^n \frac{1}{n} \psi_i\left(x_k\right) \psi_j\left(x_k\right) \stackrel{a . s .}{\rightarrow} M_{i j}
\end{equation}
Since we have convergence for any $M_{ij}$, we get convergence of the resulting matrix to $M$. And since the eigenvalues are continuous functions of $M$, the eigenvalues of $M_n$ converge to those of $M$ almost surely.
\end{proof}

Intuitively, it possesses a similar equivalent form of feature reachability. Empirically, we set $\epsilon$ as 0.01 where only higher eigenvalues are considered, eliminating the distraction caused by small noise. With the effective dimensions measurement, we can show that BPR can stabilize the training procedure, and provides a higher effective feature dimension for the value network, thus helping to alleviate the ``implicit under-parameterization'' phenomenon. Notably, Equation~\ref{eq:app_feat_effct} can also be used to measure the effective dimension of the state representation, i.e., $\zeta(\Phi,\mathcal{D}, \epsilon)$, when we substitute the feature matrix $\Psi$ by the representation matrix $\Phi\in\mathbb{R}^{|\mathcal{S}|\times |\mathcal{Z}|}$ where $\mathcal{Z}$ is the latent state representation space.

\paragraph{Connection to the effective dimension proposed in \citet{DBLP:conf/aistats/LanTOAB22}} To develop a measurement of the effective dimension of state representation, \citet{DBLP:conf/aistats/LanTOAB22} borrow the concept of $\mu$-coherence~\citep{DBLP:journals/focm/CandesR09, DBLP:journals/jmlr/MohriT11}, which is related to the statistical leverage scores~\citep{DBLP:conf/soda/DrineasMM06, DBLP:conf/icml/MahoneyDMW12}. 
\begin{definition}
Given an arbitrary $n\times d$ matrix $A$, with $n>d$ and $rank(A)=r$, let $U$ denote the $n\times d$ matrix consisting of the $d$ left singular vectors of $A$, and let $U_{(i)}$ denote the $i$-th row of the matrix $U$ as a row vector. Then the statistical leverage scores are given by $\|U_{(i)}\|_2^2$, for $i\in\{1,\cdots,n\}$; the coherence $\mu$ is:
\begin{equation}
    \mu(U) = \max_{i\in\{1,\cdots,n\}}\lVert U_{(i)}\rVert^2,
\end{equation}
\textit{i.e.,} the largest statistical leverage score. When we let $P_U$ be the orthogonal projection onto $U$ and $(e_i)$ the canonical basis, we can define $\mu^{'}$-coherence as:
\begin{equation}
\begin{aligned}
    \mu^{'}(U)&= n \max_{i\in\{1,\cdots,n\}}\lVert P_U e_i\rVert^2 = n \max_{i\in\{1,\cdots,n\}}\lVert U U^T e_i \rVert^2\\& = n \max_{i\in\{1,\cdots,n\}}\lVert U^T e_i \rVert^2 = n \max_{i\in\{1,\cdots,n\}}\lVert U_{(i)} \rVert^2.
\end{aligned}
\end{equation}
\end{definition}
Following the above definition,~\citet{DBLP:conf/aistats/LanTOAB22} defined the effective dimension as:
\begin{definition}[Effective dimension in \citet{DBLP:conf/aistats/LanTOAB22}]
Let $\Phi \in \R^{S \times k}$ be a feature matrix.
The effective dimension of $\Phi$ (vis-a-vis the standard basis $(e_i)$)
is defined as the quantity
\begin{equation}
\label{eq:ed_lan22}
    d_\text{eff}(\Phi) := S\max_{i=1,...,S} \lVert P_{\Phi} e_i\rVert^2_2, 
\end{equation}
where $P_{\Phi}$ is the orthogonal projector onto the column space of $\Phi$, $S$ is the number of the state.
\end{definition}
Note that for any feature matrix, the smallest $d_\text{eff}$ can be the rank of the state space, achieved, for example, if $\Phi$ is spanned by vectors whose entries all have magnitude $1/\sqrt{S}$, meaning that the feature matrix is perfectly learned. The largest possible value for $d_\text{eff}$ is $S$, which would correspond to any subspace that contains a standard basis element, meaning that the state space is full-rank. As a comparison, effective dimension in this paper and the one in~\citet{DBLP:conf/aistats/LanTOAB22} are both developed from the Singular Value Decomposition (SVD) that was generally utilized for dimensionality reduction\footnote{the singular values of the feature matrix are the eigenvalues of $M$ in Equation~\ref{eq12} when we only consider the state feature}, the difference is that we compute the number of the eigenvalue of the Gram matrix of the feature matrix as the measurement, while ~\citet{DBLP:conf/aistats/LanTOAB22} compute the largest leverage score of the feature matrix that considered as the left singular vectors of the state space. In some sense, the effective dimension in~\citet{DBLP:conf/aistats/LanTOAB22} can be seen as a variation of Equation~\ref{eq:app_feat_effct} with $\epsilon=0$, which also indicates that the small disturbance component of the feature matrix cannot be discarded in Equation~\ref{eq:ed_lan22} without prior knowledge of the rank of the state space.

\section{Proofs}
\label{sec:proof}
\subsection{Useful technical results}
We begin this section by introducing Rademacher complexity, which can be used to derive data-dependent upper bounds on the learnability of function classes.
\begin{definition}
Let $\mathcal{X}$ be any set,  $X_1, X_2, \cdots, X_n$ be i.i.d. random variables with values in  $\mathcal{X}$. We have $\mathcal{G}$ a class of functions $g:\mathcal{X}\rightarrow\mathbb{R}$. The quality
\begin{equation}
    \operatorname{Rad}(\mathcal{G}):=\mathbb{E}_{X_{j}, \sigma_{j}}\left[\sup _{g \in \mathcal{G}} \frac{1}{n} \sum_{j=1}^{n} \sigma_{j} g\left(X_{j}\right)\right]
\end{equation}
is called Rademacher complexity of the class $\mathcal{G}$ on the sample $X=(X_1,X_2,\cdots,X_n)\in\mathcal{X}^n$, where $\sigma$ is the Rademacher random variables that are drawn uniformly at random from $\{\pm 1\}$.
\end{definition}

The corresponding vectorized version of Rademacher complexity is:
\begin{equation}
    \operatorname{Rad}(\mathcal{G}):=\mathbb{E}_{X_{j}, \sigma_{j i}}\left[\sup _{g \in \mathcal{G}} \frac{1}{n} \sum_{j=1}^{n} \sum_{i=1}^{K} \sigma_{j i} g\left(X_{j}\right)_{i}\right],
\end{equation}
where  $\mathcal{G}$ is a class of functions $g:\mathcal{X}\rightarrow\mathbb{R}^K$. 

\subsection{Theoretical analysis}

\setcounter{assumption}{0}
\setcounter{theorem}{1}
\begin{assumption}
\label{app:assum:a}
(1.1)~Access to the true behavior: $\pi_{\hat{\beta}} = \pi_\beta$. (1.2)~Access to the true performance of policies: $\hat{\mathcal{J}}(\pi) = \mathcal{J}(\pi)$ for all policies $\pi$. (1.3) The embedding $\phi$ allows to represent the behavior policy estimate: $\pi_{\hat{\beta}}(a|\phi(s)) = \pi_{\hat{\beta}}(a|s) \in \Pi_\textsc{bpr}$. (1.4) The algorithm performs perfect optimization: $\mathcal{J}_\textsc{bpr} = \max_{\pi \in \Pi_\textsc{bpr}} \hat{\mathcal{J}}(\pi)$.
\end{assumption}

\begin{theorem}
    Under idealized Assumption~\ref{app:assum:a}.1-\ref{app:assum:a}.4, BPR returns a policy that improves over the behavior policy: $\mathcal{J}_\textsc{bpr}\geq\mathcal{J}(\pi_{\beta})$.
\end{theorem}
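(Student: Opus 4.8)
The plan is to exploit the fact that the idealized assumptions eliminate all estimation and optimization error, so the claim reduces to a short containment-plus-optimality argument. First I would establish that the true behavior policy is itself an admissible candidate for the BPR search, i.e.\ $\pi_\beta \in \Pi_\textsc{bpr}$. This follows by chaining two assumptions: Assumption~\ref{app:assum:a}.3 guarantees that the embedding $\phi$ is expressive enough to represent the estimated behavior, $\pi_{\hat{\beta}}(a|\phi(s)) = \pi_{\hat{\beta}}(a|s) \in \Pi_\textsc{bpr}$, and Assumption~\ref{app:assum:a}.1 identifies this estimate with the ground truth, $\pi_{\hat{\beta}} = \pi_\beta$. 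Together they place $\pi_\beta$ inside the constrained policy class.

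Next I would invoke optimality. Because $\pi_\beta$ is feasible in the maximization defining $\mathcal{J}_\textsc{bpr}$, Assumption~\ref{app:assum:a}.4 immediately gives
\begin{equation}
\mathcal{J}_\textsc{bpr} = \max_{\pi \in \Pi_\textsc{bpr}} \hat{\mathcal{J}}(\pi) \geq \hat{\mathcal{J}}(\pi_\beta).
\end{equation}
Finally, Assumption~\ref{app:assum:a}.2 replaces the estimated performance by the true one, $\hat{\mathcal{J}}(\pi_\beta) = \mathcal{J}(\pi_\beta)$, and the two relations combine to yield $\mathcal{J}_\textsc{bpr} \geq \mathcal{J}(\pi_\beta)$, as desired.

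I do not expect any genuine technical obstacle here: the entire content of the theorem is packed into the idealized assumptions, and the argument is just a correct bookkeeping of how they compose. The one point deserving care is the containment $\pi_\beta \in \Pi_\textsc{bpr}$, since this is exactly what forbids the constrained optimum from falling below the behavior's value; the realizability Assumption~\ref{app:assum:a}.3 is doing the essential work there. The genuinely substantive versions of this result come later, where Assumptions~\ref{app:assum:a}.1 and~\ref{app:assum:a}.2 are relaxed and the slack is tracked through the additive terms $\epsilon_\beta$, $\epsilon_\Delta$, and $\epsilon_{\mybot}$.
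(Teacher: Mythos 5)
Your proof is correct and follows essentially the same route as the paper's: both arguments combine Assumption~\ref{app:assum:a}.4 (perfect optimization), Assumption~\ref{app:assum:a}.2 (exact performance estimates), and the containment $\pi_\beta = \pi_{\hat{\beta}} \in \Pi_\textsc{bpr}$ from Assumptions~\ref{app:assum:a}.1 and~\ref{app:assum:a}.3 into a feasibility-plus-optimality chain. The only difference is cosmetic ordering — you establish the containment first and convert $\hat{\mathcal{J}}$ to $\mathcal{J}$ last, whereas the paper does the conversion first — which changes nothing of substance.
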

\begin{proof}
    \begin{align*}
        \mathcal{J}_\textsc{bpr} &= \mathcal{J}\left(\argmax_{\pi \in \Pi_\textsc{bpr}} \hat{\mathcal{J}}(\pi)\right) \tag*{(Assumption~\ref{app:assum:a}.4)} \\
        &= \max_{\pi \in \Pi_\textsc{bpr}} \mathcal{J}(\pi) \tag*{(Assumption~\ref{app:assum:a}.2)} \\
        &\geq \mathcal{J}(\pi_{\hat{\beta}}) \tag*{(Assumption ~\ref{app:assum:a}.3)} \\
        &\geq \mathcal{J}(\pi_{\beta}) \tag*{(Assumption ~\ref{assum:a}.1)}
    \end{align*}
    which concludes the proof. 
\end{proof}

\begin{assumption}
\label{app:assum:b}
(2.1)~Access to an accurate estimate $\pi_{\hat{\beta}}$ of the true behavior $\beta$: $\mathcal{J}(\pi_{\beta})-\mathcal{J}(\pi_{\hat{\beta}})\leq \epsilon_\beta$. (2.2)~Access to an accurate estimate $\hat{\Delta}_{\pi,\pi_{\hat{\beta}}}$ of the true policy improvement $\Delta_{\pi,\pi_{\hat{\beta}}}$ for all policies $\pi\in\Pi_\textsc{pi}\cap \Pi_\textsc{bpr}$: $\hat{\Delta}_{\pi,\pi_{\hat{\beta}}} - \Delta_{\pi,\pi_{\hat{\beta}}} \leq \epsilon_{\Delta}$.
\end{assumption}

\begin{theorem}
    Under Assumption~\ref{app:assum:b}, BPR returns a policy $\pi_\textsc{bpr}$ with the following performance bounds: $\mathcal{J}_\textsc{bpr}\geq \mathcal{J}(\pi_{\beta}) + \hat{\Delta}_{\pi_\textsc{bpr},\pi_{\hat{\beta}}} - \epsilon_\Delta - \epsilon_\beta$.
\end{theorem}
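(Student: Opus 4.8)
The plan is to establish the bound by a short chain of one identity and two one-sided inequalities, combining the definition of the true policy improvement with the two approximation guarantees in Assumption~\ref{app:assum:b}. Since this statement relaxes the idealized theorem, the structure should mirror that earlier proof: expand $\mathcal{J}_\textsc{bpr}$ around the estimated behavior $\pi_{\hat{\beta}}$, then pay the two error terms $\epsilon_\Delta$ and $\epsilon_\beta$ to transfer from estimated to true quantities.

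First I would write $\mathcal{J}_\textsc{bpr} = \mathcal{J}(\pi_\textsc{bpr})$ and use the definition $\Delta_{\pi,\pi_{\hat{\beta}}}\doteq\mathcal{J}(\pi)-\mathcal{J}(\pi_{\hat{\beta}})$ to obtain the exact identity $\mathcal{J}_\textsc{bpr} = \mathcal{J}(\pi_{\hat{\beta}}) + \Delta_{\pi_\textsc{bpr},\pi_{\hat{\beta}}}$. The observation that licenses the next step is that $\pi_\textsc{bpr}$ is produced by a conservative algorithm constrained to $\Pi_\textsc{pi}\cap\Pi_\textsc{bpr}$, hence $\pi_\textsc{bpr}\in\Pi_\textsc{pi}\cap\Pi_\textsc{bpr}$ and Assumption~\ref{app:assum:b}.2 applies to it. This yields $\Delta_{\pi_\textsc{bpr},\pi_{\hat{\beta}}} \geq \hat{\Delta}_{\pi_\textsc{bpr},\pi_{\hat{\beta}}} - \epsilon_\Delta$, replacing the unknown true improvement by its estimate at the cost of $\epsilon_\Delta$. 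I would then invoke Assumption~\ref{app:assum:b}.1 in the form $\mathcal{J}(\pi_{\hat{\beta}}) \geq \mathcal{J}(\pi_{\beta}) - \epsilon_\beta$ to reconnect the estimated behavior with the true behavior. Substituting both inequalities into the identity gives $\mathcal{J}_\textsc{bpr} \geq \mathcal{J}(\pi_{\beta}) + \hat{\Delta}_{\pi_\textsc{bpr},\pi_{\hat{\beta}}} - \epsilon_\Delta - \epsilon_\beta$, which is exactly the claimed bound.

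I expect no genuine analytic difficulty here; the argument is a telescoping of two one-sided bounds around the pivot $\pi_{\hat{\beta}}$. The only subtlety — and thus the step I would state most carefully — is the membership $\pi_\textsc{bpr}\in\Pi_\textsc{pi}\cap\Pi_\textsc{bpr}$, which is what permits the use of Assumption~\ref{app:assum:b}.2; this holds by construction of the conservative-plus-BPR procedure, and it is the same reasoning by which $\pi_{\hat{\beta}}$ itself belongs to $\Pi_\textsc{pi}$ (trivially $\hat{\Delta}_{\pi_{\hat{\beta}},\pi_{\hat{\beta}}}=\Delta_{\pi_{\hat{\beta}},\pi_{\hat{\beta}}}=0$), which underpins the fallback guaranteeing $\hat{\Delta}_{\pi_\textsc{bpr},\pi_{\hat{\beta}}}\geq 0$ discussed in the main text.
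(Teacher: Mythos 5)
Your proof is correct and follows essentially the same route as the paper's: expand $\mathcal{J}_\textsc{bpr}$ around the pivot $\mathcal{J}(\pi_{\hat{\beta}})$, apply Assumption~\ref{app:assum:b}.2 to replace the true improvement by its estimate at cost $\epsilon_\Delta$, then apply Assumption~\ref{app:assum:b}.1 at cost $\epsilon_\beta$. Your explicit justification of the membership $\pi_\textsc{bpr}\in\Pi_\textsc{pi}\cap\Pi_\textsc{bpr}$ is a careful touch the paper leaves implicit, but it does not change the argument.
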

\begin{proof}
    \begin{align*}
        \mathcal{J}_\textsc{bpr} &=  \mathcal{J}(\pi_\textsc{bpr}) \tag*{(by definition)} \\
        &\geq \mathcal{J}(\pi_{\hat{\beta}}) + \hat{\Delta}_{\pi_\textsc{bpr},\pi_{\hat{\beta}}} - \epsilon_\Delta \tag*{(Assumption~\ref{app:assum:b}.2)} \\
        &\geq \mathcal{J}(\pi_{\beta}) + \hat{\Delta}_{\pi_\textsc{bpr},\pi_{\hat{\beta}}} - \epsilon_\Delta - \epsilon_\beta \tag*{(Assumption~\ref{app:assum:b}.1)}
    \end{align*}
    which concludes the proof. 
\end{proof}

\begin{assumption}
\label{app:assum:c}
(3.1)~access to a lower bound of the performance $\mathcal{J}_{\mybot}(\pi)\leq \mathcal{J}(\pi)$ for all policies, which is tight for $\hat{\beta}$: $\mathcal{J}_{\mybot}(\pi_{\hat{\beta}})\geq \mathcal{J}(\pi_{\hat{\beta}})-\epsilon_{\mybot}$.
\end{assumption}
\begin{theorem}
    Under Assumptions ~\ref{app:assum:b}.1 and ~\ref{app:assum:c}.1, BPR returns a policy $\pi_\textsc{bpr}$ with the following performance bounds: $\mathcal{J}_\textsc{bpr}\geq \mathcal{J}(\pi_{\beta}) + \Delta^{\mybot}_{\pi_\textsc{bpr},\pi_{\hat{\beta}}} - \epsilon_{\mybot} - \epsilon_\beta$.
\end{theorem}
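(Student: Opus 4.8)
The plan is to reuse the template of the safe-policy-improvement theorem but to route the bound through the pessimistic value estimate $\mathcal{J}_{\mybot}$ instead of through the estimated improvement $\hat{\Delta}$. The entire argument is a short chain of inequalities, so no new technical tools are required beyond the two assumptions and the definition of the lower-bound value gap $\Delta^{\mybot}_{\pi_\textsc{bpr},\pi_{\hat{\beta}}} = \mathcal{J}_{\mybot}(\pi_\textsc{bpr}) - \mathcal{J}_{\mybot}(\pi_{\hat{\beta}})$.

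First I would start from $\mathcal{J}_\textsc{bpr} = \mathcal{J}(\pi_\textsc{bpr})$ and immediately drop to the lower bound using the first clause of Assumption~\ref{app:assum:c}.1, namely $\mathcal{J}_{\mybot}(\pi) \leq \mathcal{J}(\pi)$ evaluated at $\pi = \pi_\textsc{bpr}$, which gives $\mathcal{J}_\textsc{bpr} \geq \mathcal{J}_{\mybot}(\pi_\textsc{bpr})$. Next I would rewrite the right-hand side as $\mathcal{J}_{\mybot}(\pi_{\hat{\beta}}) + \Delta^{\mybot}_{\pi_\textsc{bpr},\pi_{\hat{\beta}}}$ by the definition of the gap, then apply the tightness clause $\mathcal{J}_{\mybot}(\pi_{\hat{\beta}}) \geq \mathcal{J}(\pi_{\hat{\beta}}) - \epsilon_{\mybot}$ to obtain $\mathcal{J}_\textsc{bpr} \geq \mathcal{J}(\pi_{\hat{\beta}}) + \Delta^{\mybot}_{\pi_\textsc{bpr},\pi_{\hat{\beta}}} - \epsilon_{\mybot}$. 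Finally I would invoke the behavior-cloning guarantee of Assumption~\ref{app:assum:b}.1, $\mathcal{J}(\pi_{\hat{\beta}}) \geq \mathcal{J}(\pi_\beta) - \epsilon_\beta$, to replace the estimated behavior's true value by that of $\pi_\beta$, yielding exactly $\mathcal{J}_\textsc{bpr} \geq \mathcal{J}(\pi_\beta) + \Delta^{\mybot}_{\pi_\textsc{bpr},\pi_{\hat{\beta}}} - \epsilon_{\mybot} - \epsilon_\beta$.

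There is no genuine obstacle here: in contrast to the conservative case, the pessimistic bound $\mathcal{J}_{\mybot}$ lower-bounds the true value uniformly over \emph{all} policies, so we never need $\Delta^{\mybot}$ to be estimable over the restricted set $\Pi_\textsc{pi}\cap\Pi_\textsc{bpr}$, and the awkward membership condition of the previous theorem disappears. The only point requiring care is bookkeeping on inequality directions — the lower-bound and tightness inequalities point the same way and hence compose cleanly, and the $\Delta^{\mybot}$ term surfaces with a positive sign precisely because we expand $\mathcal{J}_{\mybot}(\pi_\textsc{bpr})$ about $\pi_{\hat{\beta}}$ rather than about $\pi_\beta$.
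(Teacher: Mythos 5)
Your proposal is correct and follows exactly the same chain of inequalities as the paper's proof: start from $\mathcal{J}_\textsc{bpr}=\mathcal{J}(\pi_\textsc{bpr})$, drop to $\mathcal{J}_{\mybot}(\pi_\textsc{bpr})$ via Assumption~3.1, expand about $\pi_{\hat{\beta}}$ using the definition of $\Delta^{\mybot}$, apply the tightness clause, and finish with Assumption~2.1. No differences of substance to report.
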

\begin{proof}
    \begin{align*}
        \mathcal{J}_\textsc{bpr} &=  \mathcal{J}(\pi_\textsc{bpr}) \tag*{(by definition)} \\
        &\geq \mathcal{J}_{\mybot}(\pi_\textsc{bpr}) \tag*{(Assumption~\ref{app:assum:c}.1)} \\
        &= \mathcal{J}_{\mybot}(\pi_{\hat{\beta}}) + \Delta^{\mybot}_{\pi_\textsc{bpr},\pi_{\hat{\beta}}} \tag*{(by definition)} \\
        &\geq \mathcal{J}(\pi_{\hat{\beta}})+\Delta^{\mybot}_{\pi_\textsc{bpr},\pi_{\hat{\beta}}}-\epsilon_{\mybot} \tag*{(Assumption~\ref{app:assum:c}.1)}\\
        &\geq \mathcal{J}(\pi_{\beta}) + \Delta^{\mybot}_{\pi_\textsc{bpr},\pi_{\hat{\beta}}} -\epsilon_{\mybot} - \epsilon_\beta \tag*{(Assumption~\ref{app:assum:b}.1)}
    \end{align*}
    which concludes the proof. 
\end{proof}

\subsection{Performance bound with BPR objective}
BPR solve the optimization problem:
\begin{equation}
\label{eq:app_BPR_opt}
    \min \frac{1}{n}\sum_{i=1}^{n} \| f(\phi(s_i))- a_i\|_2: \phi\in\Phi, f\in\mathcal{F},
\end{equation}
where $(s_i,a_i)$ is an i.i.d. sample from the offline dataset with the size of $n$.

The difference between the estimated behavior policy and the true behavior policy is:
\begin{equation}
    \mathbb{E}\left[J(\pi_{\hat{\beta}})-J(\pi_{\beta})|\mathcal{D}\right]=\mathbb{E}_{s \sim \rho}\left[V^{\pi_{\hat{\beta}}}\left(s\right)-V^{\pi_{\beta}}\left(s\right)\right] .
\end{equation}

\begin{lemma}
\label{subopt_upper_tv}
Consider a fixed dataset $\mathcal{D}$ and the reward is bounded by $R_{\text{max}}$, then the difference between $\pi_{\hat{\beta}}$ and $\pi_{\beta}$ will be bounded as:
\begin{equation}
    \mathbb{E}\left[J(\pi_{\hat{\beta}})-J(\pi_{\beta})|\mathcal{D}\right] \leq \frac{2R_\text{max}}{(1-\gamma)^2} \cdot \mathbb{E}\left[D_{\mathrm{TV}}\left(\pi_{\hat{\beta}}(s) \| \pi_{\beta}(s)\right)|\mathcal{D}\right]
\end{equation}
\end{lemma}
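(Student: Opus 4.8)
The plan is to turn the global performance gap $\mathbb{E}[J(\pi_{\hat{\beta}})-J(\pi_\beta)\mid\mathcal{D}]$ into a state-wise comparison of the two policies and then bound that comparison by total variation. Conditioning on $\mathcal{D}$ fixes the estimated behavior $\pi_{\hat{\beta}}$, so it suffices to prove the deterministic inequality for a fixed pair $(\pi_{\hat{\beta}},\pi_\beta)$ and carry the conditional expectation along unchanged; the expectation over states on the right-hand side is understood with respect to the discounted occupancy $d^{\pi_\beta}$ (the same distribution from which the dataset states $s_i$ in Theorem~\ref{theo:subopt} are drawn). The main tool is the performance difference lemma, which I would apply in the direction that exposes $\pi_\beta$'s occupancy, namely $J(\pi_{\hat{\beta}})-J(\pi_\beta) = -\frac{1}{1-\gamma}\,\mathbb{E}_{s\sim d^{\pi_\beta}}\,\mathbb{E}_{a\sim\pi_\beta(\cdot\mid s)}\big[A^{\pi_{\hat{\beta}}}(s,a)\big]$ with $A^{\pi_{\hat{\beta}}}=Q^{\pi_{\hat{\beta}}}-V^{\pi_{\hat{\beta}}}$.

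The key algebraic step is a centering identity: since $V^{\pi_{\hat{\beta}}}(s)=\sum_a \pi_{\hat{\beta}}(a\mid s)\,Q^{\pi_{\hat{\beta}}}(s,a)$, the inner expectation collapses to a signed policy difference weighted by action values, $\mathbb{E}_{a\sim\pi_\beta}[A^{\pi_{\hat{\beta}}}(s,a)] = \sum_a\big(\pi_\beta(a\mid s)-\pi_{\hat{\beta}}(a\mid s)\big)\,Q^{\pi_{\hat{\beta}}}(s,a)$. Subtracting the value function is exactly what removes the bulk term and leaves only the difference of the two policies, which is what a total-variation estimate can control.

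Then the bounding is routine. Because rewards are bounded by $R_{\text{max}}$, we have $\lVert Q^{\pi_{\hat{\beta}}}\rVert_\infty \le R_{\text{max}}/(1-\gamma)$, so by the $\ell_\infty$--$\ell_1$ (Hölder) inequality the state-wise term is at most $\frac{R_{\text{max}}}{1-\gamma}\,\lVert\pi_\beta(\cdot\mid s)-\pi_{\hat{\beta}}(\cdot\mid s)\rVert_1 = \frac{2R_{\text{max}}}{1-\gamma}\,D_{\mathrm{TV}}\!\big(\pi_\beta(\cdot\mid s)\,\Vert\,\pi_{\hat{\beta}}(\cdot\mid s)\big)$, where I use $\lVert p-q\rVert_1 = 2\,D_{\mathrm{TV}}(p\Vert q)$. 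Taking absolute values, multiplying by the leading $1/(1-\gamma)$ from the occupancy normalization, and taking $\mathbb{E}_{s\sim d^{\pi_\beta}}$ gives precisely $\frac{2R_{\text{max}}}{(1-\gamma)^2}\,\mathbb{E}[D_{\mathrm{TV}}(\pi_{\hat{\beta}}(s)\Vert\pi_\beta(s))\mid\mathcal{D}]$ (TV being symmetric, the order of arguments is immaterial).

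The one point that requires care --- more a matter of bookkeeping than a genuine obstacle --- is choosing the direction of the performance difference expansion so that the occupancy is $d^{\pi_\beta}$ rather than $d^{\pi_{\hat{\beta}}}$; expanding the wrong way would leave the expectation against a distribution unrelated to the dataset. If one wishes to avoid citing the performance difference lemma, I would instead derive it inline by telescoping the Bellman equations: writing $\Delta(s)=V^{\pi_\beta}(s)-V^{\pi_{\hat{\beta}}}(s)$ and adding and subtracting $\sum_a\pi_\beta(a\mid s)[\,r+\gamma\sum_{s'}T\,V^{\pi_{\hat{\beta}}}\,]$ yields the fixed-point relation $\Delta = g + \gamma P^{\pi_\beta}\Delta$ with $g(s)=\sum_a(\pi_\beta-\pi_{\hat{\beta}})(a\mid s)\,Q^{\pi_{\hat{\beta}}}(s,a)$; inverting $(I-\gamma P^{\pi_\beta})$ and recognizing its row-normalized inverse as $\tfrac{1}{1-\gamma}d^{\pi_\beta}$ reproduces the same expression, after which the bound proceeds identically.
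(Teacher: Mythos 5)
Your proof is correct, but it takes a different route from the paper's. You obtain the state-wise decomposition from the performance difference lemma, applied in the direction that exposes the occupancy $d^{\pi_\beta}$, then center the advantage into $\sum_a\bigl(\pi_\beta(a\mid s)-\pi_{\hat{\beta}}(a\mid s)\bigr)Q^{\pi_{\hat{\beta}}}(s,a)$ and finish with H\"older against $\lVert Q^{\pi_{\hat{\beta}}}\rVert_\infty\le R_{\text{max}}/(1-\gamma)$. The paper instead unrolls $V^{\pi_{\hat{\beta}}}(s)-V^{\pi_\beta}(s)$ through the Bellman equations by hand, bounds each step by the $\ell_1$ policy gap times the value envelope $R_{\text{max}}/(1-\gamma)$, and absorbs the unrolled recursion into a discounted occupancy to produce the second $1/(1-\gamma)$ factor --- essentially the inline telescoping you sketch in your last paragraph. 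The skeletons agree, but your execution is tighter on two points where the paper is loose. First, the paper's displayed expansion attaches the recursive term $\gamma\sum_{s'}T(s,a,s')\lvert V^{\pi_{\hat{\beta}}}(s')-V^{\pi_\beta}(s')\rvert$ to the weight $\lvert\pi_{\hat{\beta}}(a\mid s)-\pi_\beta(a\mid s)\rvert$, whereas a correct decomposition weights the recursion by $\pi_\beta(a\mid s)$ (or, equivalently, weights the policy gap by $Q^{\pi_{\hat{\beta}}}$, which is exactly your centering identity); the paper's final bound is right, but that intermediate inequality as written is not. Second, the paper leaves the occupancy in the final expectation ambiguous (it writes $d^{\pi_\rho}$), while your choice of direction for the performance difference lemma pins it down as $d^{\pi_\beta}$ --- the version actually needed when Theorem~\ref{theo:subopt} replaces this expectation by an empirical average over dataset states. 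What the paper's route buys is self-containedness (no appeal to an external lemma); what yours buys is rigor and correct bookkeeping of the occupancy measure.
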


\begin{proof}
Recall that the total variation distance (TVD) between the estimated policy $\pi_{\hat{\beta}}$ and the behavior policy $\pi_{\beta}$ for a given state $s$ is defined as:
\begin{equation}
    D_{\text{TV}}(\pi_{\hat{\beta}}(s), \pi_{\beta}(s)) = \sup_{a\in\mathcal{A}}\left|\pi_{\hat{\beta}}(a|s) - \pi_{\beta}(a|s) \right|  = \frac{1}{2} \sum_a \| \pi_{\hat{\beta}}(a|s) - \pi_{\beta}(a|s) \|_1.
\end{equation}
We expand the expectation in the value bound:
\begin{equation}
    \begin{aligned}
&\mathbb{E}_{s\sim\rho}\left[V^{\pi_{\hat{\beta}}}(s)-V^{\pi_{\beta}}(s)\right] \\
&\leq \sum_{s} \rho(s)\left(\sum_{a}\left|\pi_{\hat{\beta}}\left(a \mid s\right)-\pi_{\beta}\left(a \mid s\right)\right|\left(\mathcal{R}(s, a)+\gamma \sum_{s^{\prime}} T\left(s, a, s^{\prime}\right)\left|V^{\pi_{\hat{\beta}}}\left(s^{\prime}\right)-V^{\pi_{\beta}}\left(s^{\prime}\right)\right|\right)\right) \\
&=\sum_{s} \sum_{a} \rho(s)\left|\pi_{\hat{\beta}}(a \mid s)-\pi_{\beta}(a \mid s)\right|\left(\mathcal{R}(s, a)+\gamma \sum_{s^{\prime}} T\left(s, a, s^{\prime}\right)\left|V^{\pi_{\hat{\beta}}}\left(s^{\prime}\right)-V^{\pi_{\beta}}\left(s^{\prime}\right)\right|\right).
\end{aligned}
\end{equation}
Apply the upper bound of the value function $R_{\text{max}}/(1-\gamma)$ to the above equation:
\begin{equation}
\begin{aligned}
    \mathbb{E}\left[J(\pi_{\hat{\beta}})-J(\pi_{\beta})|\mathcal{D}\right]  &= \mathbb{E}_{s\sim\rho}\left[V^{\pi_{\hat{\beta}}}(s)-V^{\pi_{\beta}}(s)\right] \\&\leq \frac{R_{\text{max}}}{1-\gamma} \mathbb{E}_{s\sim\rho}\left[\sum_a\left|\pi_{\hat{\beta}}(a \mid s)-\pi_{\beta}(a \mid s)\right|\right]\\&=\frac{2R_{\text{max}}}{1-\gamma}\cdot \mathbb{E}\left[\mathbb{E}_{s\sim\rho}\left[D_{\mathrm{TV}}\left(\pi_{\hat{\beta}}(s) \| \pi_{\beta}(s)\right)\right]|\mathcal{D}\right]\\&=\frac{2R_{\text{max}}}{(1-\gamma)^2}\mathbb{E}_{s\sim d^{\pi_{\rho}}}\left[D_{\mathrm{TV}}\left(\pi_{\hat{\beta}}(s) \| \pi_{\beta}(s)\right)\right].
\end{aligned}
\end{equation}
For simplicity, we denote $\mathbb{E}_{s\sim d^{\pi_{\rho}}}\left[D_{\mathrm{TV}}\left(\pi_{\hat{\beta}}(s) \| \pi_{\beta}(s)\right)\right]$ as $\mathbb{E}\left[D_{\mathrm{TV}}\left(\pi_{\hat{\beta}}(s) \| \pi_{\beta}(s)\right)|\mathcal{D}\right]$.
\end{proof}
This lemma tells us the suboptimality of an arbitrary policy is upper-bounded by the TVD between the optimal policy and itself over the dataset $\mathcal{D}$.

Consider we learn the representation mapping $\phi:\mathcal{S}\rightarrow\mathcal{Z}$, and we learn a policy $\pi_\phi:\mathcal{Z}\rightarrow\mathcal{A}$ based on the fixed mapping $\phi$, instead of learning the policy $\pi:\mathcal{S}\rightarrow\mathcal{A}$. The following theorem will provide an upper bound of the suboptimality of the policy $\pi_{\phi}$\footnote{Note that we will abuse the notation of $\pi_\phi$, $\pi(\phi(s))$, and $\pi(\cdot|\phi(s))$ for readability and simplicity.}.
\setcounter{theorem}{4}
\begin{theorem}
With probability at least $1-\delta$, for any  $\delta\in(0,1)$:
\begin{equation}
\label{eq:app_subopt_bound}
\begin{aligned}
    \epsilon_\beta = \mathbb{E}\left[J(\pi_{\hat{\beta}})-J(\pi_{\beta})\;\bigg|\;\mathcal{D}\right]&\leq  CK\cdot  \frac{1}{n} \sum_{i=1}^{n}\left\lVert\pi_{\beta}(\cdot|s_i)-\pi_{\hat{\beta}}\left(\cdot \mid \phi(s_{i})\right)\right\rVert_{2} \\&+ 2 \sqrt{2} K \cdot \operatorname{Rad}(\Phi)+K\cdot\sqrt{\frac{2 \ln \frac{1}{\delta}}{n}}
\end{aligned}
\end{equation}
where $n$ is the size of the dataset, $R a d$ is the Rademacher complexity, $\pi_\beta$ is the behavior policy over the dataset, $C$ is a constant, $K=\frac{R_{\text{max}}}{1-\gamma}$.
\end{theorem}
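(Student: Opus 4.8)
The plan is to bound $\epsilon_\beta = \mathbb{E}[J(\pi_{\hat\beta}) - J(\pi_\beta)\mid\mathcal{D}]$ by combining the value-difference bound of Lemma~\ref{subopt_upper_tv} with a uniform-convergence argument controlling the population behavior-cloning error by its empirical counterpart plus a Rademacher complexity term. First I would invoke Lemma~\ref{subopt_upper_tv} to reduce the value gap to a total-variation distance, writing $\epsilon_\beta \leq \frac{2R_{\max}}{(1-\gamma)^2}\,\mathbb{E}_{s\sim d^{\pi_\rho}}[D_{\mathrm{TV}}(\pi_{\hat\beta}(s)\,\|\,\pi_\beta(s))]$. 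Using Pinsker-type relations between TVD and the $\ell_2$ norm (up to a dimension-dependent constant $C$ absorbing the $\ell_1$-to-$\ell_2$ conversion over the $|\mathcal{A}|$ coordinates), I would pass to $\mathbb{E}_{s}[\,\|\pi_\beta(\cdot|s) - \pi_{\hat\beta}(\cdot|\phi(s))\|_2\,]$, so that the target becomes a population expectation of an $\ell_2$ discrepancy scaled by $K = \frac{R_{\max}}{1-\gamma}$.

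The core of the argument is then a standard generalization bound converting this population expectation into its empirical average $\frac{1}{n}\sum_{i=1}^n \|\pi_\beta(\cdot|s_i) - \pi_{\hat\beta}(\cdot|\phi(s_i))\|_2$, which is precisely (a constant multiple of) the BPR objective in Equation~\ref{eq:app_BPR_opt}. I would view the map $s \mapsto \|\pi_\beta(\cdot|s) - \pi_{\hat\beta}(\cdot|\phi(s))\|_2$ as a member of a function class built over $\Phi$, and apply the classical symmetrization/McDiarmid route: with probability at least $1-\delta$, the population mean is bounded by the empirical mean plus $2\,\mathrm{Rad}(\cdot)$ of the composite class plus $\sqrt{\tfrac{2\ln(1/\delta)}{n}}$, where the bounded-differences step uses that the integrand is uniformly bounded (each TV/$\ell_2$ discrepancy is bounded, contributing the constant in front of the concentration term after multiplying by $K$). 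To express the Rademacher term in terms of $\mathrm{Rad}(\Phi)$ as stated, I would apply a vector-contraction (Ledoux--Talagrand-type) inequality to strip the $1$-Lipschitz outer $\ell_2$-norm, yielding the factor $\sqrt{2}$ and reducing the complexity of the composite class to that of $\Phi$ itself; tracking the constants then produces the coefficients $CK$, $2\sqrt{2}K$, and $K$ on the three terms respectively.

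The main obstacle I anticipate is making the Rademacher reduction from the composite loss class down to $\mathrm{Rad}(\Phi)$ fully rigorous: the outer function is a vector-valued norm over $|\mathcal{A}|$ output coordinates, so one needs the \emph{vector} form of the contraction lemma (hence the vectorized Rademacher complexity introduced earlier in the excerpt) and must argue that composing with the predictor $f$ and the $\ell_2$ norm does not inflate complexity beyond a Lipschitz constant. A secondary subtlety is the exact provenance of the constant $C$ in the first term, which hides both the $\frac{2}{(1-\gamma)}$ factor from Lemma~\ref{subopt_upper_tv} (one power of $(1-\gamma)^{-1}$ is folded into $K$) and the $\ell_1$-versus-$\ell_2$ norm-equivalence constant depending on $|\mathcal{A}|$; I would keep this implicit rather than optimize it. Once these pieces are assembled, summing the empirical BPR term, the contraction-reduced complexity term, and the concentration term gives exactly Equation~\ref{eq:app_subopt_bound}.
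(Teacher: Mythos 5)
Your proposal is correct and follows essentially the same route as the paper's proof: reduce the value gap to a total-variation/$\ell_1$ discrepancy via Lemma~\ref{subopt_upper_tv}, bound the population-minus-empirical gap uniformly over $\Phi$ by McDiarmid's inequality plus a $2\sqrt{2}\operatorname{Rad}(\Phi)$ term, convert $\ell_1$ to $\ell_2$ through the constant $C$, and scale by $K=\frac{R_{\max}}{1-\gamma}$. The only cosmetic differences are that the paper performs the $\ell_1$-to-$\ell_2$ conversion on the empirical term \emph{after} the uniform-convergence step rather than at the population level, and it outsources the contraction step you identify as the main obstacle to a cited lemma of \citet{DBLP:journals/corr/abs-2002-05518} instead of re-deriving it from a vector-contraction inequality.
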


\begin{proof}
The following proves are build on techniques provided by ~\citet{DBLP:journals/corr/abs-2002-05518}. First, for $\forall \phi$, we have:
\begin{equation}
\begin{aligned}
\label{eq:upper_bound_by_Phi}
&\mathbb{E}_{s\sim d^{\pi_\beta}}\left[\left\|\pi_{\beta}(\cdot \mid s)-\pi_{\hat{\beta}}(\cdot \mid \phi(s))\right\|_{1}\right]-\frac{1}{n} \sum_{i=1}^{n}\left\|\pi_{\beta}\left(\cdot \mid s_{i}\right)-\pi_{\hat{\beta}}\left(\cdot \mid \phi(s_{i})\right)\right\|_{1} \leq \\
&\underbrace{\sup _{\phi \in \Phi}\left\{\mathbb{E}_{s\sim d^{\pi_\beta}}\left[\left\|\pi_{\beta}(\cdot \mid s)-\pi_{\hat{\beta}}(\cdot \mid \phi(s))\right\|_{1}\right]-\frac{1}{n} \sum_{i=1}^{n}\left\|\pi_{\beta}\left(\cdot \mid s_{i}\right)-\pi_{\hat{\beta}}\left(\cdot \mid \phi(s_{i})\right)\right\|_{1}\right\}}_{:=\Xi\left(s_{1}, \ldots, s_{n}\right)}
\end{aligned}
\end{equation}

\begin{lemma}\citep{DBLP:journals/corr/abs-2002-05518}
The expected value of $\Xi$ can be bounded as:
\begin{equation}
\label{eq:expect_phi}
    \mathbb{E}\left[\Xi\right]\leq 2 \sqrt{2} \operatorname{Rad}(\Phi),
\end{equation}
where $R a d$ is the Rademacher complexity.
\end{lemma}

Given the fact that $\left|\Xi\left(s_{1}, \ldots, s_{i}, \ldots s_{n}\right)-\Xi\left(s_{1}, \ldots, s_{i}^{\prime}, \ldots s_{n}\right)\right| \leq \frac{2}{n}$ and apply McDiarmid’s inequality, $\forall \delta\in(0,1)$, we have:
\begin{equation}
\label{eq:mcdiarmid}
    \operatorname{Pr}\left(\Xi \leq \mathbb{E}[\Xi]+\sqrt{\frac{2 \ln \frac{1}{\delta}}{n}}\right) \geq 1-\delta
\end{equation}
Combining Equation~\ref{eq:expect_phi} and Equation~\ref{eq:mcdiarmid} with Equation~\ref{eq:upper_bound_by_Phi}, we get the following holds with probability at least $1-\delta$:
\begin{equation}
    \begin{aligned}
\mathbb{E}_{\rho}\left[\left\|\pi_{\beta}(\cdot \mid s)-\pi_{\hat{\beta}}(\cdot \mid \phi(s))\right\|_{1}\right] \leq 
\frac{1}{n} \sum_{i=1}^{n}\left\|\pi_{\beta}\left(\cdot \mid s_{i}\right)-\pi_{\hat{\beta}}\left(\cdot \mid \phi(s_{i})\right)\right\|_{1}+2 \sqrt{2} \operatorname{Rad}(\Phi)+\sqrt{\frac{2 \ln \frac{1}{\delta}}{n}}
\end{aligned}
\end{equation}
Now from the left-hand side, with the help of Lemma~\ref{subopt_upper_tv}, we get:
\begin{equation}
\label{eq:lhs}
\begin{aligned}
    \mathbb{E}\left[J(\pi_{\hat{\beta}})-J(\pi_{\beta})|\mathcal{D}\right] &\leq \frac{2R_\text{max}}{(1-\gamma)^2} \cdot \mathbb{E}\left[D_{\mathrm{TV}}\left(\pi_{{\hat{\beta}}}(s) \| \pi_{\beta}(s)\right)|\mathcal{D}\right]\\&=\frac{R_{\text{max}}}{1-\gamma}\cdot \mathbb{E}_{\rho}\left[\left\|\pi_{\beta}(\cdot \mid s)-\pi_{\hat{\beta}}(\cdot \mid \phi(s))\right\|_{1}\right].
\end{aligned}
\end{equation}
And from the right-hand side, we have:
\begin{equation}
\label{eq:rhs}
\begin{aligned}
    &\frac{1}{n} \sum_{i=1}^{n}\left\|\pi_{\beta}\left(\cdot \mid s_{i}\right)-\pi_{\hat{\beta}}\left(\cdot \mid \phi(s_{i})\right)\right\|_{1}+2 \sqrt{2} \operatorname{Rad}(\Phi)+\sqrt{\frac{2 \ln \frac{1}{\delta}}{n}}\\&\leq C\cdot \frac{1}{n} \sum_{i=1}^{n}\left\|\pi_{\beta}\left(\cdot \mid s_{i}\right)-\pi_{\hat{\beta}}\left(\cdot \mid \phi(s_{i})\right)\right\|_{2}+2 \sqrt{2} \operatorname{Rad}(\Phi)+\sqrt{\frac{2 \ln \frac{1}{\delta}}{n}}
    \end{aligned}
\end{equation}
With combining Equation~\ref{eq:lhs} and Equation~\ref{eq:rhs} together, we can conclude the proof:
\begin{equation}
\begin{aligned}
    \epsilon_\beta = \mathbb{E}\left[J(\pi_{\hat{\beta}})-J(\pi_{\beta})\;\bigg|\;\mathcal{D}\right]&\leq  C \frac{R_{\text{max}}}{1-\gamma}\cdot  \frac{1}{n} \sum_{i=1}^{n}\left\lVert\pi_{\beta}(\cdot|s_i)-\pi_{\hat{\beta}}\left(\cdot \mid \phi(s_{i})\right)\right\rVert_{2} \\&+ 2 \sqrt{2} \frac{R_{\text{max}}}{1-\gamma} \cdot \operatorname{Rad}(\Phi)+\frac{R_{\text{max}}}{1-\gamma}\cdot\sqrt{\frac{2 \ln \frac{1}{\delta}}{n}}
\end{aligned}
\end{equation}
with probability at least $1-\delta$. 
\end{proof}

Note that the second term and the last term do not depend on the exact form of the representation. Minimizing the first term can decrease the difference between the learned policy and the behavior policy, where the first term itself aligns with the optimization problem of BPR (Equation~\ref{eq:BPR_opt}), indicating that with BPR objective, we can improve the performance of the policy. And the potential harm of using BPR is bounded by the error $\epsilon_{\beta}$ that we can control.

\section{Counterexample}
\label{sec:counterexample}
We consider the following dataset composed of four trajectories $\tau_1,\tau_2,\tau_3,$ and $\tau_4$ in deterministic MDP $m=\langle\{s_0,s_1\},\{a_0,a_1\},p_0(s_0)=1,p,r,\gamma=1\rangle$:
\begin{align}
    \mathcal{D}=\left\{\underbrace{\langle s_0,a_0,s_f,0\rangle}_{\tau_1},\underbrace{\langle s_0,a_0,s_f,0\rangle}_{\tau_2}, \underbrace{\langle s_0,a_1,s_1,0\rangle, \langle s_1,a_0,s_f,1\rangle}_{\tau_3}, \underbrace{\langle s_0,a_1,s_1,0\rangle, \langle s_1,a_1,s_f,0\rangle}_{\tau_4} \right\},\nonumber
\end{align}
where the final state $s_f$ denotes the termination of the trajectory. Then, BPR may collapse $s_0$ and $s_1$ to a single embedding $z$ where the estimated behavior policy is uniform:
\begin{align}
    \pi_{\hat{\beta}}(a_0|z)=\pi_{\hat{\beta}}(a_1|z)=\pi_{\hat{\beta}}(a_0|s_0)=\cdots.
\end{align}

Then, we have:
\begin{align}
    \mathcal{J}(\pi_{\hat{\beta}}) &= \frac{1}{4} \\
    \mathcal{J}_\textsc{bpr}(\pi(a_0|z)=1) &= \frac{1}{3} \\
    \mathcal{J}_\textsc{bpr}(\pi(a_1|z)=1) &= 0 \\
    \mathcal{J}_\textsc{bpr}(\pi(a_0|z)=0.5) &= \frac{1}{2}\times\frac{1}{3}+\frac{1}{2}\times\frac{2}{3}\times\mathcal{J}_\textsc{bpr}(\pi(a_0|z)=0.5) \\
    &= \frac{1}{6}+\frac{1}{3}\times\mathcal{J}_\textsc{bpr}(\pi(a_0|z)=0.5) \\
    &= \frac{3}{2}\times\frac{1}{6} = \frac{1}{4}
\end{align}
As a conclusion, with a BPR value, a greedy algorithm would converge to $\pi(a_0|z)=1$, which has the performance of 0 in the environment MDP $m$, and is therefore not a policy improvement.

\section{All Experimental Results}
\label{sec:add_results}
\subsection{Effectiveness of BPR in D4RL benchmark}
\paragraph{Experimental Setup}
We analyze our proposed method BPR on the D4RL benchmark~\citep{DBLP:journals/corr/abs-2004-07219} of OpenAI gym MuJoCo tasks~\citep{DBLP:conf/iros/TodorovET12} which includes a variety of dataset domains that have been commonly used in the Offline RL community. We evaluate our method by integrating it into three Offline RL methods, including: i)TD3+BC agent~\citep{DBLP:conf/nips/FujimotoG21}, which is one of the existing state-of-the-art Offline RL approaches combining Behavior Cloning with TD3 together and has a good balancing on the simplicity and efficiency; ii) CQL agent~\citep{DBLP:conf/nips/KumarZTL20}, which learns a conservative Q-function such that the expected value of a policy under this Q-function lower-bounds its true value; iii) EDAC agent~\citep{DBLP:conf/nips/AnMKS21}, which is an uncertainty-based ensemble-diversified Offline RL algorithm. We consider three simulated tasks from the Mujoco control domain with D4RL benchmark ( halfcheetah, hopper, and walker2d) and consider the expert and medium-expert datasets. For our experiments, the baseline methods take the raw state as the input of both the value network and policy network, while we also pre-train the representations with $100k$ timesteps and then freeze the encoder and take it as the input during the Offline RL optimization; and follow the standard evaluation procedure as in D4RL benchmark. We provide the pseudocode of the pretraining process in Algorithm~\ref{alg:code_pretrain}, and the training curve that without the accounts for the pre-training steps (100K timesteps) in BPR (Figure~\ref{fig:app:d4rl-curve-pretrain}).
\begin{figure*}[htbp]
\centering
\centering
\includegraphics[width=1\textwidth]{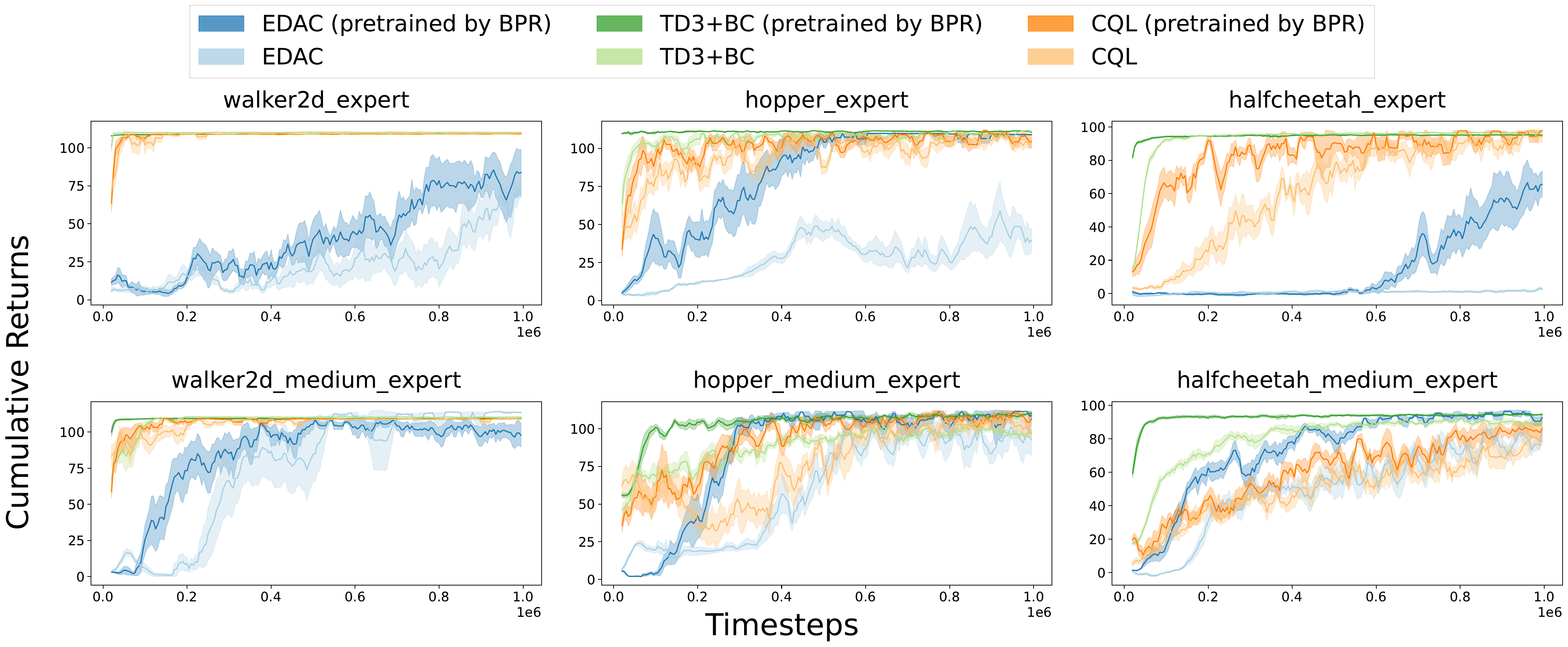}
\caption{Performance comparison on D4RL dataset. }
\label{fig:app:d4rl-curve-pretrain}
\end{figure*}
\begin{algorithm}[t]
\caption{BPR Pretraining Pseudocode, PyTorch-like}
\label{alg:code_pretrain}
\definecolor{codeblue}{rgb}{0.25,0.5,0.5}
\definecolor{codekw}{rgb}{0.85, 0.18, 0.50}
\lstset{
  backgroundcolor=\color{white},
  basicstyle=\fontsize{7.5pt}{7.5pt}\ttfamily\selectfont,
  columns=fullflexible,
  breaklines=true,
  captionpos=b,
  commentstyle=\fontsize{7.5pt}{7.5pt}\color{codeblue},
  keywordstyle=\fontsize{7.5pt}{7.5pt}\color{codekw},
}
\begin{lstlisting}[language=python]
# encoder: mlp, encoder network
# predictor: mlp, prediction network
def pretrain(encoder, predictor, replay_buffer, batch_size):
    iteration = 0
    for iteration < 1e5:
        state, action, _, _, _ = replay_buffer.sample(batch_size) # sample a batch of tuples from replay buffer
        state = encoder(state)
        prediction = predictor(state)
        prediction = normalize(prediction, dim=1, p=2) # l2 normalize
        action =  normalize(action, dim=1, p=2) #l2 normalize
        encoder_loss = MSE(prediction, action) # loss for encoder
        encoder_loss.backward()
        update(encoder,predictor)
        iteration += 1
\end{lstlisting}
\end{algorithm}

\begin{figure*}[htbp]
\centering
\centering

\includegraphics[width=1\textwidth]{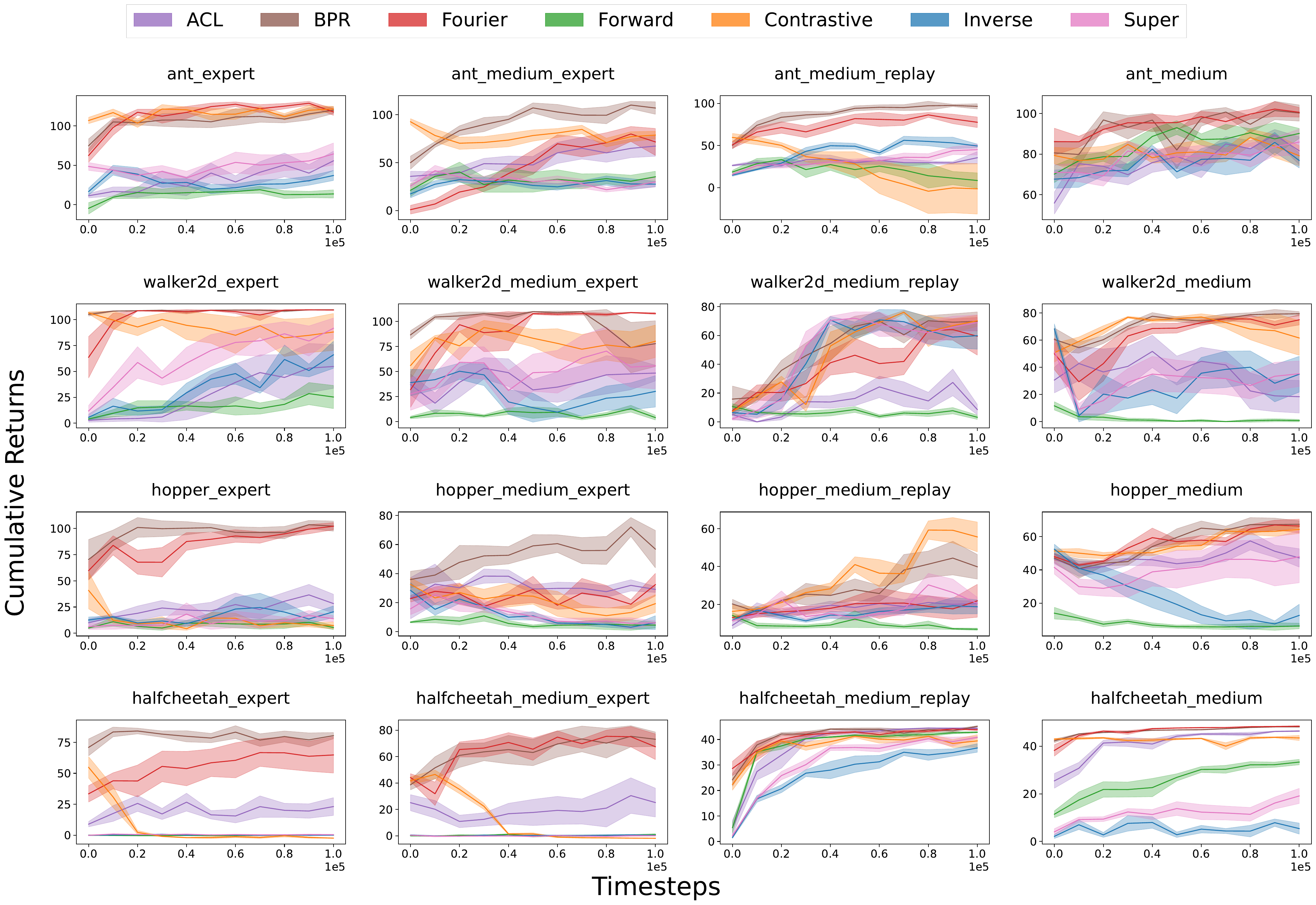}
\caption{Performance comparison on D4RL mujoco dataset.}
\label{fig:app_curve_repr}
\end{figure*}

\begin{figure*}[htbp!]
\centering
\centering
\includegraphics[width=\textwidth]{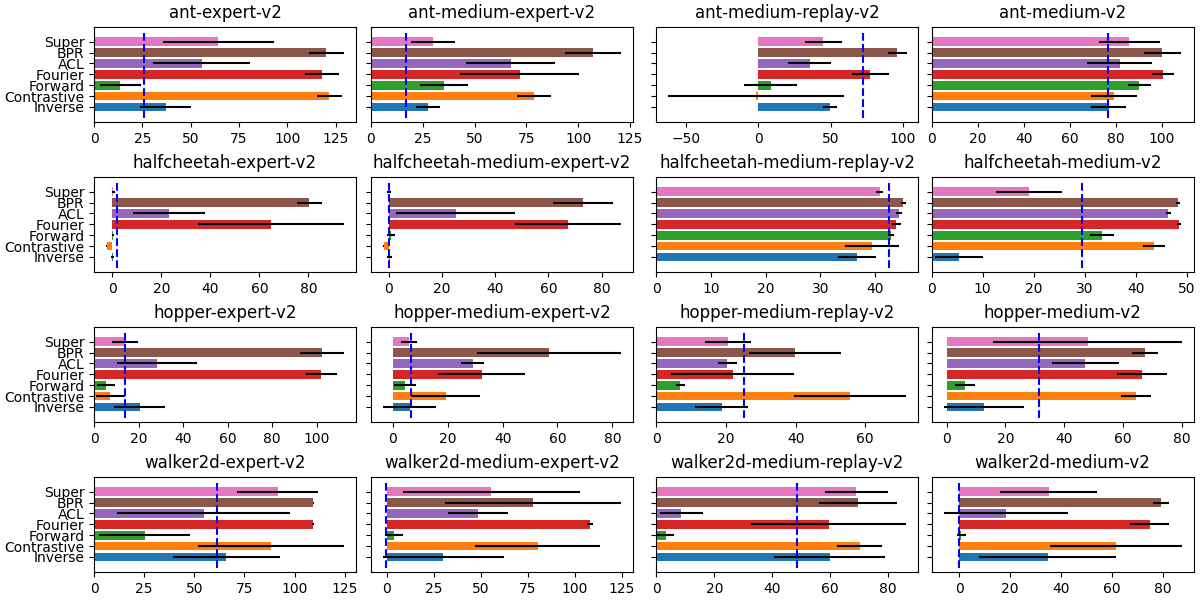}

\caption{Performance comparison with different representation objectives on D4RL Mujoco dataset at 100K timestep. x-axis shows the average normalized score over the final 10 evaluations and 6 seeds. Blue dotted lines show the average normalized return without pretraining.}
\label{fig:rep_compare_curve}
\end{figure*}

\subsection{Results comparison with other representation baselines in d4rl benchmark}
\paragraph{Experimental Setup}  The purpose of this experiment is to demonstrate that, when compared to the state-of-the-art representation objectives, BPR still has a competitive advantage. In this experiment, we follow the pretrain-finetune paradigm suggested by ~\citet{DBLP:conf/icml/YangN21}, i.e., we first pretrain state representations of 100k time steps, then the learned encoder is fixed and applied to the downstream task, which performs the BRAC~\citep{DBLP:journals/corr/abs-1911-11361} agent for 100K steps. Same as the configuration from ~\citet{DBLP:conf/icml/YangN21}, we also fix the regularization strengths and policy learning rates in BRAC for all domains. And since representation is more valuable in informative data, our experiment does not include interaction pairs collected by a random policy, but rather has interests in the dataset collected from a learned (or at least partially learned) agent. The experiment is still conducted on the D4RL mujoco tasks, with 16 tasks in total included. In this setting, the pretraining datasets and downstream datasets are the same, determined by a single choice of task.  We compare our method to several published leading representation RL algorithms, which include: 
\begin{itemize}
    \item \textbf{ACL}~\citep{DBLP:conf/icml/YangN21} applies contrastive learning on the transformer-based architecture: (1) take a sub-trajectory $s_{t:t+k}, a_{t:t+k}, r_{t:t+k}$, (2) randomly mask a subset of these, (3) pass the masked sequence into a transformer, and then (4) for each masked input state, apply a contrastive loss between its representation $\phi(s)$ and the transformer output at its sequential position. 
    \item \textbf{Fourier}~\citep{DBLP:conf/nips/NachumY21} is implemented as contrastive learning where the transition dynamics are approximated by an implicit linear model with representations given by random Fourier features.
    \item \textbf{Forward model}~\citep{DBLP:conf/icml/PathakAED17} uses the state representation and the action to predict the reward and next state given a sub-trajectory, where the transition probability is defined as an entropy-based model, i.e., given a sub-trajectory $\tau_{t:t+1}$, use $\phi(s_{t}), a_t$ to predict $s_{t+1}, r_t$.
    \item \textbf{Inverse model}~\citep{DBLP:conf/icml/PathakAED17} uses the current state representation and the next state representation to predict the current action, i.e., given a sub-trajectory $\tau_{t:t+1}$, use $\phi(s_{t:t+1})$ to predict $a_t$.
    \item \textbf{Contrastive}~\citep{DBLP:conf/icml/YangN21} learns state representation with contrastive objective. Given a sub-trajectory $\tau_{t:t+1}$, a contrastive loss is applied between $\phi(s_t), \phi(s_{t+1})$ as:
    \begin{equation}
        -\phi\left(s_{t+1}\right)^{\top} W \phi\left(s_t\right)+\log \mathbb{E}_\rho\left[\exp \left\{\phi(\tilde{s})^{\top} W \phi\left(s_t\right)\right\}\right]
    \end{equation}
    \item \textbf{Super}~\citep{DBLP:conf/icml/YangN21} is a combination of the forward model and the backward model, which has a self-predictive module and inverse module to learn dynamical information.
\end{itemize}
In this experiment, all methods use the same encoder architecture, i.e., with 4-layer MLP activated by ReLU, followed by another linear layer activated by Tanh, where the final output feature dimension of the encoder is 256. Besides, all methods follow the same optimizer settings, pre-training data, and the number of pre-training epochs.

\paragraph{Analysis}
Figure~\ref{fig:app_curve_repr} and Figure~\ref{fig:rep_compare_curve} provide the performance comparison results for BPR and other representation objectives on D4RL mujoco dataset. We find that BPR outperforms or at least is competitive with the previous state-of-the-art methods in the majority of environments. Notably, the variance on expert and medium datasets is much smaller than that on medium-expert and medium-replay datasets, especially for BPR, which indicates that the learned representation is more stable when the dataset is from the same level of the agent. The underlying reason may be that the behavior policy of the mixed dataset could differ even in the same mini-batch in the training stage, which will limit the ability of the neural network to learn the information related to (near) optimal decision-making. This suggests that the representation quality could be impacted more by the characteristics of the dataset, than the representation objectives, which confirms the observation of ~\citet{DBLP:journals/corr/abs-2111-04714}. We also notice that even with the pretrained encoder learned from BPR, there still remains a large performance gap between the BRAC agent and the baseline TD3+BC agent, indicating that despite the representation objective being helpful for Offline RL, choosing the suitable offline agent for different tasks is still essential for improving the performance, which may also partially solve the aforementioned high variance issue of the different dataset. Furthermore, to understand the performance difference of different representation objectives comprehensively, we additionally use uncertainty-aware comparisons for all these methods.

\subsection{Pre-training v.s. Co-training}
\begin{figure*}[htbp]
\centering
\centering

\includegraphics[width=1\textwidth]{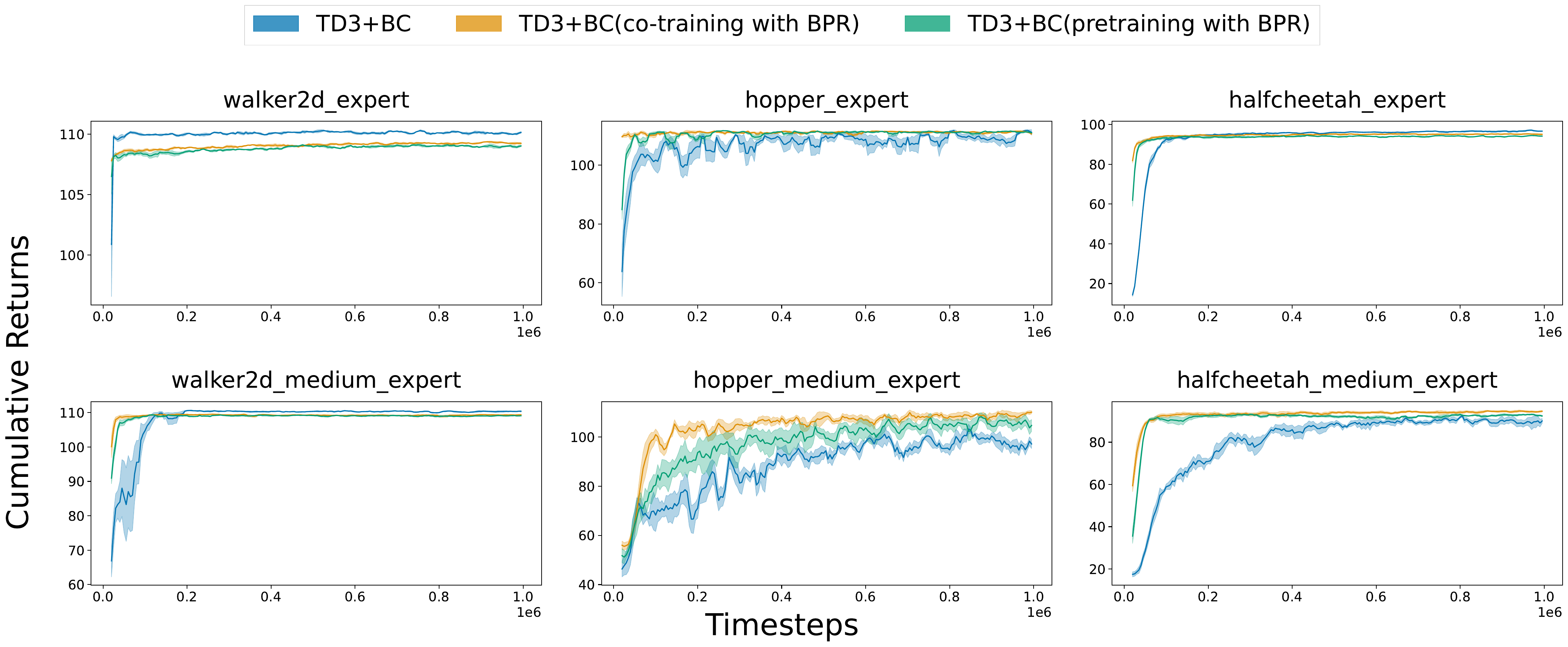}
\caption{Performance comparison on D4RL mujoco dataset. The horizontal axis indicates the number of environment steps. The vertical axis indicates the normalized cumulative returns. We trained 6 different instances of each algorithm with different random seeds, with each instance performing an evaluation every 5,000 environment steps by computing the average episode return over 10 evaluation episodes. The solid lines represent the mean over the 6 trials.}
\label{curve_d4rl_v}
\end{figure*}

\paragraph{Experimental Setup} There are two design choices of representation learning in the experiment: i) learn state representations via pretraining, then freeze the encoder and apply it to the downstream tasks, ii) learn state representations simultaneously with the policy, where both the representation and the policy will be learned shoulder-by-shoulder. 
One can wonder how BPR performs with the different designs of choices, and what the difference is between the latter one and the one that additionally applies BC to policy learning. To illustrate both questions, here, we evaluate both designs on the state-of-the-art TD3+BC backbone. For the pretraining one, we still set the pretraining stage as 100,000 time steps. For the co-training one, we train the state encoder simultaneously with the value network and the policy network, where the corresponding pseudocode is listed in Algorithm~\ref{alg:code_cotrain}. For reference, we also provide the pseudocode of the baseline TD3+BC method in Algorithm~\ref{alg:code_ori}. Concretely, the differences between BPR and the behavior cloning in TD3+BC are listed as follows: i) they use different projection (or actor) heads, ii) the gradient of the policy loss is non-visible for the encoder, but the gradient of the encoder loss can pass through and update both the encoder layer and the projection layer; iii) BPR require the l2-normalization for the representation and the action, which is known to be effective for representation learning, while $l_2$-normalization is not a good fit for action in BC.

\paragraph{Analysis} Figure~\ref{curve_d4rl_v} shows the performance of three models on D4RL tasks, indicating that BPR objective can improve sample efficiency in most cases. BPR consistently outperforms the baseline method, TD3+BC, for the majority of tasks and has a considerably lower variance, demonstrating that our approach has the advantage of convergence rate and stability. On the walker2d tasks, the baseline method performs better than the model with BPRs, we consider the possible reason could be that the baseline is sufficient to solve this task, therefore adding extra auxiliary losses may, in turn, damage the performance. Meanwhile, this finding is consistent with our hypothesis that the representation objective becomes more valuable as the difficulty of the task increases. Another impressive result is that compared  to co-training BPR objective with the policy, BPR yields more performance gains and faster convergence under the pretrain-finetune paradigm, illustrating that learning policy from a ``good'' fixed encoder might be more suitable for Offline RL.

\begin{algorithm}[t]
\caption{TD3+BC Co-training with BPR Pseudocode, PyTorch-like}
\label{alg:code_cotrain}
\definecolor{codeblue}{rgb}{0.25,0.5,0.5}
\definecolor{codekw}{rgb}{0.85, 0.18, 0.50}
\lstset{
  backgroundcolor=\color{white},
  basicstyle=\fontsize{7.5pt}{7.5pt}\ttfamily\selectfont,
  columns=fullflexible,
  breaklines=true,
  captionpos=b,
  commentstyle=\fontsize{7.5pt}{7.5pt}\color{codeblue},
  keywordstyle=\fontsize{7.5pt}{7.5pt}\color{codekw},
}
\begin{lstlisting}[language=python]
# actor: mlp, policy network
# critic: mlp, value network
# encoder: mlp, encoder network
# predictor: mlp, prediction network
# freq: update frequency of policy network
iteration = 0
for (s,a,ns,r) in loader:  # load a minibatch tuple with n samples
    pred_s = predictor(encoder(s))
    norm_pred = normalize(pred_s, dim=1)  # l2-normalize
    norm_a = normalize(a, dim=1)  # l2-normalize
    L_encoder = MSE(norm_pred, norm_a) # MSE loss for encoder
    L_encoder.backward()
    update(encoder, predictor) # Adam update
    
    enc_s, enc_ns = sg(encoder(s)), sg(encoder(ns)) # sg is stop_gradient
    target_v = r+ gamma*sg(critic(enc_ns,a)) 
    L_value = MSE(critic(enc_s,a), target_v) # MSE loss for value
    L_value.backward()
    update(critic) # Adam update
    if iteration % freq == 0:
        pi = actor(enc_s)
        L_policy =  -lmbda*Q(enc_s, pi).mean()+ MSE(pi,a) # loss for policy 
        L_policy.backward()  # back-propagate
        update(actor) # Adam update
    iteration += 1
\end{lstlisting}
\end{algorithm}
\begin{algorithm}[t]
\caption{TD3+BC Pseudocode, PyTorch-like}
\label{alg:code_ori}
\definecolor{codeblue}{rgb}{0.25,0.5,0.5}
\definecolor{codekw}{rgb}{0.85, 0.18, 0.50}
\lstset{
  backgroundcolor=\color{white},
  basicstyle=\fontsize{7.5pt}{7.5pt}\ttfamily\selectfont,
  columns=fullflexible,
  breaklines=true,
  captionpos=b,
  commentstyle=\fontsize{7.5pt}{7.5pt}\color{codeblue},
  keywordstyle=\fontsize{7.5pt}{7.5pt}\color{codekw},
}
\begin{lstlisting}[language=python]
# actor: mlp, policy network
# critic: mlp, value network
# freq: update frequency of policy network
iteration = 0
for (s,a,ns,r) in loader:  # load a minibatch tuple with n samples
    target_v = r+ gamma*sg(critic(ns,a)) 
    L_value = MSE(critic(ns,a), target_v) # MSE loss for value
    L_value.backward()
    update(critic) # Adam update
    if iteration % freq == 0:
        pi = actor(ns)
        L_policy =  -lmbda*Q(ns, pi).mean()+ MSE(pi,a) # loss for policy 
        L_policy.backward()  # back-propagate
        update(actor) # Adam update
    iteration += 1
\end{lstlisting}
\end{algorithm}

\subsection{Performance comparison in v-d4rl benchmark} 
\textbf{\textit{Experimental Setup: }}
We evaluate our method on a benchmarking suite for Offline RL from visual observations of DMControl suite (DMC) tasks~\citep{DBLP:journals/corr/abs-2206-04779}. We consider three domains of environments (walker-walk, humanoid-walk, and cheetah-run) and two different datasets per environment (medium-expert and expert). We include two baselines from ~\citet{DBLP:journals/corr/abs-2206-04779} in this experiment: 
\begin{itemize}
    \item DrQ+BC: combining data augmentation techniques with the TD3+BC method, which applies TD3 in the offline setting with a regularizing behavioral-cloning term to the policy loss. The policy objective is: $\pi=\underset{\pi}{\operatorname{argmax}} \mathbb{E}_{(s, a) \sim \mathcal{D}}\left[\lambda Q(s,\pi(s))-(\pi(s)-a)^2\right]$
    \item  DrQ+CQL: adding the regularization in CQL to the Q-function of an actor-critic approach, which is DrQ-v2.
\end{itemize} 
To investigate if BPR learns valuable representation, we pretrain the encoder by using the BPR objective for 100k time steps, followed by a fine-tuning stage where we keep the encoder fixed and apply two baselines based on the learned encoder. 

\textbf{\textit{Analysis: }} Figure~\ref{fig:app:v-d4rl-curve} shows the performance of the four models on V-D4RL. We emphasize that these tasks are more challenging than their conventional D4RL counterparts. Neither DrQ+BC nor DrQ+CQL can solve them without further modifications. By integrating the frozen encoder learned from BPR, both of them obtain significant performance gains. Note that DrQ+CQL typically requires a longer training time to achieve similar performance to DrQ+BC. Nevertheless, the substantial improvement of the modified variant of DrQ+CQL suggests that BPR can be effective in improving sample efficiency, decrease computational cost, and reach better final performance.

\subsection{Performance comparison with representation objectives in V-D4RL Benchmark}

\begin{figure*}[htbp]
\centering

\includegraphics[width=1\textwidth]{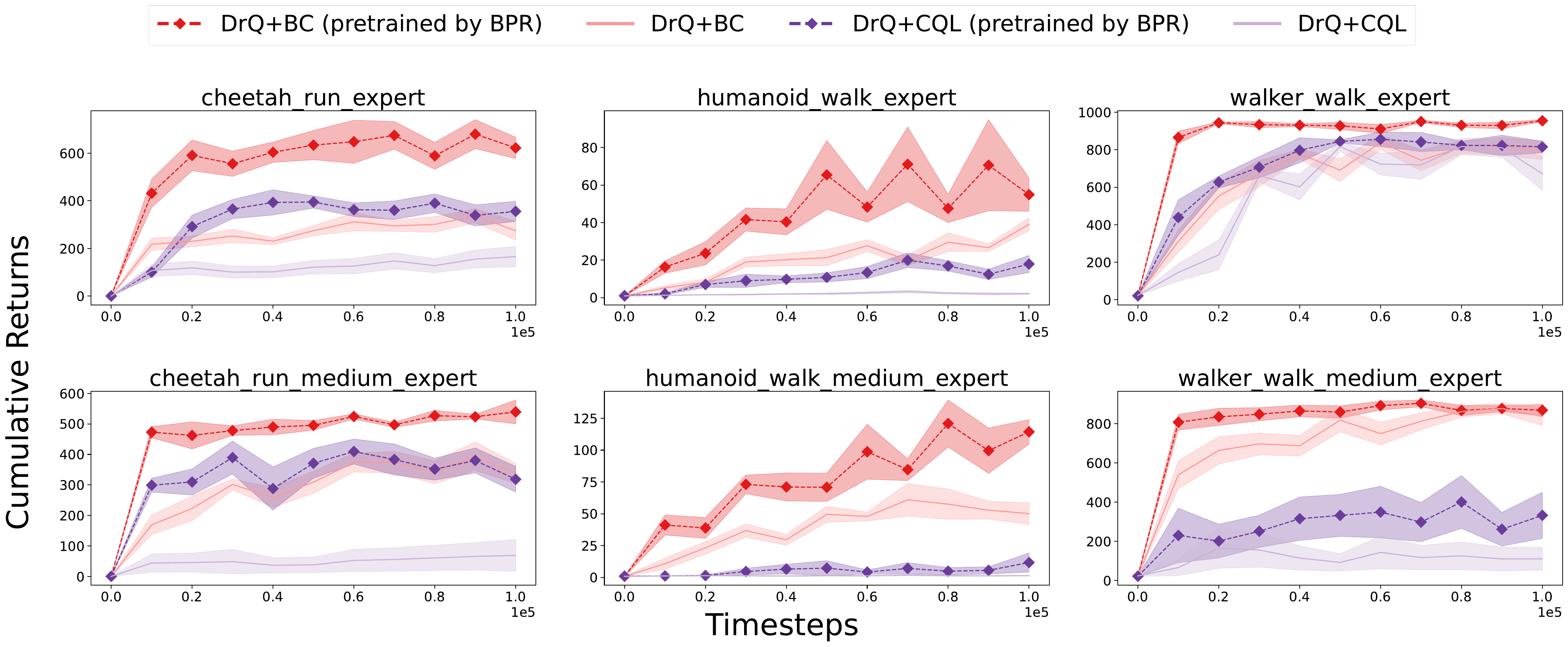}
\caption{Performance comparison on v-d4rl benchmark, the results are averaged over 10 evaluations and 6 seeds.}
\label{fig:app:v-d4rl-curve}
\end{figure*}

\paragraph{Experimental Setup} We evaluate our method with five other representation objectives that are considered as state-of-the-art methods on a benchmarking suite for Offline RL from visual observations of DMControl suite (DMC) tasks~\citep{DBLP:journals/corr/abs-2206-04779}. The baselines include:
\begin{itemize}
    \item DRIML~\citep{DBLP:conf/nips/MazoureCDBH20} and HOMER~\citep{DBLP:conf/icml/MisraHK020} (Time Contrastive methods) learn representations which can discriminate between adjacent observations in a rollout and pairs of random observations. 
    \item CURL~\citep{DBLP:conf/icml/LaskinSA20} (Augmentation Contrastive method) learns a representation that is invariant to a class of data augmentations while being different across random example pairs. 
    \item Inverse model~\citep{DBLP:conf/icml/PathakAED17} (One-Step Inverse Models) predict the action taken conditioned on the previous and resulting observations. 
    \item SGI~\citep{DBLP:conf/nips/SchwarzerRNACHB21}: A combination of three kinds of techniques (Self-Predictive Representations + Inverse modelling + goal-conditioned RL). As SGI was not tested on the Deepmind Control suite (continuous-action space) setting, we modified its code\footnote{\url{https://github.com/mila-iqia/SGI}} to allow it to be fairly compared to BPR. Specifically, for the inverse model, we follow the same architecture as the above Inverse model; for the self-predictive representation, we design a simple MLP network acting as an inverse model to capture the dynamic information; for the goal-conditioned parts, we utilize the same architecture of FiLM module in DRIML, apply DrQ+BC as the backbone, and define the potential-based reward of goal-conditioned RL to pretrain the state representations. 
\end{itemize}
In this experiment, all methods use the same encoder architecture, i.e., with four convolutional layers activated by ReLU, followed by a linear layer normalized by LayerNorm~ and activated by Tanh, where the final output feature dimension of the encoder is 256. Besides, all methods follow the same optimizer settings, image augmentation, pre-training data, and the number of pre-training epochs. We provide pseudocode for each method in Algorithm~\ref{alg:code_pretrain_driml}-\ref{alg:code_pretrain_sgi}.

\begin{algorithm}[t]
\caption{DRIML(HOMER) Pretraining Pseudocode, PyTorch-like}
\label{alg:code_pretrain_driml}
\definecolor{codeblue}{rgb}{0.25,0.5,0.5}
\definecolor{codekw}{rgb}{0.85, 0.18, 0.50}
\lstset{
  backgroundcolor=\color{white},
  basicstyle=\fontsize{7.5pt}{7.5pt}\ttfamily\selectfont,
  columns=fullflexible,
  breaklines=true,
  captionpos=b,
  commentstyle=\fontsize{7.5pt}{7.5pt}\color{codeblue},
  keywordstyle=\fontsize{7.5pt}{7.5pt}\color{codekw},
}
\begin{lstlisting}[language=python]
# encoder: CNN, encoder network
# driml_net: mlp, contains FiLM block and residual block
def pretrain(encoder, driml_net, replay_buffer, batch_size, temp=0.1):
    iteration = 0
    for iteration < 1e5:
        state, action, state_k_step, _, _ = replay_buffer.sample(batch_size) 
        # state_k_step is the k step future observation from the current state, where k=1 in HOMER and k=5 in DRIML
        state = encoder(state)
        state_k = encoder(state_k_step)
        u_t, u_tpk = driml_net(state, state_k, action)
        outer_prod = torch.einsum('ik,jk->ij', u_t, u_tpk)
        scores = log_softmax(outer_prod / temp, -1) # temp is the temperature
        mask = torch.eye(scores.shape[-1])
        scores = (mask * scores).sum(-1).mean()
        encoder_loss = -scores # loss for encoder
        encoder_loss.backward()
        update(encoder,driml_net)
        iteration += 1
\end{lstlisting}
\end{algorithm}

\begin{algorithm}[t]
\caption{CURL Pretraining Pseudocode, PyTorch-like}
\label{alg:code_pretrain_curl}
\definecolor{codeblue}{rgb}{0.25,0.5,0.5}
\definecolor{codekw}{rgb}{0.85, 0.18, 0.50}
\lstset{
  backgroundcolor=\color{white},
  basicstyle=\fontsize{7.5pt}{7.5pt}\ttfamily\selectfont,
  columns=fullflexible,
  breaklines=true,
  captionpos=b,
  commentstyle=\fontsize{7.5pt}{7.5pt}\color{codeblue},
  keywordstyle=\fontsize{7.5pt}{7.5pt}\color{codekw},
}
\begin{lstlisting}[language=python]
# encoder: CNN, encoder network
# W: matrix-wise parameters
def pretrain(encoder, W, replay_buffer, batch_size, temp=0.1):
    iteration = 0
    for iteration < 1e5:
        state, action, next_state, _, _ = replay_buffer.sample(batch_size) 
        obs = augment(state.float()) # augment
        pos = augment(torch.clone(state).float()) # augment
        z_a = encoder(obs)
        z_pos = encoder(pos, ema=True) # using EMA in encoder network
        
        logits = torch.matmul(z_a, torch.matmul(W_param, z_pos.T)
        logits = logits - torch.max(logits, 1)[0][:, None]
        labels = torch.arange(logits.shape[0])
        encoder_loss = nn.CrossEntropyLoss()(logits, labels)
        encoder_loss.backward()
        update(encoder,W_param)
        iteration += 1
\end{lstlisting}
\end{algorithm}

\begin{algorithm}[t]
\caption{Inverse model Pretraining Pseudocode, PyTorch-like}
\label{alg:code_pretrain_inv}
\definecolor{codeblue}{rgb}{0.25,0.5,0.5}
\definecolor{codekw}{rgb}{0.85, 0.18, 0.50}
\lstset{
  backgroundcolor=\color{white},
  basicstyle=\fontsize{7.5pt}{7.5pt}\ttfamily\selectfont,
  columns=fullflexible,
  breaklines=true,
  captionpos=b,
  commentstyle=\fontsize{7.5pt}{7.5pt}\color{codeblue},
  keywordstyle=\fontsize{7.5pt}{7.5pt}\color{codekw},
}
\begin{lstlisting}[language=python]
# encoder: CNN, encoder network
# backward_net: mlp, inverse model
def pretrain(encoder, backward_net, replay_buffer, batch_size:
    iteration = 0
    for iteration < 1e5:
        state, action, next_state, _, _ = replay_buffer.sample(batch_size) 
        state = encoder(state)
        next_state = encoder(next_state)
        state_cat = torch.cat((state, next_state), dim = -1)
        action_hat = backward_net(obs_cat)
        backward_error = torch.norm(action - action_hat, dim=-1, p=2,      keepdim=True)
        encoder_loss = backward_error.mean()
        encoder_loss.backward()
        update(encoder,W_param)
        iteration += 1
\end{lstlisting}
\end{algorithm}

\begin{algorithm}[t]
\caption{SGI Pretraining Pseudocode, PyTorch-like}
\label{alg:code_pretrain_sgi}
\definecolor{codeblue}{rgb}{0.25,0.5,0.5}
\definecolor{codekw}{rgb}{0.85, 0.18, 0.50}
\lstset{
  backgroundcolor=\color{white},
  basicstyle=\fontsize{7.5pt}{7.5pt}\ttfamily\selectfont,
  columns=fullflexible,
  breaklines=true,
  captionpos=b,
  commentstyle=\fontsize{7.5pt}{7.5pt}\color{codeblue},
  keywordstyle=\fontsize{7.5pt}{7.5pt}\color{codekw},
}
\begin{lstlisting}[language=python]
# encoder: CNN, encoder network
# predictor: mlp, predition head for forward model
# forward_net: mlp, forward model
# backward_net: mlp, inverse model
# goal_conditioner: mlp, FiLM block
# fake_actor: actor for goal-conditioned RL
# fake_critic: critic for goal-conditioned RL
def pretrain(encoder, predictor, forward_net, backward_net, goal_conditioner, 
            fake_actor, fake_critic, replay_buffer, batch_size):
    iteration = 0
    for iteration < 1e5:
        state, action, next_state, _, _ = replay_buffer.sample(batch_size) 
        inp_state = state
        inp_next_state = next_state
        
        state = encoder(state)
        next_state = target_encoder(next_state) # updated with EMA from encoder
        goal = target_encoder(goal) # updated with EMA from encoder
        
        sa_cat = torch.cat((state, action), dim = -1)
        next_state_hat = predictor(self.forward_net(sa_cat))
        state_cat = torch.cat((state, next_state), dim = -1)
        action_hat = backward_net(state_cat)
        
        forward_error = torch.norm(next_state - next_state_hat,dim=-1,p=2,keepdim=True)
        backward_error = torch.norm(action - action_hat,dim=-1,p=2,keepdim=True)
        z_g = goal_conditioner(state, goal)
        
        with torch.no_grad():
        	goal_reward = exp_distance(next_state, goal) - exp_distance(state, goal)
        	next_action = fake_actor.sample(next_state)
        	target_Q1, target_Q2 = fake_critic_target(next_state, next_action)
        	target_V = min(target_Q1, target_Q2)
        	target_Q = goal_reward + (discount * target_V)

        Q1, Q2 = fake_critic(z_g, action)
        fake_critic_loss = MSE(Q1, target_Q) + MSE(Q2, target_Q)
        encoder_loss = fake_critic_loss + forward_error.mean() + backward_error.mean()
        
        encoder_loss.backward()
        update(encoder,predictor, forward_net, backward_net, goal_conditioner,fake_critic)
        
        state = encoder(inp_state)
        pred_action = fake_actor.sample(state)
        Q1, Q2 = fake_critic(state, pred_action)
        Q = torch.min(Q1, Q2)
        
        actor_policy_improvement_loss = -Q.mean()
        actor_bc_loss = F.mse_loss(pred_action, action)
        fake_actor_loss = actor_policy_improvement_loss * lambda + actor_bc_loss
        
        fake_actor_loss.backward()
        update(fake_actor)
        iteration += 1
\end{lstlisting}
\end{algorithm}

\subsection{Performance on the datasets collected by the random policy}
\label{app:rand-behavioral}
It should be noted that in this work, we assume the inductive bias of the behavior policy is efficient. In particular, a bad behavioral policy may lead BPR to encode an undesirable bias and therefore deteriorate the performance of the algorithm it is paired with. To verify this, we run BPR on data collected using a random policy and report the results in Figures~~\ref{fig:app_random_d4rl} and ~\ref{fig:app_random_vd4rl}. We see that the gains provided by BPR on datasets where the behavior policy is informative vanish.

\begin{figure*}[htbp]
\centering
\includegraphics[width=0.7\textwidth]{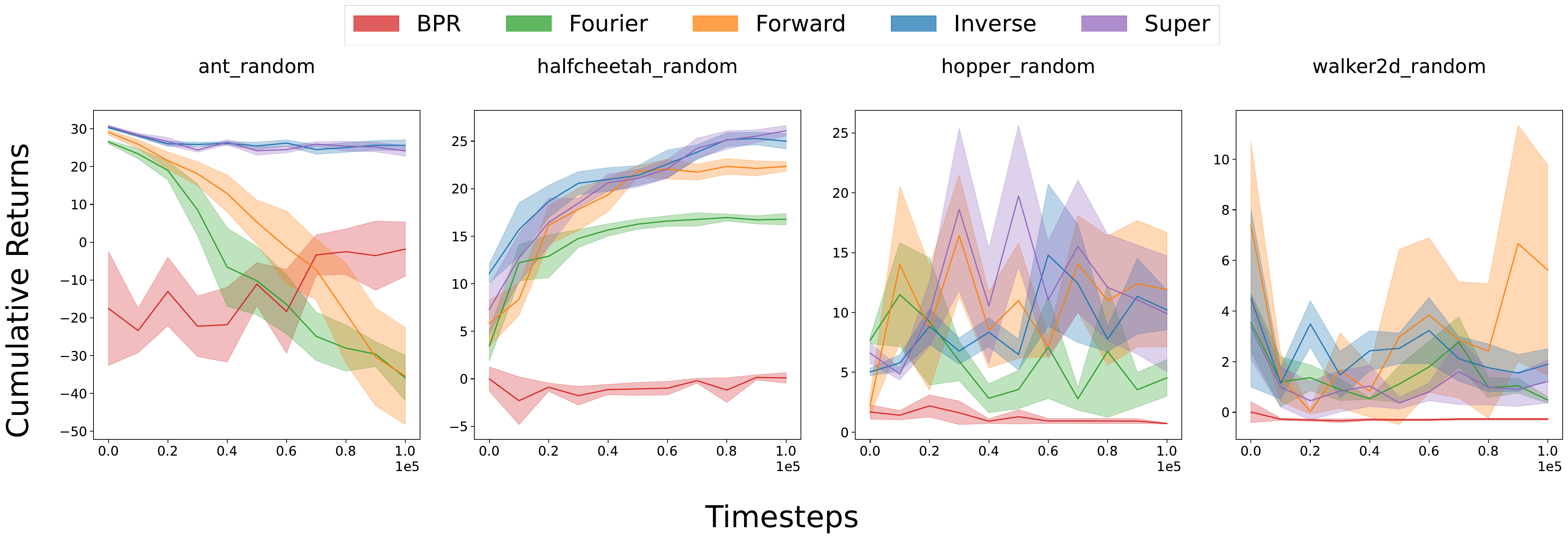}
\caption{Performance comparison on D4RL mujoco dataset where the data is collected by a random policy.}
\label{fig:app_random_d4rl}
\end{figure*}

\begin{figure*}[htbp!]
\centering
\centering
\includegraphics[width=0.5\textwidth]{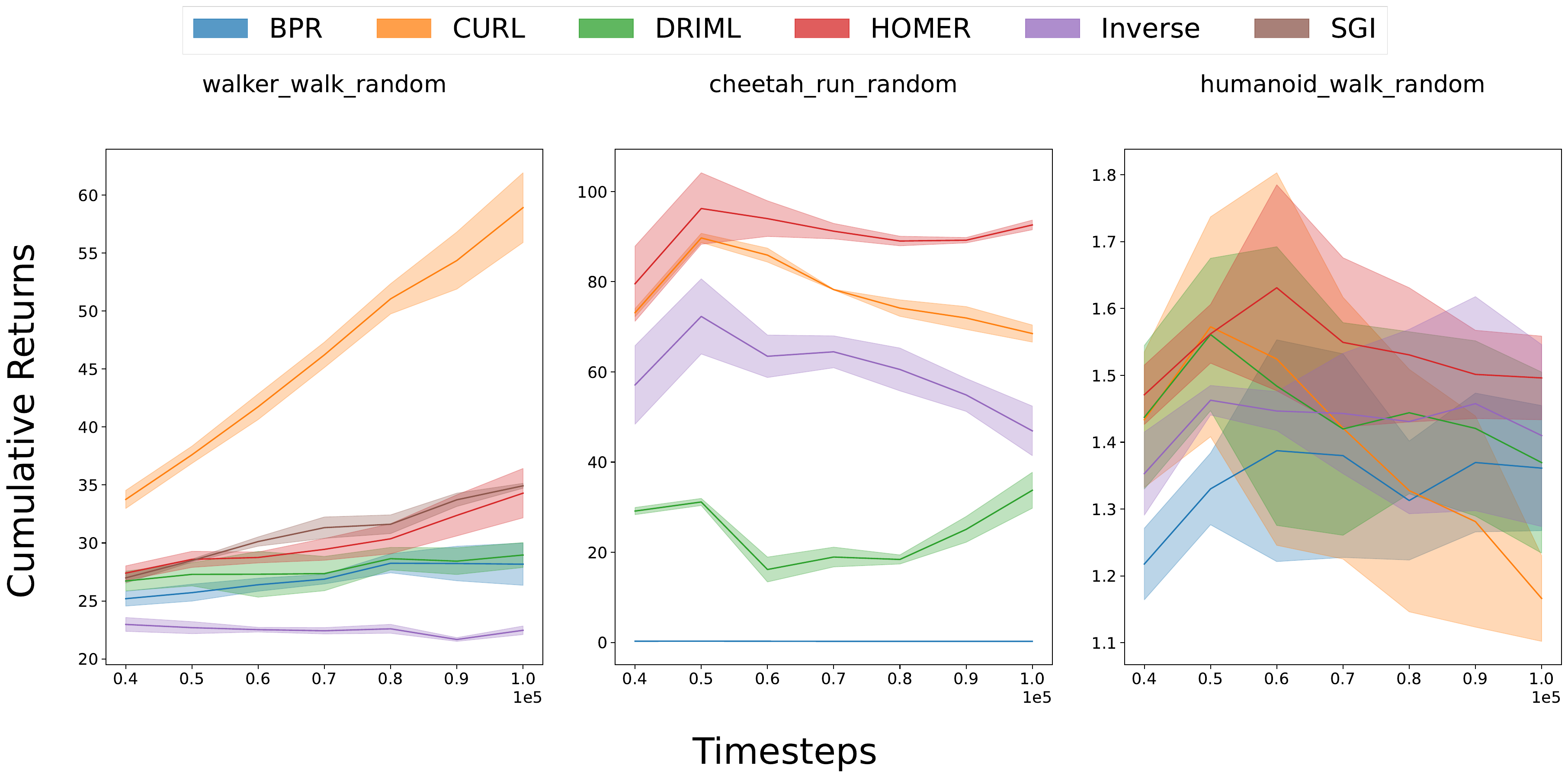}
\caption{Performance comparison on Visual-D4RL mujoco dataset where the data is collected by a random policy.}
\label{fig:app_random_vd4rl}
\end{figure*}

\subsection{Representation as the input of Value network}
\label{app:sec:counterexample_empirical}
\paragraph{Experimental Setup} As we stated in Section~\ref{sec:theory}, Assumption~\ref{assum:b} forbids in theory, the use of the BPR representation to estimate the value. In this experiment, the purpose is to show that the utilization of taking BPR representation as the input of the value network is acceptable empirically. We still pretrain the encoder and then fix it, while developing two variants of the architecture afterward: i) the learned encoder is only used for training the policy network, while the value network takes the raw state as its input; ii) the learned encoder is used as the input of both the policy network and the value network.
\paragraph{Analysis} As shown in Figure~\ref{fig:phi_s_vs_s}, the performance of both variants are almost equivalent, which means that $\phi(s)$ is good enough to learn the state-action value empirically. Notably, when taking $\phi(s)$ as the input, the agent will have a lower variance along the training procedure in 2 of 6 tasks, which is the empirical evidence of that $\phi(s)$ as the input of value network is an acceptable choice.

\begin{figure*}[htbp]
\centering
\centering

\includegraphics[width=1\textwidth]{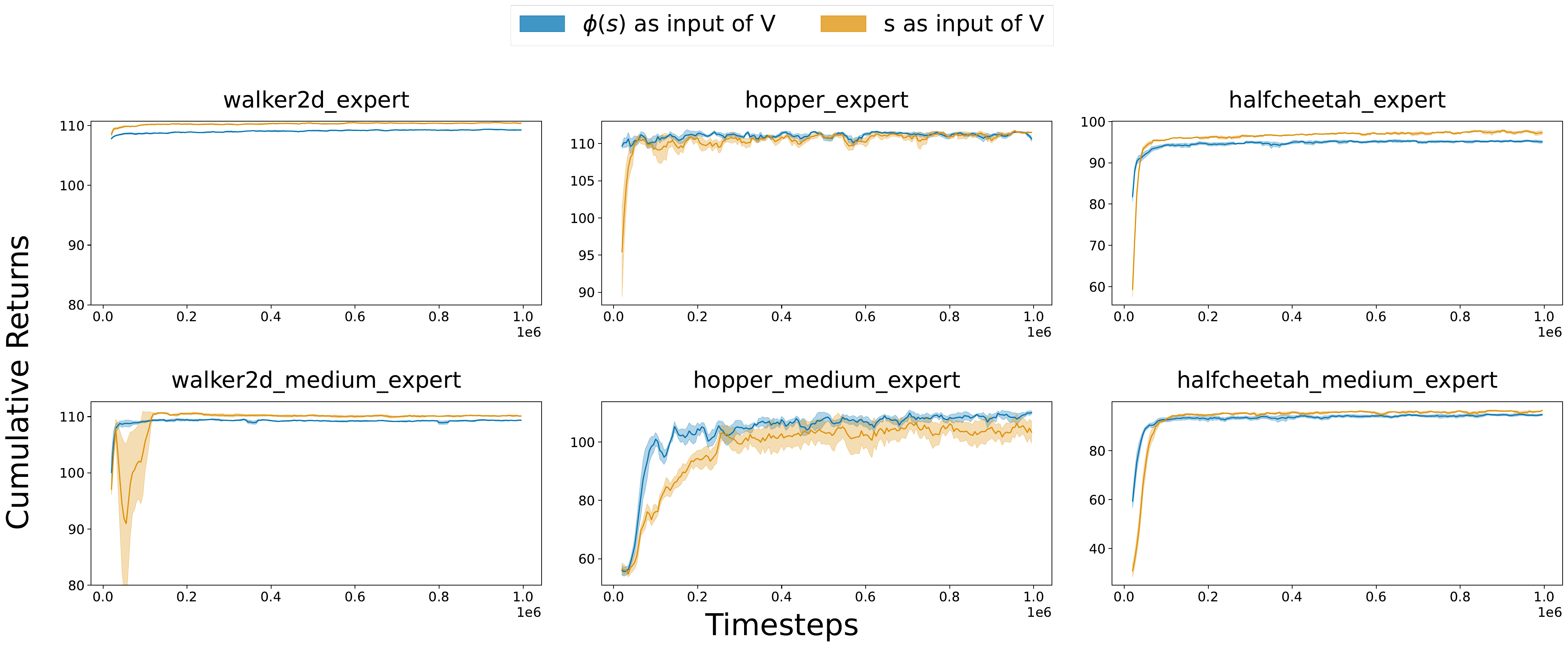}
\caption{Performance comparison on D4RL mujoco dataset, $\phi(s)$ v.s. $s$ as input of value network.}
\label{fig:phi_s_vs_s}
\end{figure*}
\subsection{Visual D4RL Benchmark with Distractors}
\label{app:distractors}
\begin{figure*}[htbp]
\centering
\centering
\includegraphics[width=0.9\textwidth]{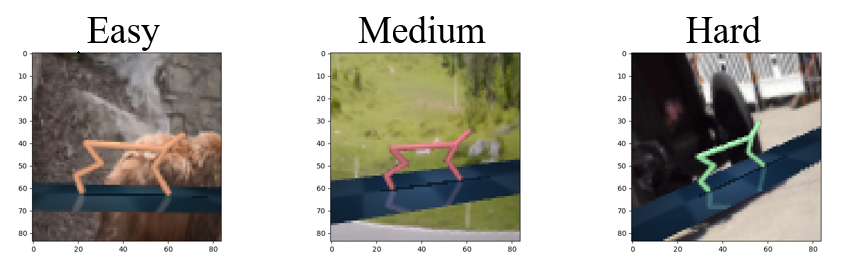}
\caption{Pixel observations on visual d4rl benchmark with distractors.}
\label{fig:dis_envs}
\end{figure*}
We use three levels of distractions (i.e., easy, medium, hard) to evaluate the performance of the model (See Figure~\ref{fig:dis_envs}. Each distraction represents a shift in the data distribution, where we add task-irrelevant visual factors (i.e., backgrounds, agent colors, and camera positions) that vary with environments to disturb the model training.




\end{document}